\newtheorem{theorem}{Theorem}[section]
\newtheorem*{theorem*}{Main Result}
\newtheorem{definition}{Definition}
\newtheorem{lemma}{Lemma}[section]
\newtheorem{corollary}{Corollary}[section]
\newtheorem*{remark}{Remark}
\newtheorem{example}{Example}
\DeclareMathOperator{\argmax}{argmax}
\newcommand{\bvec}[1]{\boldsymbol{#1}}
\newcommand{\GE}{\text{GE}}
\newcommand{\PE}{\text{PE}}
\newcommand{\pre}{\text{ pre}}
\newcommand{\ReLU}{\text{ ReLU}}
\newcommand{\relu}{\text{relu}}
\newcommand{\diag}{\text{ diag}}
\newcommand{\supp}{\text{supp}}
\renewcommand{\mathbf}{\boldsymbol}
\begin{document}

\title{Orthogonal Deep Neural Networks}

\author{Kui~Jia*,
        Shuai~Li*,
        Yuxin~Wen,
        Tongliang~Liu,
        and~Dacheng~Tao
\IEEEcompsocitemizethanks{
\IEEEcompsocthanksitem K. Jia, S. Li, and Y. Wen are with the School of Electronic and Information Engineering, South China University of Technology, Guangzhou, China. \protect\\
E-mails: kuijia@scut.edu.cn, lishuai918@gmail.com, wen.yuxin@mail.scut.edu.cn
\IEEEcompsocthanksitem T. Liu and D. Tao are with the Faculty of Engineering and Information Technologies, The University of Sydney, Darlington, NSW, Australia. \protect\\
E-mails: tliang.liu@gmail.com, dacheng.tao@sydney.edu.au
}
\thanks{* indicates equal contribution.}
}


\IEEEtitleabstractindextext{%
\begin{abstract}
In this paper, we introduce the algorithms of Orthogonal Deep Neural Networks (OrthDNNs) to connect with recent interest of spectrally regularized deep learning methods. OrthDNNs are theoretically motivated by generalization analysis of modern DNNs, with the aim to find solution properties of network weights that guarantee better generalization. To this end, we first prove that DNNs are of local isometry on data distributions of practical interest; by using a new covering of the sample space and introducing the local isometry property of DNNs into generalization analysis, we establish a new generalization error bound that is both scale- and range-sensitive to singular value spectrum of each of networks' weight matrices. We prove that the optimal bound w.r.t. the degree of isometry is attained when each weight matrix has a spectrum of equal singular values, among which orthogonal weight matrix or a non-square one with orthonormal rows or columns is the most straightforward choice, suggesting the algorithms of OrthDNNs. We present both algorithms of strict and approximate OrthDNNs, and for the later ones we propose a simple yet effective algorithm called Singular Value Bounding (SVB), which performs as well as strict OrthDNNs, but at a much lower computational cost. We also propose Bounded Batch Normalization (BBN) to make compatible use of batch normalization with OrthDNNs. We conduct extensive comparative studies by using modern architectures on benchmark image classification. Experiments show the efficacy of OrthDNNs.
\end{abstract}

\begin{IEEEkeywords}
Deep neural networks, generalization error, robustness, spectral regularization, image classification
\end{IEEEkeywords}}

\maketitle


\IEEEdisplaynontitleabstractindextext

%
\IEEEpeerreviewmaketitle

\IEEEraisesectionheading{\section{Introduction}
\label{SecIntro}}

Deep learning or deep neural networks (DNNs) have been achieving great success on many machine learning tasks, with image classification \cite{ILSVRC15} as one of the prominent examples. Key design that supports success of deep learning can date at least back to Neocognitron \cite{Neocognitron} and Convolutional neural networks (CNNs) \cite{LeCun98}, which employ hierarchial, compositional design to facilitate learning target functions that approximately capture statistical properties of natural signals. Modern DNNs are usually over-parameterized and have very high model capacities, yet practically meaningful solutions can be obtained via simple back-propagation training of stochastic gradient descent (SGD) \cite{EfficientBackprop}, where regularization methods such as early stopping, weight decay, and data augmentation are commonly used to alleviate the issue of overfitting.

%
%

Over the years, new technical innovations have been introduced to improve DNNs in terms of architectural design \cite{ResNet,DenseNet}, optimization \cite{XavierInit,TrustRegion4SaddlePoint,Adam}, and also regularization \cite{Dropout,BatchNorm}, which altogether make efficient and effective training of extremely over-parameterized models possible. While many of these innovations are empirically proposed, some of them are justified by subsequent theoretical studies that explain their practical effectiveness. For example, dropout training \cite{Dropout} is explained as an approximate regularization of adaptive weight decay in \cite{UnderstandDropout,DropoutAdapRegu}. Theoretically characterizing global optimality conditions of DNNs are also presented in \cite{DeepLearningNoPoorLocalMinima,GlobalOptimConditions4DNNs}.



The above optimization and regularization methods aim to explain and address the generic difficulties of training DNNs, and to improve efficient use of network parameters; they do not have designs on properties of solutions to which network training should converge. In contrast, there exist other deep learning methods that have favored solution properties of network parameters, and expect such properties to guarantee \emph{good generalization at inference time}. In this work, we specially focus on DNN methods that impose explicit regularization on weight matrices of network layers \cite{EDJM,RobustLargeMarginDNNs}. For example, Sokolic \emph{et al.} \cite{RobustLargeMarginDNNs} propose by theoretical analysis a soft regularizer that penalizes Frobenius norm of the Jacobian. More recently, methods that regularize the whole spectrum of singular values and its range for each of networks' weight matrices are also proposed \cite{ParsevalNet,SVB,EfficientOrthDNNNips2018,BeyondGoodInit,UnitaryRNN,FullCapacityUnitaryRNN}. They achieve clearly improved performance over those without imposing such a regularization. However, many of these methods are empirically motivated, with no theoretical justification on its effect on generalization. We aim to study this theoretical issue in this work.

Motivated by geometric intuitions from isometric mappings \cite{DiscrimRobustTransform}, we introduce a term of \emph{local isometry} into the framework of generalization analysis via algorithmic robustness \cite{RobustnessAndGeneralization}. We use an intuitive and also formal definition of \emph{instance-wise variation space} to characterize data distributions of practical interest, and prove that DNNs are of local isometry on such data distributions. More specifically, we prove that for a DNN trained on such a data distribution, a covering based on a linear partition (induced by the DNN) of the input space can be found such that DNN is \emph{locally} linear in each covering ball, where we give bound on the diameters of covering balls in terms of spectral norms of the DNN's weight matrices. Based on a further proof that for a mapping induced by a linear DNN, degree of isometry is fully controlled by singular value spectrum of each of its weight matrices, we establish our generalization error (GE) bound for (nonlinear) DNNs, and show that it is \emph{both scale- and range-sensitive to singular value spectrum of each of their weight matrices}. An illustration of our proofs is given in \cref{FigDemo}. Derivation of our bound is based on a new covering of the sample space, as illustrated in \cref{FigCoveringSchemeDemo}, which enables explicit characterization of GEs caused by both the \emph{distance expansion} and \emph{distance contraction} of locally isometric mappings.

To attain an optimal GE bound w.r.t. the degree of isometry, we prove that the optimum is achieved when each weight matrix of a DNN has a spectrum of equal singular values, among which orthogonal weight matrix or a non-square one with orthonormal rows or columns is the most straightforward choice, suggesting the algorithms of \emph{Orthogonal Deep Neural Networks (OrthDNNs)}. Training to obtain a \emph{strict OrthDNN} amounts to optimizing the weight matrices over their respective Stiefel manifolds, which, however, is very costly for large-sized DNNs. To achieve efficient learning, we propose a simple yet effective algorithm of \emph{approximate OrthDNNs} called \emph{Singular Value Bounding (SVB)}. SVB periodically bounds, in the SGD based training iterations, all singular values of each weight matrix in a narrow band around the value of $1$, thus achieving near orthogonality (row- or column-wise orthonormality) of weight matrices. In this work, we also discuss alternative schemes of soft regularization \cite{ParsevalNet,BeyondGoodInit,SRIP} to achieve approximate OrthDNNs, and compare with our proposed SVB. Batch Normalization (BN) \cite{BatchNorm} is commonly used in modern DNNs, yet it has a potential risk of ill-conditioned layer transform, making it incompatible with OrthDNNs. We propose \emph{Degenerate Batch Normalization (DBN)} and \emph{Bounded Batch Normalization (BBN)} to remove such a potential risk, and to enable its use with strict and approximate OrthDNNs respectively.

To investigate the efficacy of OrthDNNs, we conduct extensive experiments of benchmark image classification \cite{Cifar,ILSVRC15} on modern architectures \cite{VGGNet,PreActResNet,WideResNet,DenseNet,ResNeXt}. These experiments show that OrthDNNs consistently improve generalization by providing regularization to training of these architectures. Interestingly, approximate OrthDNNs perform as well as strict ones, but at a much lower computational cost. For approximate OrthDNNs, we also compare hard regularization via our proposed SVB and BBN with the alternatives of soft regularization; our results are better than or comparable to those of these alternatives on modern architectures. In some of these studies, we investigate behaviors of our method under learning regimes from small to large sizes of training samples; results confirm the empirical strength of our method, especially for learning problems of smaller sample sizes. We also investigate robustness of our method against corruptions that are commonly encountered in natural images; our results demonstrate better robustness against such corruptions, and the robustness stands gracefully with increase of corruption severity levels.

\subsection{Relations with existing works}

\subsubsection{Generalization analysis of DNNs}

Classical theories of DNNs show that they are universal approximators \cite{Hornik1989,Barron1993}. However, recent results from Zhang \emph{et al.} \cite{ZhangICLRBestPaper} show an apparent puzzle that over-parameterized DNNs are able to shatter randomly labeled training data, suggesting worst-case generalization since test performance can only be at a chance level, while at the same time they perform well on practical learning tasks (e.g., ImageNet classification); the puzzle suggests that traditional analysis of data-independent generalization does not readily apply. They further conjecture \cite{PoggioDLTheoryIIb} that over-parameterized DNNs, when trained via SGD, tend to find local solutions that fall in, with high probability, flat regions in the high-dimensional solution space, which is even obvious when learning tasks are on natural signals; flat-region solutions imply robustness in the parameter space of DNNs, which may further implies robustness in the input data space. Similar argument of flat-region solutions is also presented in \cite{FlatMinima}, although Dinh \emph{et al.} \cite{SharpMinimaCanGeneralize} argue that these flat minima can be equivalently converted as sharp minima without affecting network prediction. Generalization of DNNs is also explained by stochastic optimization. In \cite{StabilitySGD}, the notion of uniform stability \cite{BousquetStability} is extended to characterize the randomness of SGD, and a generalization bound in expectation is established for learning with SGD. The distribution-free stability bound of \cite{StabilitySGD} is improved in \cite{DataDependentStabilitySGD} via the notion of on-average stability, revealing data-dependent behavior of SGD. To understand practical generalization of DNNs, Kawaguchi \emph{et al.} \cite{GeneralizationInDeepLearning} argue that independent of the hypothesis set and algorithms used, the learned model itself, possibly selected via a validation set, is the most important factor that accounts for good generalization; a generalization bound w.r.t. validation error is also presented in \cite{GeneralizationInDeepLearning}.

%
%

To further characterize generalization of DNNs with their weight matrices, Sokolic \emph{et al.} \cite{RobustLargeMarginDNNs} study DNNs as robust large-margin classifiers via the algorithmic robustness framework \cite{RobustnessAndGeneralization}. They introduce a notion of average Jacobian, and use spectral norm of the Jacobian matrix to locally bound the distance expansion from the input to the output space of a DNN; spectral norm of the Jacobian is further relaxed as the product of spectral norms of the network's weight matrices, which is used to establish the robustness based generalization bound. Bartlett \emph{et al.} \cite{SpectrallyNormalizedMarginBound} use a scale-sensitive measure of complexity to establish a generalization bound. They derive a margin-normalized spectral complexity, i.e., the product of spectral norms of weight matrices divided by the margin, via covering number approximation of Rademacher complexity; they further show empirically that such a bound is task-dependent, suggesting that SGD training learns parameters of a DNN whose complexity scales with the difficulty of the learning task.

While both of our bound and that of \cite{RobustLargeMarginDNNs} are developed under the framework of algorithmic robustness \cite{RobustnessAndGeneralization}, our bound is controlled by the whole spectrum of singular values, rather than spectral norm (i.e., the largest singular value) of each of the network's weight matrices, by introducing a term of local isometry into the framework. This also means that in contrast to \cite{SpectrallyNormalizedMarginBound}, our bound is both scale- and range-sensitive to singular values of weight matrices. The fact that our bound is scale-sensitive in the sense of \cite{SpectrallyNormalizedMarginBound} implies that for difficult learning tasks, e.g., randomly labeled CIFAR10 \cite{ZhangICLRBestPaper}, spectral norms of weight matrices would go extremely large, causing the diameters of covering balls go extremely small and correspondingly the second term of our bound (cf. \cref{thm:ge_contraction}) that characterizes distribution mismatch between training and test samples dominates, and that the bound becomes vacuous. In contrast, for learning tasks of practical interest, e.g., standard CIFAR10 \cite{Cifar}, the spectra of singular values of weight matrices are potentially in a benign range, and the bound is of practical use to inspire design of improved learning algorithms.


\subsubsection{Compositional computations and isometries of DNNs}

Mont\'{u}far \emph{et al.} \cite{NumOfLinearRegionDNN} characterize complexity of functions computable by DNNs and establish a lower bound on the maximal number of linear regions into which a DNN (with ReLU activation) can partition the input space, where the bound is derived by compositional replication of layer-wise space partitioning and grows exponentially with depth of the DNN. Our derivation of the analytic form of region-wise linear mapping (cf. \cref{lm:2}) borrows ideas from \cite{NumOfLinearRegionDNN}. Similar compositional derivations for the number of computational paths from the network input to a hidden unit are also presented in \cite{DeepLearningNoPoorLocalMinima,GeneralizationInDeepLearning}.

Geometric intuition of isometric mappings has been introduced to improve robustness of deep feature transformation \cite{DiscrimRobustTransform}, where DNNs are studied as a form of transformation functions. However, their development of robustness bound only uses explicitly the distance expansion constraint of isometric mappings; moreover, their studies are in the context of metric learning and for DNNs, they stay on a general function form, with no indications on how layer-wise weight matrices affect generalization.

\subsubsection{Optimization benefits of isometry/orthogonality}

Previous works \cite{ExactSolution,BeyondGoodInit,SVB} show that orthogonality helps the optimization of DNNs by preventing explosion or vanishing of back-propagated gradients. More specifically, a property of dynamic isometry is studied in \cite{ExactSolution} to understand learning dynamics of deep linear networks. Pennington \emph{et al.} \cite{ResurrectingSigmoid} extend such studies to DNNs by employing powerful tools from free probability theory; they show that with orthogonal weight initialization, sigmoid
activation functions can keep the maximum singular value to be $1$ as layers go deeper, and isometry of DNNs can be preserved for a large amount of time during training. However, the analysis on optimization benefits does not explain the gain in test accuracy, i.e., the generalization.

\subsubsection{Regularization on weight matrices}

Wang \emph{et al.} \cite{EDJM} propose Extended Data Jacobian Matrix (EDJM) as a network analyzing tool, and study how the spectrum of EDJM affects performance of different networks of varying depths, architectures, and training methods. Based on these observations, they propose a spectral soft regularizer that encourages major singular values of EDJM to be closer to the largest one (practically implemented on weight matrix of each layer). As discussed above, a related notion of average Jacobian is used in \cite{RobustLargeMarginDNNs} to motivate a soft regularizer that penalizes spectral norms of weight matrices.

There exist other recent methods \cite{SVB,ParsevalNet,EfficientOrthDNNNips2018,BeyondGoodInit} that improve empirical performance of DNNs by regularizing the whole spectrum of singular values for each of networks' weight matrices. This is implemented in \cite{ParsevalNet,BeyondGoodInit} as soft regularizers that encourage the product between each weight matrix and its transpose to be close to an identity one. Different from \cite{ParsevalNet,BeyondGoodInit}, we propose a hard regularization method termed Singular Value Bounding (SVB), which periodically bounds in the training process all singular values of each weight matrix in a narrow band around the value of $1$, so that orthonormality of rows or columns of weight matrices can be approximately achieved.


\subsection{Contributions}

There exists a growing recent interest on using spectral regularization to improve training of DNNs. These methods impose explicit regularization on weight matrices of network layers by penalizing either their spectral norms \cite{RobustLargeMarginDNNs,EDJM} or the whole spectrums of their singular values \cite{SVB,ParsevalNet,EfficientOrthDNNNips2018,BeyondGoodInit}. The SVB algorithm proposed in our preliminary work \cite{SVB} is among the later approach. Most of these methods are empirically motivated with no theoretical guarantees. In the present paper, we focus on theoretical analysis of these methods from the perspective of generalization analysis, and prove a novel GE bound for data distributions of practical interest. We also intensively compare empirical performance of these methods, and present their empirical strengths under various learning scenarios. We summarize our technical contributions as follows.

\begin{itemize}
\item We present in this paper a new generalization error bound for DNNs. We first prove that DNNs are of \emph{local isometry} on data distributions of practical interest, where the degree of isometry is fully controlled by singular value spectrum of each of their weight matrices. By using a new covering of the sample space and introducing the local isometry property of DNNs into an algorithmic robustness framework, we establish our GE bound and show that it is \emph{both scale- and range-sensitive to singular value spectrum of each of networks' weight matrices}.
\item We prove that the optimal bound w.r.t. the degree of isometry is attained when each weight matrix of a DNN has a spectrum of equal singular values, among which orthogonal weight matrix or a non-square one with orthonormal rows or columns is the most straightforward choice, suggesting the algorithms of \emph{Orthogonal Deep Neural Networks (OrthDNNs)}. In this paper, we also present the algorithmic details of OrthDNNs.
\item To address the heavy computation of \emph{strict OrthDNNs}, we propose a novel algorithm called \emph{Singular Value Bounding (SVB)}, which achieves \emph{approximate OrthDNNs} via a simple scheme of hard regularization. We discuss alternative schemes of soft regularization, and compare with our proposed SVB. Batch normalization has a potential risk of ill-conditioned layer transform, making it incompatible with OrthDNNs. We propose \emph{Degenerate Batch Normalization (DBN)} and \emph{Bounded Batch Normalization (BBN)} to remove such a potential risk, and to enable its use with strict and approximate OrthDNNs.
\end{itemize}

%
%

\section{Problem Statement}
\label{sec:problem-formulation}

We start by describing the formalism of classification problems that jointly learn a representation and a classifier, e.g., via Deep Neural Networks (DNNs).


\subsection{The classification-representation-learning problem and its generalization error}
\label{sec:repr-learn-its}

Assume a sample space $\mathcal{Z} = \mathcal{X} \times \mathcal{Y}$, where $\mathcal{X}$ is the instance space and $\mathcal{Y}$ is the label space. We restrict ourselves to classification problems in this paper, and have $\bvec{x} \in \mathcal{X}$ as vectors in $\mathbb{R}^{n}$ and $y \in \mathcal{Y}$ as a positive integer less than $|\mathcal{Y}| \in \mathbb{N}$. We use $S_m = \{ s_i = (\bvec{x}_i, y_i) \}_{i=1}^{m}$ to denote the training set of size $m$ whose examples are drawn independent and identically distributed (i.i.d.) according to an unknown distribution $P$. We also denote $S_m^{(x)} = \{\bvec{x}_i\}_{i=1}^{m}$. Given a loss function $\mathcal{L}$, the goal of learning is to identify a function $f_{S_m}: \mathcal{X} \mapsto \mathcal{Y}$ in a hypothesis space (a class $\mathcal{F}$ of functions) that minimizes the expected risk
\[
  R(f) = \mathbb{E}_{z \sim P}\left[\mathcal{L}\left(f(\bvec{x}), y\right)\right] ,
\]
where $z = (\bvec{x}, y) \in \mathcal{Z}$ is sampled i.i.d. according to $P$. Since $P$ is unknown, the observable quantity serving as a proxy
to the expected risk $R(f)$ is the empirical risk
\[
  R_m(f) = \frac{1}{m}\sum\limits_{i=1}^{m}\mathcal{L}\left(f(\bvec{x}_i), y_i\right) .
\]
One of the primary goals in statistical learning theory is to characterize the discrepancy between $R(f_{S_m})$ and $R_m(f_{S_m})$, which is termed as {\it generalization error} --- it is sometimes termed as generalization gap in the literature
\[
  \text{GE}(f_{S_m}) = |R(f_{S_m}) - R_m(f_{S_m})| .
\]

In this paper, we are interested in using DNNs to solve classification problems. It amounts to learning a map $T$, which extracts feature characteristic to a classification task, and minimizing $R_m$ simultaneously. We denote classification with this approach as a {\it Classification-Representation-Learning} (CRL) problem. We single out the map $T$ because most of the theoretical analysis in this paper resolves around it. Rewriting the two risks by incorporating a map $T$ (we write $\bvec{T}$ when it is instantiated by a DNN), we have
\begin{equation}
  \label{true_risk}
  R(f,T) = \mathbb{E}_{z \sim P}\left[\mathcal{L}\left(f({T}\bvec{x}), y\right)\right] ,
\end{equation}
\begin{equation}
  \label{emp_risk}
  R_m(f,T) = \frac{1}{m}\sum\limits_{i=1}^{m}\mathcal{L}\left(f(T\bvec{x}_i), y_i\right) .
\end{equation}

\subsection{Generalization analysis for robust algorithms with isometric mapping}
\label{sec:framework}

The upper bounds of GE are generally established by leveraging on certain measures related to the capacity of hypothesis space $\mathcal{F}$, such as Rademacher complexity and VC-dimension \cite{FoundationMLBook}. These complexity measures capture global properties of $\mathcal{F}$; however, GE bounds based on them ignore the specifically used learning algorithms. To establish a finer bound, one may resort to algorithm-dependent analysis \cite{RobustnessAndGeneralization,liu2017algorithmic}. Our analysis of GE bound in this work is based on the algorithmic robustness framework \cite{RobustnessAndGeneralization} that has the advantage of conveying information of local geometry. We begin with the definition of robustness used in \cite{RobustnessAndGeneralization}.

\begin{definition}[$(K, \epsilon(\cdot))$-robustness]
  An algorithm is $(K, \epsilon(\cdot))$-robust for $K \in \mathbb{N}$ and $\epsilon(\cdot):
  \mathcal{Z}^{m} \mapsto \mathbb{R}$, if $\mathcal{Z}$ can be partitioned into $K$
  disjoint sets, denoted by $\mathcal{C} = \{ C_k \}_{k=1}^{K}$, such that the following
  holds for all $s_i = (\bvec{x}_i, y_i) \in S_m, z=(\bvec{x}, y) \in \mathcal{Z}, C_k \in \mathcal{C}$:
  \begin{align*}
    \forall s_i = (\bvec{x}_i, y_i) \in C_k, \forall z=(\bvec{x}, y) \in C_k \\
    \implies |\mathcal{L}(f(\bvec{x}_i), y_i) - \mathcal{L}(f(\bvec{x}), y)| \leq \epsilon(S_m).
  \end{align*}
\end{definition}
The gist of the definition is to constrain the variation of loss values on test examples w.r.t. those of training ones through local property of the algorithmically learned function. Intuitively, if $s \in S_m$ and $z \in
\mathcal{Z}$ are ``close'' (e.g., in the same partition $C_k$), their loss should also be close, due to the intrinsic constraint imposed by $f$.

For any algorithm that is robust, \cite{RobustnessAndGeneralization} proves
\begin{theorem}[\cite{RobustnessAndGeneralization}]
  \label{thm:1}
  If a learning algorithm is $(K, \epsilon(\cdot))$-robust and $\mathcal{L}$ is bounded,
  a.k.a. $\mathcal{L}(f(\bvec{x}), y) \leq M$ $\forall z \in \mathcal{Z}$, for any $\nu > 0$, with probability
  at least $1 - \nu$ we have
  \begin{equation}
    \label{abs_robust_bound}
    \GE(f_{S_m}) \leq \epsilon(S_m) + M\sqrt{\frac{2K\log(2) + 2 \log(1/\nu)}{m}} .
  \end{equation}
\end{theorem}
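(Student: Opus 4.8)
The plan is to follow the standard decomposition argument for robustness-based generalization, splitting the generalization error into a piece controlled directly by the robustness constant $\epsilon(S_m)$ and a piece controlled by how the empirical sample distributes across the $K$ partition cells. First I would fix the partition $\mathcal{C} = \{C_k\}_{k=1}^{K}$ guaranteed by $(K,\epsilon(\cdot))$-robustness, and introduce the cell counts $N_k = |\{i : s_i \in C_k\}|$, so that $\sum_{k=1}^{K} N_k = m$. Writing both risks cell-by-cell and inserting the intermediate quantity $\sum_{k=1}^{K}\frac{N_k}{m}\mathbb{E}_{z\sim P}[\mathcal{L}(f(\bvec{x}),y)\mid z\in C_k]$, the triangle inequality produces two terms to bound separately.

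For the second term, $\bigl|\sum_{k}\frac{N_k}{m}\mathbb{E}[\mathcal{L}\mid C_k] - R_m(f)\bigr|$, I would invoke robustness directly: for every training point $s_i\in C_k$ and every $z\in C_k$ the loss values differ by at most $\epsilon(S_m)$, so the empirical cell-mean and the conditional expectation within each cell differ by at most $\epsilon(S_m)$; summing against the weights $N_k/m$, which sum to one, bounds this term by $\epsilon(S_m)$. For the first term I would rewrite $R(f) - \sum_{k}\frac{N_k}{m}\mathbb{E}[\mathcal{L}\mid C_k] = \sum_{k}\bigl(P(C_k) - \tfrac{N_k}{m}\bigr)\mathbb{E}[\mathcal{L}\mid C_k]$ and use boundedness $\mathcal{L}\leq M$ to reduce it to controlling the total-variation-type deviation $M\sum_{k}\bigl|P(C_k) - \tfrac{N_k}{m}\bigr|$.

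The vector $(N_1,\dots,N_K)$ is multinomial with trial count $m$ and cell probabilities $(P(C_1),\dots,P(C_K))$, so this $\ell_1$ deviation is exactly the object governed by a multinomial concentration inequality (Bretagnolle--Huber--Carol), namely $\Pr\bigl(\sum_{k}|\tfrac{N_k}{m} - P(C_k)|\geq\lambda\bigr)\leq 2^{K}\exp(-m\lambda^2/2)$. Setting the right-hand side equal to $\nu$ and solving yields $\lambda = \sqrt{(2K\log 2 + 2\log(1/\nu))/m}$. Combining the two bounds then gives, with probability at least $1-\nu$, the claimed $\GE(f_{S_m})\leq \epsilon(S_m) + M\lambda$.

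The main obstacle is the first term: a naive cell-by-cell Hoeffding bound followed by a union bound over the $K$ cells is both loose and gives the wrong dependence. The precise $2K\log 2$ contribution in the final bound arises specifically from the $2^{K}$ prefactor of the multinomial $\ell_1$-concentration inequality, so identifying and applying that inequality, rather than a generic union-bound argument, is the crux of obtaining the stated constant.
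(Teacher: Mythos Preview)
Your proposal is correct and matches the standard argument: the paper itself does not re-prove \cref{thm:1} (it is quoted from \cite{RobustnessAndGeneralization}), but the identical decomposition --- splitting into a robustness-controlled term and a multinomial $\ell_1$-deviation term handled by the Bretagnolle--Huber--Carol inequality --- is exactly what the paper carries out in the proofs of \cref{thm:ge_dnn} and \cref{thm:ge_contraction}. Your identification of the $2^{K}$ prefactor in Bretagnolle--Huber--Carol as the source of the $2K\log 2$ constant is precisely the point.
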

To control the first term, a natural approach is to constrain the variation of the loss function.
Covering number \cite{CoveringNumber} provides a way to bound the variation of the loss function, and more importantly, it conceptually realizes the actual number $K$ of disjoint partitions.
\begin{definition}[Covering number]
  Given a metric space $(\mathcal{S}, \rho)$, we say that a subset
  $\hat{\mathcal{S}}$ of $\mathcal{S}$ is a $\gamma$-cover of $\mathcal{S}$, if $\forall s
  \in \mathcal{S}$, $\exists \hat{s} \in \hat{\mathcal{S}}$ such that $\rho(s, \hat{s}) \leq \gamma$. The
  $\gamma$-covering number of $\mathcal{S}$ is
  \begin{displaymath}
    \mathcal{N}_{\gamma}(\mathcal{S}, \rho) = \min\{ |\hat{\mathcal{S}}|:
    \hat{\mathcal{S}} \text{ is a } \gamma\text{-covering of } \mathcal{S} \} .
  \end{displaymath}
\end{definition}

In \cite{DiscrimRobustTransform}, they propose $\delta$-isometry as a desirable property in CRL problem to help control the variation, where $\delta$-isometry is a geometric property of mapping functions.
\begin{definition}[$\delta$-isometry]
  Given a map $T$ that maps a metric space $(\mathcal{P}, \rho_{P})$
to another metric space $(\mathcal{Q}, \rho_{Q})$, it is called
$\delta$-isometry if the following inequality holds
\begin{displaymath}
  \forall \bvec{x}, \bvec{x}' \in \mathcal{P}, |\rho_{Q}(T\bvec{x}, T\bvec{x}') - \rho_{P}(\bvec{x}, \bvec{x}')| \leq \delta .
\end{displaymath}
\end{definition}

When $T$ in \cref{true_risk} and \cref{emp_risk} is of $\delta$-isometry, by using $\rho_{Q}(T\bvec{x}, T\bvec{x}') \leq \rho_{P}(\bvec{x}, \bvec{x}')+ \delta$ a realization of algorithmic robustness (or GE bound in the form of Theorem \ref{thm:1}) similar to \cite{DiscrimRobustTransform} can be established for DNNs as follows.
\begin{theorem}\label{isometry-robustness}
  Given an algorithm in a CRL problem,
  if the Lipschtiz constant of $\mathcal{L} \circ f$ w.r.t. $\bvec{Tx}$ is bounded by $A$, $T$ is of $\delta$-isometry,
  and $\mathcal{X}$ is compact with a covering number
  $\mathcal{N}_{\gamma/2}(\mathcal{X}, \rho)$, then it is
  $(|\mathcal{Y}| \mathcal{N}_{\gamma/2}(\mathcal{X}, \rho), A(\gamma + \delta))$-robust.
\end{theorem}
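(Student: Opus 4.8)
The plan is to verify the two requirements of $(K,\epsilon(\cdot))$-robustness directly: first I would construct an explicit partition of $\mathcal{Z}$ into exactly $K = |\mathcal{Y}|\,\mathcal{N}_{\gamma/2}(\mathcal{X},\rho)$ cells, and then bound the within-cell variation of the loss by $A(\gamma+\delta)$. These two pieces are precisely the partition count and the $\epsilon(S_m)$ demanded by the definition of robustness, so establishing them suffices.

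For the partition, I would start from a minimal $\gamma/2$-cover of the compact instance space $\mathcal{X}$, which by definition consists of $\mathcal{N}_{\gamma/2}(\mathcal{X},\rho)$ balls of radius $\gamma/2$. Assigning each $\bvec{x}\in\mathcal{X}$ to the nearest center (breaking ties by any fixed rule) converts this cover into a disjoint partition of $\mathcal{X}$ into at most $\mathcal{N}_{\gamma/2}(\mathcal{X},\rho)$ cells, each contained in a radius-$\gamma/2$ ball and hence of diameter at most $\gamma$. Taking the product with the $|\mathcal{Y}|$ singletons of the discrete label space then yields a partition $\mathcal{C}=\{C_k\}$ of $\mathcal{Z}=\mathcal{X}\times\mathcal{Y}$ into $|\mathcal{Y}|\,\mathcal{N}_{\gamma/2}(\mathcal{X},\rho)$ cells, which fixes $K$.

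For the variation bound, fix any cell $C_k$ and any pair $s_i=(\bvec{x}_i,y_i)\in C_k$ and $z=(\bvec{x},y)\in C_k$. By the product construction they share the same label, $y_i=y$, and their instances satisfy $\rho(\bvec{x}_i,\bvec{x})\le\gamma$. Because the label is constant across the cell, I can treat $\mathcal{L}\circ f$ as a function of its argument $T\bvec{x}$ alone and invoke the assumed Lipschitz constant $A$, estimating
\[
|\mathcal{L}(f(T\bvec{x}_i),y_i)-\mathcal{L}(f(T\bvec{x}),y)|
= |\mathcal{L}(f(T\bvec{x}_i),y)-\mathcal{L}(f(T\bvec{x}),y)|
\le A\,\rho_{Q}(T\bvec{x}_i,T\bvec{x}).
\]
The $\delta$-isometry of $T$ then supplies the one-sided expansion inequality $\rho_{Q}(T\bvec{x}_i,T\bvec{x})\le\rho(\bvec{x}_i,\bvec{x})+\delta\le\gamma+\delta$, and chaining the two estimates gives the claimed $\epsilon(S_m)=A(\gamma+\delta)$, completing the verification.

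The only delicate point I anticipate is the covering-to-partition conversion: one must check that the nearest-center assignment keeps every cell inside a radius-$\gamma/2$ ball so that the in-cell diameter is genuinely at most $\gamma$, and that this reindexing does not increase the cell count beyond $\mathcal{N}_{\gamma/2}(\mathcal{X},\rho)$. Everything else is a routine chaining of the Lipschitz and isometry hypotheses, and it is worth emphasizing that only the distance-\emph{expansion} side of $\delta$-isometry enters here; the distance-contraction side is not needed for this particular bound, a distinction that the paper's later, sharper analysis will exploit.
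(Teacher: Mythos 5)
Your proposal is correct and follows essentially the same route as the paper: the paper (in Appendix~\ref{proof-thm-ge_dnn}, where this theorem's proof is folded into that of \cref{thm:ge_dnn}) likewise partitions $\mathcal{Z}$ as a $\gamma/2$-cover of $\mathcal{X}$ crossed with the $|\mathcal{Y}|$ label singletons, and bounds the within-cell loss variation by chaining the Lipschitz constant $A$ of $\mathcal{L}\circ f$ w.r.t.\ $\bvec{T}\bvec{x}$ with the expansion inequality $\rho_{Q}(\bvec{T}\bvec{x},\bvec{T}\bvec{x}')\le\rho_{P}(\bvec{x},\bvec{x}')+\delta$ to obtain $\epsilon(S_m)=A(\gamma+\delta)$. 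Your closing observation that only the expansion side of the $\delta$-isometry is used exactly matches the paper's own remark following the theorem.
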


\begin{remark}
  The result in \cite{DiscrimRobustTransform} is $(|\mathcal{Y}| \mathcal{N}_{\gamma/2}(\mathcal{X}, \rho), 2A(\gamma + \delta))$-robust; the factor of $2$ in the second term is dropped here due to the
  fact that in CRL problems, we are not doing metric learning as in
  \cite{DiscrimRobustTransform}, which involves two pairs of examples, and we
  only compare one pair of examples. Its proof under the context of
  DNN, i.e., the proof of \cref{thm:ge_dnn}, is given in Appendix \ref{proof-thm-ge_dnn}.
\end{remark}

\begin{remark}
Denote $\rho_{Q}(T\bvec{x}, T\bvec{x}') \leq \rho_{P}(\bvec{x}, \bvec{x}')+ \delta$ as the \emph{expansion property} of the $\delta$-isometry, and $\rho_{Q}(T\bvec{x}, T\bvec{x}') \geq \rho_{P}(\bvec{x}, \bvec{x}')- \delta$ as its \emph{contraction property}. We note that the above theorem is established by only exploiting the expansion property. After proving that DNNs achieve locally isometric mappings in \cref{sec:lemmas}, we will show that a better generalization can be derived by considering both the properties.
\end{remark}

\subsection{Notations of deep neural networks}
\label{sec:neural-network}

We study the map $\bvec{T}$ as a neural network. We present the definition of Multi-Layer Perceptron (MLP) here, which captures all ingredients for theoretical analysis and enables us to convey the analysis without unnecessary
complications, though we note that the analysis extends to Convolutional Neural Networks (CNNs) almost equally.

A MLP is a map that takes an input $\bvec{x} \in \mathbb{R}^n$ from the space
$\mathcal{X}$, and builds its output by recursively applying a linear map
$\bvec{W}_{l}$ followed by a pointwise non-linearity $g$
\begin{equation}
  \label{eq:nn}
  \bvec{x}_l = g (\bvec{W}_l \bvec{x}_{l-1}) ,
\end{equation}
where $l \in \{1, \dots, L\}$ indexes the layer, $\bvec{x}_l \in \mathbb{R}^{n_l}$, $\bvec{x}_0 = \bvec{x}$, $\bvec{W}_l \in \mathbb{R}^{n_l\times n_{l-1}}$, and $g$ denotes the activation function, which throughout the paper is the Rectifier Linear Unit (ReLU) \cite{Glorot2011}. Optionally $g$ may include max pooling operator \cite{Becigneul2017} after applying ReLU. We also denote the intermediate feature space $g(\bvec{W}_l\bvec{x}_{l-1})$ as $\mathcal{X}_l$ and $\mathcal{X}_0 = \mathcal{X}$. Each $\mathcal{X}_l$ is a
metric space, and throughout the paper the metric is taken as the $\ell_2$ norm
$||\cdot ||_2$, shortened as $||\cdot||$. We compactly write the map of a MLP as
\begin{displaymath}
  \bvec{T}\bvec{x} = \bvec{W}_Lg(\bvec{W}_{L-1}\ldots g(\bvec{W}_1\bvec{x})) .
\end{displaymath}
We denote the spectrum of singular values of a matrix $\bvec{W}$ by $\sigma(\bvec{W})$, and $\sigma_{\max}$ and $\sigma_{\min}$ are the maximum and minimum (nonzero) singular values of $\bvec{W}$ respectively. We denote the rank of a matrix $\bvec{W}$ by $r(\bvec{W})$, and the null space of $\bvec{W}$ by $\mathcal{N}(\bvec{W})$. We write the complement of $\mathcal{N}(\bvec{W})$ as $\mathcal{X} - \mathcal{N}(\bvec{W})$.

\section{Generalization bounds of deep neural networks}
\label{sec:weight-spectr-gener}

\begin{figure}[t]
  \centering
  \includegraphics[scale=0.3]{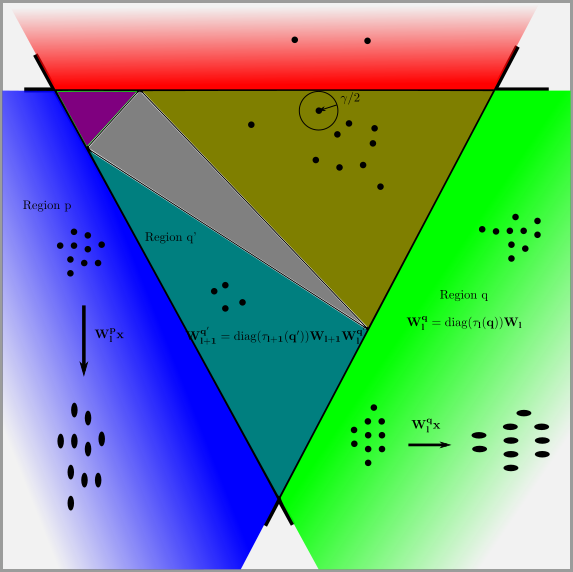}  \\
  \caption{ {\small Illustration for the local $\delta$-isometry of a Deep Neural
Network (DNN) with ReLU activation and max pooling functions. Black dots
represent training instances in the sample space, and regions coded with
different colors represent regions in the instance space that are
hierarchically specified by layers of the DNN. The illustration depicts proofs
of lemmas \ref{lm:1}, \ref{lm:2}, \ref{lm:3}, and \ref{lm:4}. \Cref{lm:1}
proves that the mapping induced by a linear DNN is of $\delta$-isometry, where $\delta$
specified the expansion and contraction properties of the mapping and is
determined by singular value spectrums of weight matrices of all the layers.
\Cref{lm:2} proves that a nonlinear DNN partitions the instance space into
increasingly refined regions. As illustrated in the figure, the space is
firstly partitioned into coarser regions (i.e., the center triangle and other
three regions color coded as red, blue, and green), an additional layer further
partitions some of the coarser regions into sets of smaller regions (e.g.,
those inside the region of center triangle), and the process goes
recursively. Suppose that a region $q$ is created by layer $l$ of the DNN, and
in region $q$, the nonlinear mapping defined by the matrix $\bvec{W}_{l}$ and
activation function reduces to a linear mapping of $\bvec{W}_{l}^{q} =
\diag(\tau_{l}(q))\bvec{W}_l$, where $\tau_{l}(q)$ is a binary vector roughly
indicating active neurons, and $\diag(\cdot)$ diagonalizes $\tau_l(q)$. Suppose $q'$
is created by layer $l+1$ at the bottom part of the center triangle, and in
region $q'$, the nonlinear mappings of layer $l$ and layer $l+1$ are reduced to
a linear mapping of $\bvec{W}_{l+1}^{q'} = \diag (\tau_{l+1}(q'))
\bvec{W}_{l+1}\bvec{W}^{q'}_{l}$, where symbols have similar meanings as
described above. The phenomenon enables to find a covering for the sample
space, such that in each covering ball that contains training instances, e.g.,
$\bvec{x}$, the DNN $\bvec{T}$ defines a transformation that can be
characterized as a linear mapping, e.g., $\bvec{T}_{|\bvec{x}}\bvec{x}$. This
is proved in \cref{lm:3}. Radius of the covering ball is illustrated as $\gamma/2$
in the figure --- a radius is acceptable as long as it is less than the smallest
distance from any of the training instances to their respective region
boundaries. The behaviors of $\delta$-isometry of $\bvec{T}_{|\bvec{x}}$ are also
visualized in regions $p$ and $q$ respectively. The transformation of $\bvec{T}$ applied on
instances $\bvec{x}$ is different in different regions. As a demonstration, in
region $p$, the transformation vertically elongates the distance between
instances, while in region $q$, it horizontally elongates the distance instead. Lastly, in \cref{lm:4}, we prove that by Cauchy interlacing law, a
nonlinear DNN is of local $\delta$-isometry within each covering ball specified
above.  }}
  \label{FigDemo}
\end{figure}

In this section, we develop GE bounds for CRL problems instantiated by DNNs. We identify two quantities that help control a bound, i.e., $\delta$-isometry of $\bvec{T}$ and the diameter $\gamma$ of covering balls of $\mathcal{X}$. We show that both of the two quantities can be controlled by constraining the spectrum of singular values of the weight matrix associated with each network layer, i.e., spectrums of singular values of $\{\bvec{W}_i\}_{i=1,\ldots, L}$.

To proceed, we consider in this paper variations of instances in $\mathcal{X}$ that are of practical interest --- more specifically, those that output nonzero vectors after passing through a DNN. We first prove in \cref{lm:1} that in such a variation subspace, mapping induced by a \emph{linear} neural network is of $\delta$-isometry, where $\delta$ is specified by the \emph{maximum and minimum} singular values of weight matrices of all the network layers. For a \emph{nonlinear} neural network, where we assume ReLU activation and optionally with max pooling, we consider the fact that it divides the input space $\mathcal{X}$ into a set of regions and within each region, it induces a linear mapping. In \cref{lm:2}, we specify the explicit form of region-wise mapping $\bvec{T}_q$, associated with any linear region $q$, with submatrices of $\{\bvec{W}_i^q\}_{i=1,\ldots, L}$, based on which we prove in \cref{lm:3} that a covering set for $\mathcal{X}$ can be found with a diameter $\gamma$ of covering balls that is upper bounded by a quantity inversely proportional to the product of maximum singular values of weight matrices of some network layers. With the $\delta$ and $\gamma$ specified in \cref{lm:1} and \cref{lm:3}, we further prove in \cref{lm:4} that in the instance-wise variation subspaces considered in this paper, a nonlinear neural network $\bvec{T}$ is of local $\delta$-isometry within each covering ball. The proofs are illustrated in \cref{FigDemo}.

To develop a GE bound, we propose a covering scheme that includes instances of different labels into the same balls, thus reducing the size of covering set when compared with that in \cref{thm:1}. The covering scheme is illustrated in \cref{FigCoveringSchemeDemo}. We correspondingly characterize both the errors caused by \emph{distance contraction} between instances of different labels and those by \emph{distance expansion} between instances of the same labels. Based on such characterization, we come with our \emph{main result of \cref{thm:ge_contraction}}.

Given the bound in \cref{thm:ge_contraction}, we prove in \cref{lm:equalSinValue} that the optimal bound w.r.t. $\delta$ is obtained when \emph{all singular values of the weight matrix of each network layer are of equal ones}, which inspires a straightforward choice of enforcing all singular values to have the value of $1$, and thus the algorithms of OrthDNNs.

\begin{figure}[t]
  \centering
  \includegraphics[scale=0.4]{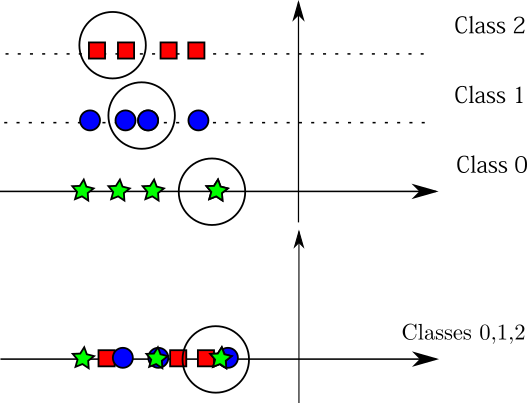}  \\
  \caption{ {\small Illustration of our used covering scheme that includes instances of different labels into same balls. Colored squares, circles, and stars represent instances of different labels, and unfilled (large) circles represent covering balls. \emph{Top:} existing works \cite{RobustnessAndGeneralization,RobustLargeMarginDNNs} usually separate instances of different labels into different covering balls, either by assuming that distances between instances of different labels in the sample space are infinite, or by using covering balls that are small enough not to contain instances of different labels; such a scheme can characterize the generalization errors caused by improper expansion of intra-class variations, but it cannot characterize the errors caused by improper contraction of inter-class differences. \emph{Bottom:} we use a covering scheme that includes instances into covering balls regardless of their labels; it enables characterization of both types of the aforementioned errors, and leads to a possibly tighter generalization error bound in some cases.  }}
  \label{FigCoveringSchemeDemo}
\end{figure}

\subsection{$\delta$-isometry in deep neural networks}
\label{sec:lemmas}

We begin with a few definitions necessary for the subsequent analysis.

\begin{definition}[Variation subspace of an instance]
  Given an instance $\bvec{x} \in \mathcal{X}$, suppose we are interested in the variation
  of a set $\mathcal{X'} \subseteq \mathcal{X}$ w.r.t. $\bvec{x}$. We call the linear vector space
  \[
    \text{span}(\{\bvec{x}' - \bvec{x} | \bvec{x}' \in \mathcal{X'}\})
  \]
  the variation subspace w.r.t. the instance $\bvec{x}$ of $\mathcal{X}$, shorten as variation subspace of instance $\bvec{x}$.
\end{definition}
The definition is to formalize variations of interest of particular
instances, thus enabling us to discuss what variations a DNN is able to
constrain. Correspondingly, we have the following definition of isometry.

\begin{definition}[$\delta$-isometry w.r.t. variation subspace of an instance]
  Given a map $\bvec{T}$ that maps a metric space $(\mathcal{P}, \rho_{P})$
to another metric space $(\mathcal{Q}, \rho_{Q})$, it is called
$\delta$-isometry w.r.t. the variation subspace $\mathcal{P}_x$ of instance $\bvec{x}$, if the following holds
\begin{displaymath}
  \forall \bvec{x}' \in \{\bvec{z} \ |\ \bvec{z} - \bvec{x} \in \mathcal{P}_x\}, |\rho_{Q}(\bvec{T}\bvec{x}, \bvec{T}\bvec{x}') - \rho_{P}(\bvec{x}, \bvec{x}')| \leq \delta .
\end{displaymath}
\end{definition}

We provide an example here to describe $\delta$-isometry w.r.t. the variation subspace of a linear DNN.

\begin{example}
In a linear DNN $\bvec{T}$, given an instance $\bvec{x}$, it is $\delta$-isometry
w.r.t. variation subspace $\mathcal{X} - \mathcal{N}(\bvec{T})$. This is proved
in \cref{lm:1}. Thus, for any variation $\delta\bvec{x} \in \mathcal{X} -
\mathcal{N}(\bvec{T})$, we have $\left| \rho(\bvec{Tx}, \bvec{T}(\bvec{x} +
\delta\bvec{x})) - \rho(\bvec{x}, \bvec{x} + \delta\bvec{x}) \right|\leq \delta$. In this case, the
variation subspace $\mathcal{X} - \mathcal{N}(\bvec{T})$ is the same for any
instance $\bvec{x}$.
\end{example}

The following lemma specifies for a \emph{linear} DNN the $\delta$-isometry w.r.t. data variations of practical interest.
\begin{lemma}
  \label{lm:1}
  Given a linear neural network $\bvec{T}$ and an instance $\bvec{x} \in \mathcal{X}$, if $||\bvec{x}|| \leq b$, i.e., instances are norm bounded, then $\bvec{T}$ is of $2b\max(|\prod_{i=1}^{L}\sigma^{i}_{\max} - 1|, |\prod_{i=1}^{L}\sigma^{i}_{\min} - 1|)$-isometry w.r.t. the variation subspace $\mathcal{X} - \mathcal{N}(\bvec{T})$ of the instance $\bvec{x}$. \footnote{Note that the space $\mathcal{X} - \mathcal{N}(\bvec{T})$ does not depend on $\bvec{x}$, and this is a trivial case where the variation subspaces of all instances are the same. We will see nontrivial cases later when dealing with nonlinear neural networks.}. We also have $\forall \bvec{x}' \in \{\bvec{z} \ |\ \bvec{z} - \bvec{x} \in \mathcal{P}_x\}$, $\rho_{Q}(\bvec{T}\bvec{x}, \bvec{T}\bvec{x}') \leq\rho_{P}(\bvec{x}, \bvec{x}')+2b|\prod_{i=1}^{L}\sigma^{i}_{\max} - 1|$ and $\rho_{Q}(\bvec{T}\bvec{x}, \bvec{T}\bvec{x}') \geq\rho_{P}(\bvec{x}, \bvec{x}')+\rho_{P}(\bvec{x}, \bvec{x}')(\prod_{i=1}^{L}\sigma^{i}_{\min} - 1)$.
\end{lemma}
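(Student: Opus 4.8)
The plan is to exploit the linearity of $\bvec{T}$, which for a linear network collapses to the single matrix $\bvec{T} = \bvec{W}_L\bvec{W}_{L-1}\cdots\bvec{W}_1$. Writing $\bvec{v} = \bvec{x}' - \bvec{x}$ and recalling that both metrics are the $\ell_2$ norm, the three claimed inequalities all concern the single quantity $\|\bvec{T}\bvec{v}\|$ compared against $\|\bvec{v}\| = \rho_P(\bvec{x},\bvec{x}')$. Since $\bvec{x}' - \bvec{x} \in \mathcal{P}_x = \mathcal{X} - \mathcal{N}(\bvec{T})$, the vector $\bvec{v}$ lies in the orthogonal complement of $\mathcal{N}(\bvec{T})$, and since both instances are norm bounded, $\|\bvec{v}\| \leq \|\bvec{x}\| + \|\bvec{x}'\| \leq 2b$. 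Thus the whole lemma reduces to sandwiching the ratio $\|\bvec{T}\bvec{v}\|/\|\bvec{v}\|$ between $\prod_{i=1}^{L}\sigma^i_{\min}$ and $\prod_{i=1}^{L}\sigma^i_{\max}$, and then converting this two-sided estimate into the stated expansion and contraction bounds using $\|\bvec{v}\|\leq 2b$.

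For the \emph{expansion} (upper) bound I would peel off the layers from the outside in: applying $\|\bvec{W}_i\bvec{u}\| \leq \sigma^i_{\max}\|\bvec{u}\|$ repeatedly gives $\|\bvec{T}\bvec{v}\| \leq \sigma^L_{\max}\|\bvec{W}_{L-1}\cdots\bvec{W}_1\bvec{v}\| \leq \cdots \leq \big(\prod_{i=1}^{L}\sigma^i_{\max}\big)\|\bvec{v}\|$. Hence $\|\bvec{T}\bvec{v}\| - \|\bvec{v}\| \leq (\prod_{i=1}^{L}\sigma^i_{\max} - 1)\|\bvec{v}\|$; when $\prod_{i=1}^{L}\sigma^i_{\max}\geq 1$ the right side equals $|\prod_{i=1}^{L}\sigma^i_{\max} - 1|\,\|\bvec{v}\| \leq 2b\,|\prod_{i=1}^{L}\sigma^i_{\max} - 1|$, and when it is below $1$ the inequality holds trivially, giving exactly the stated expansion bound. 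For the \emph{contraction} (lower) bound I would propagate the smallest nonzero singular value through the layers, setting $\bvec{v}_0 = \bvec{v}$, $\bvec{v}_i = \bvec{W}_i\bvec{v}_{i-1}$, and aiming for $\|\bvec{v}_i\| \geq \sigma^i_{\min}\|\bvec{v}_{i-1}\|$ at each step. The first step is justified because $\mathcal{N}(\bvec{W}_1)\subseteq\mathcal{N}(\bvec{T})$, so $\bvec{v}_0 \perp \mathcal{N}(\bvec{W}_1)$ and therefore $\|\bvec{v}_1\| \geq \sigma^1_{\min}\|\bvec{v}_0\|$. Chaining down to layer $L$ would yield $\|\bvec{T}\bvec{v}\| \geq \big(\prod_{i=1}^{L}\sigma^i_{\min}\big)\|\bvec{v}\|$, which rearranges to the stated $\rho_Q \geq \rho_P + \rho_P(\prod_{i=1}^{L}\sigma^i_{\min} - 1)$.

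\textbf{Main obstacle.} The delicate step is the inductive propagation in the contraction bound: the inequality $\|\bvec{W}_{i+1}\bvec{v}_i\| \geq \sigma^{i+1}_{\min}\|\bvec{v}_i\|$ requires $\bvec{v}_i$ to be orthogonal to $\mathcal{N}(\bvec{W}_{i+1})$, but the intermediate image $\bvec{v}_i = \bvec{W}_i\cdots\bvec{W}_1\bvec{v}$ need not avoid the later null spaces even though $\bvec{v}$ itself avoids $\mathcal{N}(\bvec{T})$. When every $\bvec{W}_i$ is square and invertible all null spaces are trivial and the chain goes through verbatim; in the rank-deficient or rectangular case one must argue more carefully that each intermediate image remains in the orthogonal complement of the next layer's null space (for instance by restricting each $\bvec{W}_i$ to the subspace actually reached by the preceding composition), and I expect this to be where the real work of the lemma lies.

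Once both product bounds are in hand, the isometry claim follows immediately: the ratio $\|\bvec{T}\bvec{v}\|/\|\bvec{v}\|$ lies in the interval $[\prod_{i=1}^{L}\sigma^i_{\min}, \prod_{i=1}^{L}\sigma^i_{\max}]$, so its distance from $1$ is at most $\max(|\prod_{i=1}^{L}\sigma^i_{\max}-1|, |\prod_{i=1}^{L}\sigma^i_{\min}-1|)$ (the extremal deviation of a point in an interval from $1$ is attained at an endpoint), and multiplying through by $\|\bvec{v}\|\leq 2b$ yields the claimed $\delta$-isometry with $\delta = 2b\max(|\prod_{i=1}^{L}\sigma^i_{\max}-1|, |\prod_{i=1}^{L}\sigma^i_{\min}-1|)$.
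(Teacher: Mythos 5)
Your expansion bound, the reduction of all three claims to sandwiching $\|\bvec{T}\bvec{v}\|/\|\bvec{v}\|$, and the final assembly into the $\delta$-isometry statement are correct and coincide with the paper's proof in Appendix~\ref{ProofLinearNNDeltaIsometry}. The genuine gap is exactly the one you flag and then leave unresolved: your contraction argument establishes only the first step ($\bvec{v}_0 \perp \mathcal{N}(\bvec{W}_1)$, via $\mathcal{N}(\bvec{W}_1) \subseteq \mathcal{N}(\bvec{T})$), and the propagation $\|\bvec{W}_{i}\bvec{v}_{i-1}\| \geq \sigma^{i}_{\min}\|\bvec{v}_{i-1}\|$ for $i \geq 2$ is never justified. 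That propagation is the entire content of the lemma, and it is what the paper spends nearly all of its proof on: it first proves the pseudo-inverse identity $\bvec{W}^{\dag}\bvec{W}\bvec{x} = \bvec{x}$ for $\bvec{x} \in \mathcal{X}-\mathcal{N}(\bvec{W})$ (\cref{lm:aux}), and then argues by contradiction that if $\bvec{\Delta}_{i-1} = \bvec{W}_{i-1}\cdots\bvec{W}_1\bvec{\Delta}$ had a nonzero coefficient on a right singular vector $\bvec{v}_k$ of $\bvec{W}_i$ with zero singular value, then $\bvec{W}^{\dag}\bvec{v}_k$ (with $\bvec{W}=\bvec{W}_{i-1}\cdots\bvec{W}_1$) would lie simultaneously in $\mathcal{N}(\bvec{T})$ and in its complement. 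So, as written, your proposal is the easy half of the proof plus an accurate diagnosis of where the hard half lives; it is not a proof.

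Your caution about that step is, however, better founded than you suggest: the chaining claim is not merely delicate, it is \emph{false}, so the gap cannot be closed by any bookkeeping about reached subspaces. Take $L=2$, $\bvec{W}_1 = (1,0)^{\top} \in \mathbb{R}^{2\times 1}$ and $\bvec{W}_2 = (\epsilon, 1) \in \mathbb{R}^{1\times 2}$ with small $\epsilon>0$. Each layer has a single (hence minimal) nonzero singular value, $\sigma^{1}_{\min}=1$ and $\sigma^{2}_{\min}=\sqrt{1+\epsilon^{2}}$, and $\bvec{T}=\bvec{W}_2\bvec{W}_1=\epsilon$ has trivial null space, so \emph{every} variation lies in $\mathcal{X}-\mathcal{N}(\bvec{T})$; yet $\|\bvec{T}\bvec{v}\| = \epsilon\|\bvec{v}\| < \sqrt{1+\epsilon^{2}}\,\|\bvec{v}\| = \prod_{i}\sigma^{i}_{\min}\|\bvec{v}\|$. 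The first layer embeds the input line as $\text{span}\{(1,0)^{\top}\}$, which is not orthogonal to $\mathcal{N}(\bvec{W}_2)=\text{span}\{(1,-\epsilon)^{\top}\}$ --- precisely the failure you anticipated, and membership of $\bvec{v}$ in $\mathcal{X}-\mathcal{N}(\bvec{T})$ does nothing to prevent it. This violates both the stated contraction inequality and the $\delta$-isometry conclusion of \cref{lm:1} itself, and it also breaks the paper's contradiction argument: that argument needs $\bvec{T}\bvec{W}^{\dag}\bvec{v}_k = 0$ for the null directions $\bvec{v}_k$ of $\bvec{W}_i$, which presupposes $\bvec{v}_k \in \mathrm{range}(\bvec{W})$, and it invokes pairwise orthogonality of the vectors $\bvec{W}^{\dag}\bvec{v}_k$, which pseudo-inverses do not preserve; here $\bvec{v}_2=(1,-\epsilon)^{\top}/\sqrt{1+\epsilon^{2}} \notin \mathrm{range}(\bvec{W}_1)$ and $\bvec{T}\bvec{W}_1^{\dag}\bvec{v}_2 = \epsilon/\sqrt{1+\epsilon^{2}} \neq 0$, so no contradiction arises. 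In short, the obstacle you identified is the real crux, and the contraction half of the lemma survives only under additional hypotheses (e.g., square invertible layers, where your chain goes through verbatim, or each layer's range lying in the row space of the next); an honest completion must add such an assumption rather than find a cleverer argument.
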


See the proof in appendix \ref{ProofLinearNNDeltaIsometry}. 

The above lemma shows that as long as $\bvec{x}$ varies within the complement of the
null space of a linear DNN $\bvec{T}$, we can constrain variations induced by the mapping by the specified $\delta$. Outside the space, it is
unlikely of practical interest since $\bvec{T}$ discards all the information about the variations.

To further proceed for nonlinear DNNs, we introduce some terminologies from hyperplane arrangement
\cite{orlik1992arrangements}, and give definitions to describe the
objects of interest exactly.

\begin{definition}[(Finite) Hyperplane arrangement]
  A finite hyperplane arrangement $\mathcal{A}$ is a finite set of affine hyperplanes in some
  vector space $\mathcal{X} \equiv \mathbb{K}^{n}$, where $\mathbb{K}$ is a field and
  is taken as $\mathbb{R}$ in this paper.
\end{definition}

\begin{definition}[Region]
  Denote $H \in \mathcal{A}$ an element of the arrangement, a region of the
  arrangement is a connected component of the complement $\mathbb{R}^{n} - \bigcup\limits_{H \in \mathcal{A}}H$. The set of all regions is
  denoted as $\mathcal{R}(\mathcal{A})$, shortened as $\mathcal{R}$ when no confusion exists.
\end{definition}

We set up a labeling scheme for $r \in \mathcal{R}$. Choosing a linear order in
$\mathcal{A}$, we write $\mathcal{A} = \{H_i\}_{i=1,\ldots, n}$ and $H_i = \ker
\alpha_i$, where $\ker$ denotes the kernel $\{\bvec{x} \in \mathcal{X} | \alpha_i(\bvec{x}) = \langle \bvec{\alpha}_i, \bvec{x} \rangle = 0 \}$ and $\bvec{\alpha}_i$ is the normal of hyperplane $H_i$. Let $\mathbb{J} = \{1, 0\}$, and
$\pi_i: \mathbb{J}^{n} \rightarrow \mathbb{J}$ be the projection onto the $i$-th coordinate. Define
a map $\tau: \mathcal{X} \rightarrow \mathbb{J}^{n}$ by
\begin{displaymath}
  \pi_i\tau(\bvec{x}) =
  \begin{cases}
    1 & \text{if $\alpha_i(\bvec{x}) > 0$}\\
    0 & \text{if $\alpha_i(\bvec{x}) \leq 0$} .
  \end{cases}
\end{displaymath}
With the scheme, for any $\mathcal{A}$, we would have an index set
$\mathcal{T}(\mathcal{A}, \tau)$, shortened as $\mathcal{T}$, such that for any $r \in \mathcal{R}$, it
corresponds to a unique element in $\mathbb{J}^{n}$, denoted as $\tau(r)$. We will
use $\tau(\bvec{x})$ ---- labeling on elements, and $\tau(r)$ --- labeling on regions, interchangely.

\begin{definition}[Neuron]
  A neuron $a_{lk}$ of a neural network $\bvec{T}$ is a functional defined by
  \begin{displaymath}
    a_{lk}(\bvec{x}) = \pi_k g(\bvec{W}_lg(\bvec{W}_{l-1}\ldots g(\bvec{W}_1\bvec{x}))) ,
  \end{displaymath}
  where $l \in \{1,\ldots, L\}$ and $k \in \{1,\ldots, n_l\}$.
  All the neurons at layer $l$ define a map, denoted as
  \begin{displaymath}
    a_{l}(\bvec{x}) =  g(\bvec{W}_lg(\bvec{W}_{l-1}\ldots g(\bvec{W}_1\bvec{x}))) .
  \end{displaymath}
\end{definition}

The lemma that follows is mostly an analysis of the domain of a DNN $\bvec{T}$. We begin with the following definition.

\begin{definition}[Support of Neuron/DNN]
  Given a neuron $a_{lk}$, the support of $a_{lk}$ is the set of instances in $\mathcal{X}$ that satisfy
  \begin{displaymath}
    \text{supp}(a_{lk}) = \{ \bvec{x} \in \mathcal{X} | a_{lk}(\bvec{x}) \not= 0\} .
  \end{displaymath}
  Similarly, the support of a neural network $\bvec{T}$ is the set of instances in $\mathcal{X}$ that satisfy
  \begin{displaymath}
    \text{supp}(\bvec{T}) = \{ \bvec{x} \in \mathcal{X} | \bvec{T}\bvec{x} \not= 0\} .
  \end{displaymath}
\end{definition}

\begin{lemma}
  \label{lm:2}
 A nonlinear neural network $\bvec{T}$ divides $\mathcal{X}$ into a set of regions
$\mathcal{Q}$, and within each region $q \in \mathcal{Q}$, $\bvec{T}$ is linear w.r.t. variations of instances as long as they vary within $q$. We denote the linear mapping at $q$ as $\bvec{T}_{q}$ and have
\begin{align*}
  \bvec{T}_{q} &= \prod\limits_{i=1}^{L}\bvec{W}^{q}_{i}\\
  \bvec{W}^{q}_{l} &= \diag(\tau_l(q))\bvec{W}_{l} ,
\end{align*}
where when layer $l$ does not contain max pooling, and $\tau_l(q)$ is defined as
\begin{displaymath}
  \pi_k\tau_l(q) = \pi_k\tau^{\relu}_{l}(q) =
  \begin{cases}
    1 & \text{if $a_{lk}(\bvec{x}) > 0, \forall \bvec{x} \in q$}\\
    0 & \text{if $a_{lk}(\bvec{x}) \leq 0, \forall \bvec{x} \in q$}.
  \end{cases}
\end{displaymath}
When layer $l$ does contain max pooling, we define $\bvec{W}_{l}^{q}$ as
\begin{displaymath}
  \bvec{W}^{q}_{l} = \bvec{P}_l\diag(\tau_l(q))\bvec{W}_{l}
\end{displaymath}
, and
$\tau_l(q)$ is defined as
\begin{displaymath}
  \tau_{l}(q) = \tau^{\max}_{l}(q)\tau^{\relu}_{l}(q).
\end{displaymath}
In this above definition $\tau^{\relu}$ is defined as before, and
\begin{displaymath}
  \pi_{k}\tau^{\max}_{l}(q) =
  \begin{cases}
    1 & \text{if $k=\argmax\limits_{k \in K} a_{lk}(\bvec{x}), \forall \bvec{x} \in q$}\\
    0 & \text{otherwise} ,
  \end{cases}
\end{displaymath}
where $K$ is the set of indices of neurons being pooled over; $\bvec{P}_l$ is defined as (layer index suppressed)
  \begin{displaymath}
    \bvec{P}_{ik} =
  \begin{cases}
    1 & \text{if $k$ is the index that is pooled over by $i^{\text{th}}$ pooling area}\\
    0 & \text{otherwise}.
  \end{cases}
  \end{displaymath}
  $\bvec{P}_l$ could be understood as a matrix that for each pooling area, it
sums over the dimension/area being pooled, and since only one dimension of the
area is nonzero (due to $\tau_l(q)$), it outputs the maximal value. For clarity and convenience, we would use the definition without max pooling in
discussion, and note that all the results present are proved for both definitions.

\end{lemma}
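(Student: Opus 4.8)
The plan is to prove the statement by induction on the layer index $l$, showing that after the first $l$ layers the network induces a partition of $\mathcal{X}$ into regions on each of which the map $a_l$ coincides with a fixed linear map, and that this partition is exactly the one cut out by a finite hyperplane arrangement in the sense of the definitions above. The guiding idea, borrowed from Mont\'ufar \emph{et al.} \cite{NumOfLinearRegionDNN}, is that the arrangement is built up \emph{compositionally}: the hyperplanes introduced by layer $l$ are not fixed a priori but depend on the linear map already induced by layers $1,\ldots,l-1$ on the region under consideration. The inductive hypothesis I would carry is that on a region $q$ of the arrangement accumulated through layer $l-1$, one has $a_{l-1}(\bvec{x})=\prod_{i=1}^{l-1}\bvec{W}_i^{q}\bvec{x}$.

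For the base case $l=1$, the pre-activation $\bvec{W}_1\bvec{x}$ is linear in $\bvec{x}$, so each neuron $a_{1k}$ flips its activation status exactly across the hyperplane $\{\bvec{x}:\pi_k\bvec{W}_1\bvec{x}=0\}$; these $n_1$ hyperplanes form an arrangement, and on each region $q$ the sign of every $\pi_k\bvec{W}_1\bvec{x}$ is constant. Since $g$ is ReLU, on $q$ it acts as the identity on active coordinates and as zero on inactive ones, i.e.\ $a_1(\bvec{x})=\diag(\tau_1(q))\bvec{W}_1\bvec{x}=\bvec{W}_1^{q}\bvec{x}$, matching the definition $\pi_k\tau_1(q)=1$ iff $a_{1k}(\bvec{x})>0$. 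For the inductive step, on $q$ the next pre-activation $\bvec{W}_l\,a_{l-1}(\bvec{x})=\bvec{W}_l\prod_{i=1}^{l-1}\bvec{W}_i^{q}\bvec{x}$ is again linear in $\bvec{x}$ (homogeneous, as there is no bias term), so its zero sets are genuine hyperplanes that \emph{refine} $q$ into subregions. On each such subregion the pattern $\tau_l$ is constant, the ReLU again collapses to $\diag(\tau_l(q))$, and composing yields $a_l(\bvec{x})=\diag(\tau_l(q))\bvec{W}_l\prod_{i=1}^{l-1}\bvec{W}_i^{q}\bvec{x}=\prod_{i=1}^{l}\bvec{W}_i^{q}\bvec{x}$. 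Iterating to the output gives $\bvec{T}_q=\prod_{i=1}^{L}\bvec{W}_i^{q}$; because the map on $q$ is linear rather than merely affine, $\bvec{T}\bvec{x}=\bvec{T}_q\bvec{x}$ and hence $\bvec{T}\bvec{x}-\bvec{T}\bvec{x}'=\bvec{T}_q(\bvec{x}-\bvec{x}')$ for all $\bvec{x},\bvec{x}'\in q$, which is the claimed linearity w.r.t.\ variations inside $q$.

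To handle a max-pooling layer, I would enlarge the arrangement at layer $l$ by the additional separating surfaces $\{\bvec{x}:a_{lj}(\bvec{x})=a_{lk}(\bvec{x})\}$ for $j,k$ in a common pooling window $K$; by the inductive hypothesis these are again linear in $\bvec{x}$ on $q$, so they refine $q$ further, and on each resulting subregion the $\argmax$ over $K$ is fixed and recorded by $\tau_l^{\max}(q)$. Combining with the activation mask as $\tau_l(q)=\tau_l^{\max}(q)\tau_l^{\relu}(q)$ leaves at most one nonzero coordinate per window after $\diag(\tau_l(q))$, so the window-summation matrix $\bvec{P}_l$ returns exactly the pooled maximum; thus $\bvec{W}_l^{q}=\bvec{P}_l\diag(\tau_l(q))\bvec{W}_l$ and the same compositional recursion goes through unchanged.

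The main obstacle is conceptual rather than computational: I must verify that the partition is genuinely realized by a \emph{finite hyperplane arrangement} even though the separating hyperplanes at deeper layers are defined only \emph{relative to} the region fixed by the shallower layers, and that on each resulting open, connected region all relevant sign patterns --- the ReLU activations and, where present, the pooling $\argmax$ --- are \emph{simultaneously} constant, so that no neuron or pooling choice flips inside a region. The crux is arguing that a connected component on which every region-wise linear pre-activation has constant sign is precisely a region of the accumulated arrangement, which is exactly what licenses replacing $g$ by the fixed masks $\diag(\tau_l(q))$ and the selection matrix $\bvec{P}_l$ uniformly throughout $q$.
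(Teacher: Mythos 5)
Your proposal is correct and follows essentially the same route as the paper's proof: induction on layers, with regions built compositionally by refining each existing region along the zero sets of the (region-wise linear) pre-activations, ReLU collapsing to the mask $\diag(\tau_l(q))$, and max pooling handled via the difference hyperplanes $\{\pre(a_{l})_k - \pre(a_{l})_{k'}\}$ that fix the $\argmax$ so that $\bvec{P}_l$ selects it. Even the subtlety you flag --- that the deeper-layer ``hyperplanes'' are only defined relative to the region fixed by shallower layers --- is acknowledged in the paper in the same way (as a slight abuse of terminology) and resolved by the same inductive structure.
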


See the proof in appendix \ref{ProofNonlinearNNLocalLinearity}. 

\begin{remark}
  The function $\tau_l$ is intuitively a selection function that sets some
  rows of $\bvec{W}_l$ to zero, and selects a submatrix from it.
\end{remark}

\begin{remark}
  For $q \in \mathcal{Q}$ and $q \not\in \supp(\bvec{T})$, by the definition of linearity, $\bvec{T}$
is still linear over $q$, i.e., the special case of $\bvec{T}\bvec{x} = 0, \forall \bvec{x} \in q$. In this
case, $\bvec{x} \in \mathcal{N}(\bvec{T}_{q})$.
\end{remark}

In the proof of appendix \ref{ProofNonlinearNNLocalLinearity} for \cref{lm:2}, we prove that within each region $q \in Q$, each neuron of a DNN is a linear
functional. We summarize the result in the following corollary.
\begin{corollary}
  \label{col:2}
  A nonlinear neural network $\bvec{T}$ divides $\mathcal{X}$ into a set of regions
  $\mathcal{Q}$, and within each region $q \in \mathcal{Q}$, the $k^{th}$ neuron $a_{lk}$ of the layer $l$ is linear w.r.t. variations of $\bvec{x}$ within $q$, and we have
\begin{align*}
  a^q_{lk} &= \pi_{k}\prod\limits_{i=1}^{l}\bvec{W}^{q}_{i} .
\end{align*}
\end{corollary}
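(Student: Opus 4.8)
The plan is to derive the corollary directly from \cref{lm:2} by an induction on the layer index $l$, unrolling the recursive definition of the neuron map $a_l$ and reducing the ReLU nonlinearity to the linear selection $\diag(\tau_l(q))$ within each region. The key fact already supplied by \cref{lm:2} is that the set of regions $\mathcal{Q}$ is the common refinement of the hyperplane arrangements induced by all layers, so that within any fixed $q \in \mathcal{Q}$ every neuron $a_{lk}$ of every layer $l$ has a constant activation pattern, i.e., either $a_{lk}(\bvec{x}) > 0$ for all $\bvec{x} \in q$ or $a_{lk}(\bvec{x}) \leq 0$ for all $\bvec{x} \in q$. This is exactly what the binary vector $\tau_l(q)$ records.

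For the base case $l = 1$, I would note that $a_1(\bvec{x}) = g(\bvec{W}_1 \bvec{x})$ and fix $q \in \mathcal{Q}$. For an index $k$ with $\pi_k \tau_1(q) = 1$ the neuron is active on $q$, so $g$ acts as the identity and $a_{1k}(\bvec{x}) = \pi_k \bvec{W}_1 \bvec{x}$; for $\pi_k \tau_1(q) = 0$ the preactivation is non-positive and the neuron is clipped to zero. Both cases are captured simultaneously by $a^q_{1k} = \pi_k \diag(\tau_1(q)) \bvec{W}_1 = \pi_k \bvec{W}_1^q$, which is the claimed formula at $l = 1$; in particular $a_{1k}$ is a linear functional of $\bvec{x}$ on $q$.

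For the inductive step, assume that on $q$ the map $a_{l-1}$ coincides with the linear map $\prod_{i=1}^{l-1}\bvec{W}_i^q$. Substituting into $a_l(\bvec{x}) = g(\bvec{W}_l a_{l-1}(\bvec{x}))$ gives $a_l(\bvec{x}) = g\!\left(\bvec{W}_l \prod_{i=1}^{l-1}\bvec{W}_i^q \bvec{x}\right)$ on $q$, so the preactivation of layer $l$ is already linear in $\bvec{x}$. Applying the same ReLU-to-selection reduction as in the base case --- legitimate because each $a_{lk}$ has constant sign on $q$ --- yields $a^q_{lk} = \pi_k \diag(\tau_l(q)) \bvec{W}_l \prod_{i=1}^{l-1}\bvec{W}_i^q = \pi_k \bvec{W}_l^q \prod_{i=1}^{l-1}\bvec{W}_i^q = \pi_k \prod_{i=1}^{l}\bvec{W}_i^q$, completing the induction. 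The case with max pooling follows identically after replacing $\diag(\tau_l(q))\bvec{W}_l$ by $\bvec{P}_l \diag(\tau_l(q))\bvec{W}_l$ and $\tau_l(q)$ by $\tau_l^{\max}(q)\tau_l^{\relu}(q)$, exactly as in \cref{lm:2}.

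I do not expect a genuine obstacle here, since the corollary merely records, neuron by neuron, the intermediate linearity that the proof of \cref{lm:2} already establishes en route to the full-network mapping $\bvec{T}_q = \prod_{i=1}^{L}\bvec{W}_i^q$. The only point requiring care is the dependency structure: the selection vector $\tau_l(q)$ is determined by the activation signs at layer $l$, which in turn depend on the composition of the preceding layers. This apparent circularity is harmless precisely because $\mathcal{Q}$ is defined as the common refinement over all layers, so every $\tau_l(q)$ is well-defined and constant on $q$ simultaneously, and the induction proceeds layer by layer without the region $q$ ever being subdivided.
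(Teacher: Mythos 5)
Your proposal is correct and takes essentially the same route as the paper: the corollary is exactly the neuron-wise linearity established in the layer-by-layer induction of the proof of \cref{lm:2} (Appendix \ref{ProofNonlinearNNLocalLinearity}), where the constant activation pattern on each region reduces ReLU (and max pooling) to the linear selection $\diag(\tau_l(q))$, and the paper's ``proof'' of the corollary is simply a citation of that induction. Your only cosmetic difference is that you take the region set $\mathcal{Q}$ as given by the lemma's statement and re-run the induction to extract $a^q_{lk} = \pi_k \prod_{i=1}^{l}\bvec{W}^q_i$, rather than reading it off from inside the lemma's proof.
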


With the above lemma, we define the behavior of a DNN at a local area around $\bvec{x} \in \mathcal{X}$ or a local area around a set $B \subset \mathcal{X}$ as below.

\begin{definition}[Linear neural network and neuron induced at $\bvec{x} \in \mathcal{X}$ from a
  nonlinear neural network]
  \label{def:induced_nn}
  For any given $\bvec{x} \in \mathcal{X}$ with $\bvec{x} \in q \in \mathcal{Q}$, we call the linear
  neural network $\bvec{T}_{q}$ the linear neural network induced by a nonlinear
  neural network $\bvec{T}$ at $\bvec{x}$ --- denoting it as $\bvec{T}_{|\bvec{x}}$, the linear neuron $a^q_{lk}$
  the linear neuron induced by the nonlinear neuron $a_{lk}$ at $\bvec{x}$ --- denoting
  it as $a_{lk|\bvec{x}}$, and the submatrix of weight matrix of each layer $l$ the submatrix induced
  by nonlinearity --- denoting it as $\bvec{W}_{l|\bvec{x}}$.
\end{definition}

\begin{definition}[Linear neural network and neuron induced at subset $B \subset \mathcal{X}$ from a
  nonlinear neural network]
  \label{def:induced_nn_set}
  For any given $B \subset \mathcal{X}$ with $B \subset q \subset \mathcal{Q}$, we call the linear
  neural network $\bvec{T}_{q}$ the linear neural network induced by a nonlinear
  neural network $\bvec{T}$ at $B$ --- denoting it as $\bvec{T}_{|B}$, the linear neuron $a^q_{lk}$
  the linear neuron induced by the nonlinear neuron $a_{lk}$ at $B$ --- denoting
  it as $a_{lk|B}$, and the submatrix of weight matrix of each layer $l$ the submatrix induced
  by nonlinearity --- denoting it as $\bvec{W}_{l|B}$.
\end{definition}

\begin{lemma}
  \label{lm:3}
  For any nonlinear neural network $\bvec{T}$ of $L$ layers, a covering set for $\mathcal{X}$ can be found with a diameter $\gamma = o(S_m, \bvec{T})/\left(\prod\limits_{i=1}^{l(S_m, \bvec{T})}\sigma^{i}_{\max}\right) >
  0$, such that for any given $\bvec{x} \in S_m^{(x)}$ and $\{\bvec{x}' \in \mathcal{X} |\: \|\bvec{x} - \bvec{x}'\| \leq \gamma\}$, $\bvec{T}\bvec{x} - \bvec{T}\bvec{x}' = \bvec{T}_{|\bvec{x}}(\bvec{x} - \bvec{x}')$, where $o(S_m, \bvec{T})$ is a value depending on the training data and network weights, so is $l(S_m, \bvec{T})$ with $1 \leq l(S_m, \bvec{T}) \leq L$ (the dependence is specified in the proof), and $\sigma^{i}_{\max}$ is the maximum singular value of weight matrix $\bvec{W}_i$ of the $i^{th}$ layer.
\end{lemma}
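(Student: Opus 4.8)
The plan is to reduce the statement to a purely geometric claim about a single linear region and then to quantify the size of that region. By \cref{lm:2} the network partitions $\mathcal{X}$ into regions $\mathcal{Q}$ on each of which $\bvec{T}$ coincides with the linear map $\bvec{T}_q$; in particular, if $\bvec{x}$ lies in region $q$ and the whole ball $\{\bvec{x}' : \|\bvec{x}-\bvec{x}'\| \le \gamma\}$ is contained in $q$, then $\bvec{T}\bvec{x}' = \bvec{T}_q\bvec{x}' = \bvec{T}_{|\bvec{x}}\bvec{x}'$ for every such $\bvec{x}'$, so $\bvec{T}\bvec{x} - \bvec{T}\bvec{x}' = \bvec{T}_{|\bvec{x}}(\bvec{x}-\bvec{x}')$ follows by linearity. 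Hence it suffices to exhibit a single $\gamma>0$ such that, for every training instance $\bvec{x} \in S_m^{(x)}$, the $\gamma$-ball around $\bvec{x}$ stays inside the region $q(\bvec{x})$ containing it; the covering set is then obtained by centering such balls and invoking compactness of $\mathcal{X}$.

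Next I would lower-bound the distance from each training instance to the boundary of its region. The boundary $\partial q(\bvec{x})$ is contained in the union of the zero level sets of the neuron pre-activations, and by \cref{col:2} each neuron $a_{lk}$ restricts on $q$ to a linear functional whose pre-activation normal is the $k$-th row of $\bvec{W}_l\prod_{i=1}^{l-1}\bvec{W}_i^{q}$; call this vector $\bvec{u}_{lk}^q$, so that $\langle \bvec{u}_{lk}^q, \bvec{x}\rangle$ is exactly the pre-activation value of $a_{lk}$ at $\bvec{x}$. Because the MLP carries no biases, each such boundary piece is a hyperplane through the origin, and the Euclidean distance from $\bvec{x}$ to it is $|\langle \bvec{u}_{lk}^q,\bvec{x}\rangle|/\|\bvec{u}_{lk}^q\|$. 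Taking the minimum over all layers $l$ and neurons $k$ gives
\[
  \text{dist}(\bvec{x}, \partial q(\bvec{x})) \;\ge\; \min_{l,k}\frac{|\langle \bvec{u}_{lk}^q,\bvec{x}\rangle|}{\|\bvec{u}_{lk}^q\|} .
\]

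To turn this into the claimed form I would bound the normals uniformly. Since $\bvec{W}_i^{q} = \diag(\tau_i(q))\bvec{W}_i$ is obtained from $\bvec{W}_i$ by zeroing rows, its largest singular value is at most $\sigma^i_{\max}$ (the max-pooling case is identical after absorbing $\bvec{P}_l$, which selects at most one entry per pooling region and so has operator norm at most $1$), whence submultiplicativity of the spectral norm gives $\|\bvec{u}_{lk}^q\| \le \prod_{i=1}^{l}\sigma^i_{\max}$. Consequently the distance from $\bvec{x}$ to its region boundary is at least $|\langle\bvec{u}_{lk}^q,\bvec{x}\rangle|/\prod_{i=1}^{l}\sigma^i_{\max}$ for every $(l,k)$. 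I would then let $(l^*,k^*,\bvec{x}^*)$ minimize this quantity over $l$, $k$, and $\bvec{x}\in S_m^{(x)}$, define $o(S_m,\bvec{T})$ to be the corresponding numerator $|\langle \bvec{u}_{l^*k^*}^{q(\bvec{x}^*)},\bvec{x}^*\rangle|$ and $l(S_m,\bvec{T}):=l^*$, and set $\gamma = o(S_m,\bvec{T})/\prod_{i=1}^{l(S_m,\bvec{T})}\sigma^i_{\max}$. By construction $\gamma \le \text{dist}(\bvec{x},\partial q(\bvec{x}))$ for every training instance, so each $\gamma$-ball stays within its region and the local linearity identity holds. Positivity $\gamma>0$ uses that $o(S_m,\bvec{T})$, the minimizing pre-activation magnitude, is strictly positive, i.e.\ no training instance lies exactly on a neuron boundary; this is the genericity (equivalently, support) condition under which the analysis is conducted.

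The main obstacle is justifying the distance bound for the deeper layers: the level sets of $a_{lk}$ for $l\ge 2$ are only piecewise hyperplanes in $\mathcal{X}$, since $\bvec{u}_{lk}^q$ is the correct linear form for the pre-activation only inside $q$. I would resolve this by noting that along any ray emanating from $\bvec{x}$ the activation pattern, hence all the region-$q$ linear forms, remain valid up to the first sign change of any neuron; therefore the true distance to $\partial q(\bvec{x})$ equals the minimum over neurons of the within-$q$ distances, and replacing each exact normal by its upper bound $\prod_{i=1}^{l}\sigma^i_{\max}$ only shrinks the radius, keeping the ball safely inside $q$. A secondary but routine point is the submatrix spectral-norm inequality (and its max-pooling variant), which follows by Cauchy interlacing exactly as anticipated for \cref{lm:4}.
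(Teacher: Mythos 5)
Your proposal is correct and follows essentially the same route as the paper's own proof: both reduce the claim to showing that a ball around each training instance stays inside its linear region (via \cref{lm:2} and \cref{col:2}), lower-bound the distance to the region boundary by the pre-activation magnitude divided by the norm of the induced linear functional, bound that norm by $\prod_{i=1}^{l}\sigma^{i}_{\max}$ using the submatrix/Cauchy-interlacing argument (with $\bvec{P}_l$ absorbed in the pooling case), and define $o(S_m,\bvec{T})$ and $l(S_m,\bvec{T})$ from the minimizing neuron, layer, and training point. The only differences are cosmetic: the paper singles out the one boundary-defining neuron attaining the minimum distance rather than minimizing over all neurons, and it books $\gamma$ as a diameter $2r$ (introducing a factor of $2$ in $o(S_m,\bvec{T})$), whereas your radius-based accounting is, if anything, the more consistent with the lemma's stated condition $\|\bvec{x}-\bvec{x}'\|\le\gamma$.
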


See the proof in appendix \ref{ProofGammaCover4DNN}. 

We come with the local isometry property of DNNs after one more definition.

\begin{definition}[$\gamma$-cover $\delta$-isometry w.r.t. variation subspace of instance]
  Given a map $\bvec{T}$ that maps a metric space $(\mathcal{P}, \rho_{P})$
to another metric space $(\mathcal{Q}, \rho_{Q})$, and an instance $\bvec{x} \in \mathcal{P}$, it is called
$\gamma$-cover $\delta$-isometry w.r.t. variation space $\mathcal{P}_{\bvec{x}}$ of $\bvec{x}$, if a $\gamma$-cover exists such that the following
inequality holds
\begin{displaymath}
  \forall \bvec{x}' \in \{\bvec{z}\ |\ \bvec{z} - \bvec{x} \in \mathcal{P}_{\bvec{x}}, \bvec{z} \in B\}, |\rho_{Q}(\bvec{T}\bvec{x}, \bvec{T}\bvec{x}') - \rho_{P}(\bvec{x}, \bvec{x}')| \leq \delta ,
\end{displaymath}
where $B$ denotes a ball given by the $\gamma$-cover.
\end{definition}

\begin{lemma}
  \label{lm:4}
  Given a nonlinear neural network $\bvec{T}$, if $||\bvec{x}|| \leq b$ $\forall \bvec{x} \in \mathcal{X}$, i.e., instances are norm bounded,
  then  $\bvec{T}$ is of $\gamma$-cover $\delta$-isometry w.r.t. $\mathcal{X} - \mathcal{N}(\bvec{T}_{|\bvec{x}})$ of $\bvec{x} \in S_m^{(x)}$, where $\delta$ and $\gamma$ are respectively specified in \cref{lm:1} and \cref{lm:3}.
\end{lemma}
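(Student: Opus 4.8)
The plan is to reduce the nonlinear claim to the linear result of \cref{lm:1} by localizing with \cref{lm:3}, and then to translate the spectrum of the \emph{induced} submatrices back to that of the full weight matrices via the Cauchy interlacing law. First I would invoke \cref{lm:3} to fix a $\gamma$-cover of $\mathcal{X}$ with the stated diameter $\gamma > 0$, so that for a training instance $\bvec{x} \in S_m^{(x)}$ and its covering ball $B$, every $\bvec{x}'$ with $\|\bvec{x} - \bvec{x}'\| \leq \gamma$ satisfies $\bvec{T}\bvec{x} - \bvec{T}\bvec{x}' = \bvec{T}_{|\bvec{x}}(\bvec{x} - \bvec{x}')$. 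Because the ReLU network carries no bias, on the region $q \ni \bvec{x}$ supplied by \cref{lm:2} we in fact have $\bvec{T}\bvec{x} = \bvec{T}_{|\bvec{x}}\bvec{x}$ and $\bvec{T}\bvec{x}' = \bvec{T}_{|\bvec{x}}\bvec{x}'$, so that $\rho_{Q}(\bvec{T}\bvec{x}, \bvec{T}\bvec{x}') = \|\bvec{T}_{|\bvec{x}}(\bvec{x}-\bvec{x}')\|$. Thus inside $B$ the nonlinear map is indistinguishable, in the metric $\rho_Q$, from the linear network $\bvec{T}_{|\bvec{x}}$ induced at $\bvec{x}$.

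Next I would apply \cref{lm:1} to the linear network $\bvec{T}_{|\bvec{x}}$, whose layer weights are the induced submatrices $\bvec{W}_{i|\bvec{x}} = \diag(\tau_i(q))\bvec{W}_i$. Restricting $\bvec{x}'$ to $\{\bvec{z} \mid \bvec{z} - \bvec{x} \in \mathcal{X} - \mathcal{N}(\bvec{T}_{|\bvec{x}}),\ \bvec{z} \in B\}$ and using $\|\bvec{x}\| \leq b$, \cref{lm:1} yields $|\rho_Q(\bvec{T}_{|\bvec{x}}\bvec{x}, \bvec{T}_{|\bvec{x}}\bvec{x}') - \rho_P(\bvec{x}, \bvec{x}')| \leq \delta$, where $\delta$ has the form given in \cref{lm:1} but is built from the spectra $\sigma(\bvec{W}_{i|\bvec{x}})$ of the induced submatrices. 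Combining with the previous paragraph, this is precisely the $\gamma$-cover $\delta$-isometry of $\bvec{T}$ w.r.t. $\mathcal{X} - \mathcal{N}(\bvec{T}_{|\bvec{x}})$, matching the definition with $B$ the ball given by the $\gamma$-cover.

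Finally I would relate the induced spectra to the full ones. Since $\bvec{W}_{i|\bvec{x}}$ is obtained from $\bvec{W}_i$ by zeroing the inactive rows, $\bvec{W}_i^{\top}\bvec{W}_i - \bvec{W}_{i|\bvec{x}}^{\top}\bvec{W}_{i|\bvec{x}} = \bvec{W}_i^{\top}(\bvec{I} - \diag(\tau_i(q)))\bvec{W}_i$ is positive semidefinite, so the Cauchy interlacing law gives $\sigma_k(\bvec{W}_{i|\bvec{x}}) \leq \sigma_k(\bvec{W}_i)$ for every $k$, and in particular $\prod_{i}\sigma_{\max}(\bvec{W}_{i|\bvec{x}}) \leq \prod_{i}\sigma^{i}_{\max}$. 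This bounds the expansion part of $\delta$ by the full-network quantity appearing in \cref{lm:1}, letting me state the conclusion in terms of the singular value spectra of $\{\bvec{W}_i\}$, which is exactly what the later generalization bound regularizes.

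The main obstacle is the contraction (minimum singular value) side of the interlacing step: deleting rows can drop the rank, so the smallest \emph{nonzero} singular value of $\bvec{W}_{i|\bvec{x}}$ is not controlled by $\sigma^{i}_{\min}$ in the same monotone way as the maximum. Handling this carefully — tracking which singular directions survive and exploiting the restriction to the non-null-space variation subspace $\mathcal{X} - \mathcal{N}(\bvec{T}_{|\bvec{x}})$ so that only the active spectrum enters $\delta$ — is the delicate part of the argument. A secondary, routine point is reconciling the radius/diameter convention between \cref{lm:3} and the definition of a $\gamma$-cover, and checking that the intersection of $B$ with the variation subspace is nonempty so that the isometry statement is non-vacuous.
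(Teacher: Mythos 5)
You follow the paper's route exactly --- localize with \cref{lm:3} so that $\bvec{T}$ coincides with the induced linear network on each covering ball, apply \cref{lm:1} to that linear network, and transfer its spectrum to that of the full matrices $\{\bvec{W}_i\}$ via Cauchy interlacing --- and your expansion half, $\sigma_{\max}(\bvec{W}_{i|B}) \leq \sigma^{i}_{\max}$, is the paper's. But your proof stops where the lemma actually needs to be finished: you flag the contraction (minimum-singular-value) side as ``the delicate part'' and leave it as a plan rather than an argument. The paper closes it with two concrete steps that are absent from your proposal. First, it invokes the other half of the interlacing law for row deletion, $\sigma^{i}_{\min|B} \geq \sigma^{i}_{\min}$: deleting a row gives $\sigma_k(B) \geq \sigma_{k+1}(A)$, so whenever the rank drops, the smallest surviving singular value of the submatrix is at least the smallest nonzero singular value of the full matrix. (Note that your diagnosis is inverted: rank dropping is the \emph{favorable} case for the minimum; the case interlacing does not control is when deletion \emph{preserves} rank, which can occur for tall $\bvec{W}_i$.) Second, with both inequalities in hand, the paper concludes by an interval-nesting argument: setting $\delta^{1}_{|B} = \prod_{i}\sigma^{i}_{\max|B}$, $\delta^{2}_{|B} = \prod_{i}\sigma^{i}_{\min|B}$, $\delta^{1} = \prod_{i}\sigma^{i}_{\max}$, and $\delta^{2} = \prod_{i}\sigma^{i}_{\min}$, the two inequalities give $[\delta^{2}_{|B}, \delta^{1}_{|B}] \subseteq [\delta^{2}, \delta^{1}]$, and since $|x-1|$ over a nested interval is maximized at the outer endpoints, $\max(|\delta^{1}_{|B}-1|, |\delta^{2}_{|B}-1|) \leq \max(|\delta^{1}-1|, |\delta^{2}-1|)$; hence the $\delta_{|B}$-isometry supplied by \cref{lm:1} is a fortiori a $\delta$-isometry with the $\delta$ of the lemma statement. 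Without these two steps your argument establishes only the one-sided expansion bound and never reaches the two-sided conclusion that \cref{lm:4} asserts and that \cref{thm:ge_contraction} later needs (its pairwise-error term relies precisely on the contraction property).

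A secondary omission: the paper also verifies that max pooling is harmless --- $\|\bvec{P}_l\diag(\tau_l(q))\bvec{W}_l\bvec{x}\| = \|\diag(\tau_l(q))\bvec{W}_l\bvec{x}\|$, so the interlacing argument applies to $\bvec{W}^{q}_{l}$ with $\bvec{P}_l$ as well --- whereas you treat only the ReLU-only form $\diag(\tau_i(q))\bvec{W}_i$. Your two ``routine'' checks (radius-versus-diameter bookkeeping, and non-vacuousness of the ball intersected with the variation subspace) are indeed routine and not where the content lies.
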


See the proof in appendix \ref{ProofGammaCoverDeltaIsometry4DNN}. 

\begin{example}
  In a nonlinear DNN $\bvec{T}$ as defined in \cref{sec:neural-network}, given
  an instance $\bvec{x}$, it is $\delta$-isometry w.r.t. variation subspace
  $\mathcal{X} - \mathcal{N}(T_{|\bvec{x}})$. Note that for $\bvec{x}$ of different instances, $T_{|\bvec{x}}$ is potentially different. In practice, the singular values of weight matrices of the induced linear DNN $\bvec{T}_{|\bvec{x}}$ only
  relate to the variation in the space $\mathcal{X} - \mathcal{N}(T_{|\bvec{x}})$. The bound
  given in the following will establish the relationship between
  generalization errors and singular values of weight matrices of DNNs by characterizing the constraints that DNNs impose on the variation in this space.
\end{example}

\subsection{Main results of generalization bound}
\label{sec:mainresults}

With the local $\delta$-isometry property of DNNs established, we derive in this section our main results of generalization bound.

\begin{theorem}
  \label{thm:ge_dnn}
  Given a CRL problem, the algorithm to learn is a nonlinear neural network of $L$ layers,
denoted as $\bvec{T}$. Suppose the following assumptions hold: 1) $||\bvec{x}|| \leq b \
\forall \ \bvec{x} \in \mathcal{X}$, i.e., instances are norm bounded; 2) the loss
function $\mathcal{L}$ is bounded, a.k.a. $\forall z \in \mathcal{Z}, \mathcal{L}(f(\bvec{x}), y) \leq M$, and
the Lipschitz constant of $\mathcal{L} \circ f$ w.r.t $\bvec{Tx}$ is bounded by $A$; 3)
$\mathcal{X}$ is a regular $k$-dimensional manifold with a covering number
$(\frac{C_{\mathcal{X}}}{\gamma/2})^{k}$; 4) within each covering ball $B$ of
$\mathcal{X}$ that contains $\bvec{x} \in S_m^{(x)}$, $\bvec{x} - \bvec{x}' \in
\mathcal{X} - \mathcal{N}(T_{|B}) \ \forall \ \bvec{x}, \bvec{x}' \in B$. Then, for any $\nu > 0$, with probability at least $1 - \nu$ we have
\begin{align*}
  \GE(f_{S_m}) & \leq A(\gamma+\delta')+ M\sqrt{\frac{\log(2)2^{k+1}|\mathcal{Y}|C_{\mathcal{X}}^{k}}{\gamma^{k}m} + \frac{2 \log(1/\nu)}{m}}
\end{align*}
with
  \begin{displaymath}
    \delta' = 2b|\prod_{i=1}^{L}\sigma^{i}_{\max} - 1|,
    \gamma = \frac{o(S_m, \bvec{T})}{\prod\limits_{i=1}^{l(S_m, \bvec{T})}\sigma^{i}_{\max}} ,
  \end{displaymath}
where $o(S_m, \bvec{T})$ and $1 \leq l(S_m, \bvec{T}) \leq L$ are values depending on the training set $S_m$ and learned network $\bvec{T}$.
\end{theorem}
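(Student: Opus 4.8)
The plan is to specialize the algorithmic-robustness bound of \cref{thm:1} to the nonlinear DNN $\bvec{T}$, using the local isometry of \cref{lm:4} to control the per-cell loss variation $\epsilon(S_m)$ and assumption 3 to count the partition cells $K$. First I would construct the partition of $\mathcal{Z} = \mathcal{X}\times\mathcal{Y}$ demanded by the $(K,\epsilon(\cdot))$-robustness definition: take the $\gamma/2$-cover of $\mathcal{X}$ guaranteed by assumption 3, whose size is $\mathcal{N}_{\gamma/2}(\mathcal{X},\rho) = (2C_{\mathcal{X}}/\gamma)^{k}$, and refine each covering ball by the $|\mathcal{Y}|$ labels. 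This yields $K = |\mathcal{Y}|(2C_{\mathcal{X}}/\gamma)^{k}$ disjoint cells, which is exactly the count that generates the $2^{k+1}|\mathcal{Y}|C_{\mathcal{X}}^{k}/\gamma^{k}$ factor appearing under the square root.

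Next I would bound $\epsilon(S_m)$. Fix a cell that contains a training sample $s_i = (\bvec{x}_i, y_i)$ (cells with no training sample satisfy the robustness inequality vacuously), and take any $z = (\bvec{x}, y)$ in the same cell. The label refinement forces $y = y_i$, so the loss difference reduces to $|\mathcal{L}(f(\bvec{T}\bvec{x}_i), y_i) - \mathcal{L}(f(\bvec{T}\bvec{x}), y_i)|$, which assumption 2 bounds by $A\|\bvec{T}\bvec{x}_i - \bvec{T}\bvec{x}\|$. Because $\bvec{x}_i$ and $\bvec{x}$ share a radius-$\gamma/2$ ball we have $\|\bvec{x}_i - \bvec{x}\| \leq \gamma$, so $\bvec{x}$ lies in the $\gamma$-ball about the training point $\bvec{x}_i$ on which \cref{lm:3} linearizes the map, giving $\bvec{T}\bvec{x}_i - \bvec{T}\bvec{x} = \bvec{T}_{|\bvec{x}_i}(\bvec{x}_i - \bvec{x})$. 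Invoking the expansion half of the local isometry of \cref{lm:4}, namely $\rho_{Q}(\bvec{T}\bvec{x}_i, \bvec{T}\bvec{x}) \leq \rho_{P}(\bvec{x}_i, \bvec{x}) + \delta'$ with $\delta' = 2b|\prod_{i=1}^{L}\sigma^{i}_{\max} - 1|$, then yields $\|\bvec{T}\bvec{x}_i - \bvec{T}\bvec{x}\| \leq \|\bvec{x}_i - \bvec{x}\| + \delta' \leq \gamma + \delta'$. Hence the loss variation over any cell is at most $A(\gamma + \delta')$, establishing $(|\mathcal{Y}|(2C_{\mathcal{X}}/\gamma)^{k},\, A(\gamma + \delta'))$-robustness.

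Finally I would substitute $K = |\mathcal{Y}|(2C_{\mathcal{X}}/\gamma)^{k}$ and $\epsilon(S_m) = A(\gamma + \delta')$ into \cref{thm:1}; expanding $2K\log 2 = 2^{k+1}|\mathcal{Y}|C_{\mathcal{X}}^{k}\log(2)/\gamma^{k}$ reproduces the claimed bound verbatim, with $\gamma = o(S_m,\bvec{T})/\prod_{i=1}^{l(S_m,\bvec{T})}\sigma^{i}_{\max}$ inherited directly from \cref{lm:3}.

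I expect the main obstacle to be reconciling the partition geometry with the hypotheses of \cref{lm:3} and \cref{lm:4}: the cover balls are centered at arbitrary cover points, whereas the local-linearity and isometry estimates are only available on $\gamma$-balls centered at a training sample $\bvec{x} \in S_m^{(x)}$. The resolution, sketched above, is that robustness only constrains cells that actually contain a training point, and for such a cell the diameter bound $\|\bvec{x}_i - \bvec{x}\| \leq \gamma$ places every co-cell point inside the admissible $\gamma$-ball on which \cref{lm:3} applies; assumption 4 is precisely what keeps the variations $\bvec{x} - \bvec{x}_i$ inside $\mathcal{X} - \mathcal{N}(\bvec{T}_{|B})$ so that the \cref{lm:1} expansion estimate underlying \cref{lm:4} remains valid, and $\gamma > 0$ from \cref{lm:3} guarantees the cover is finite.
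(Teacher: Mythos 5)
Your proposal is correct and follows essentially the same route as the paper: partition $\mathcal{Z}$ by the $\gamma/2$-cover of $\mathcal{X}$ refined by the $|\mathcal{Y}|$ labels, bound the per-cell loss variation by $A(\gamma+\delta')$ via the Lipschitz assumption together with the linearization of \cref{lm:3} and the expansion half of the local isometry (\cref{lm:1}/\cref{lm:4}, guarded by assumption 4), and then convert robustness to a GE bound via \cref{thm:1} (the paper's appendix merely unrolls \cref{thm:1} explicitly, invoking the Bretagnolle--Huber--Carol inequality for the multinomial term, which is what \cref{thm:1} contains). Your explicit treatment of the label refinement and of why the estimates centered at training points suffice for cells containing a training sample matches the paper's intent, including its reliance on \cref{lm:3} as stated.
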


A proof sketch is provided below, and the full proof is given in Appendix \ref{proof-thm-ge_dnn}.
\begin{proof}
  By \cref{lm:4}, we have that $\bvec{T}$ is of $\gamma$-cover $\delta$-isometry
w.r.t. variation space of each training instance. The expansion property of $\delta$-isometry gives $\rho_{Q}(\bvec{T}\bvec{x}, \bvec{T}\bvec{x}') \leq\rho_{P}(\bvec{x},
\bvec{x}')+2b|\prod_{i=1}^{L}\sigma^{i}_{\max} - 1|$. By
\cref{isometry-robustness}, we have that DNNs are $(|\mathcal{Y}| 2^k
C_{\mathcal{X}}^k/\gamma^k, A(\gamma + 2b|\prod_{i=1}^{L}\sigma^{i}_{\max} -
1|))$-robust. Note that the proof differs from \cref{isometry-robustness}
subtly, for that the loss difference needs to stay in the variation space of
each covering ball. Since the tricky part is also present in the proof of
\cref{thm:ge_contraction} (the condition $\bvec{x}' - \bvec{x}_j \in
\mathcal{P}_{\bvec{x}_j}$ in \cref{t:5} ), to avoid tautology, we do not write the full proof
here. The proof is finished by applying the robustness conclusion into
\cref{thm:1}.
\end{proof}

Regarding the $4^{th}$ assumption in \cref{thm:ge_dnn}, it intuitively states that the local variation of interest w.r.t. each $\bvec{x} \in S_m^{(x)}$ falls in the space $\mathcal{X} - \mathcal{N}(\bvec{T}_{|B})$. Denote $\bvec{v} = \bvec{x} - \bvec{x}'$, we have $\bvec{T}\bvec{v} = \bvec{T}_{|B}\bvec{v} = 0$ if $\bvec{v} \in \mathcal{N}(\bvec{T}_{|B})$, i.e., the variation vanishes after passing through the network. In practice, we are not interested in such a trivial case of vanishing local variations. Instead, it is the variation in the complement $\mathcal{X} - \mathcal{N}(\bvec{T}_{|B})$ that we want to constrain.

We have assumed that $\mathcal{X}$ is a regular $k$-dimensional manifold, whose covering
number is $(\frac{C_{\mathcal{X}}}{\gamma/2})^{k}$, where
$C_{\mathcal{X}}$ is a constant that captures the ``intrinsic'' properties of
$\mathcal{X}$, and $\gamma$ is the diameter of the covering ball. Such an assumption is
general enough to accommodate at least visual data such as natural images and has been widely used \cite{Verma2013}.

In \cref{thm:ge_dnn}, $|\mathcal{Y}|(\frac{C_{\mathcal{X}}}{\gamma/2})^{k}$ corresponds the covering number of the joint space $\mathcal{X}\times\mathcal{Y}$. We now show that with proper use of the contraction property of isometric mapping of DNNs,
the constant $|\mathcal{Y}|$ in the second term of the upper bound can be
removed while a modified first term has the potential to be small as well,
indicating a better bound. Our idea is to directly exploit the
covering number of the instance space $\mathcal{X}$ and measure the differences
of the loss function $\mathcal{L}(f(\bvec{Tx}),y)$ w.r.t. both arguments.  Such
a measure deals with instances of different labels but are close enough in
$\mathcal{X}$, thus characterizing both the errors that are caused by
erroneously contracting the distance between instances from different classes and
erroneously expanding the distance between instances from the same classes,
instead of those of the same classes alone. However, it will cause the issue of
infinite Lipschitz constant of loss function ${\cal{L}}(f(\bvec{x}), y)$. To see this, consider a binary classification problem whose loss function $\mathcal{L}$ is
\begin{displaymath}
  \mathcal{L}(f(\bvec{x}), y) = - \mathbf{1}_{y=1}\log f(\bvec{x}) - \mathbf{1}_{y=0}\log (1 - f(\bvec{x})) ,
\end{displaymath}
where $(\bvec{x}, y)$ is an example, $f(\bvec{x})$ is a function that maps $\bvec{x}$ to
probability, and $\mathbf{1}$ is an indicator function. We provide an example case to illustrate the influence of metrics on $\mathcal{Z}$.

\noindent\emph{Case}. Let the metric on $\mathcal{Z}$ be $\rho((\bvec{x}, y), (\bvec{x}', y')) = ||(\bvec{x}-\bvec{x}', y-y')|| = ||\bvec{x}-\bvec{x}'|| + |y-y'|$. Suppose that we have a
pair of examples $(\bvec{x}, y = 1)$ and $(\bvec{x}, y' = 0)$ that only differ in labels, we have
\begin{displaymath}
  A \geq  \frac{|\mathcal{L}(f(\bvec{x}), y) - \mathcal{L}(f(\bvec{x}), y')|}{||(\bvec{x}, y) - (\bvec{x}, y')||} = \frac{\log (f(\bvec{x})/(1 - f(\bvec{x})))}{1} .
\end{displaymath}
When there exists an $\bvec{x}$ such that $f(\bvec{x}) \rightarrow 1$, we have $|\mathcal{L}(f(\bvec{x}), y) - \mathcal{L}(f(\bvec{x}), y')| \rightarrow +\infty$. This happens because as $y$ changes values, due to its
discreteness, it could induce a jump discontinuity on $\mathcal{L}(f(\bvec{x}), y)$, even though
$\mathcal{L}(f(\bvec{x}), y)$ is Lipschitz continuous w.r.t. $\bvec{x}$. To avoid this, \cite{RobustnessAndGeneralization} and \cite{DiscrimRobustTransform} employ a large covering number to ensure that examples in the same ball have the same label.

We note that derivation of generalization bounds for robust algorithms concerns with the loss difference $|\mathcal{L}(f(\bvec{x}), y) - \mathcal{L}(f(\bvec{x}'), y')|$ between example pairs. To address the aforementioned issue, we consider two separate cases for the loss difference: the cases of $y = y'$ and $y
\not= y'$. For the case $y = y'$, we exploit the bounded Lipschitz constant of $\mathcal{L} \circ f$ w.r.t. $\bvec{Tx}$. For the case $y \not= y'$, we introduce the following {\it pairwise error function} to characterize the loss difference.

\begin{definition}[Pairwise error function]
Given a CRL problem, of which $\mathcal{L}$ is bounded for any compact set in
$\mathcal{Z}$, a.k.a. for $z$ in any compact subset of $\mathcal{Z}$, $\mathcal{L}(f(\bvec{x}), y) \leq M$,
$\mathcal{X}$ is a regular $k$-dimensional manifold with a $\gamma$-cover, and $\bvec{T}$ is of
$\gamma$-cover $\delta$-isometry, a
pairwise error function (PE) of the tuple $(\mathcal{L}, f, \bvec{T}, \mathcal{Z}, \gamma)$ is defined as
\begin{eqnarray}
  \PE(\delta) = \max\limits_{z = (\bvec{x}, y) \in \mathcal{Z}}\max\limits_{z' \in D}|\mathcal{L}(f(\bvec{T}\bvec{x}), y) - \mathcal{L}(f(\bvec{T}\bvec{x}'), y')| \nonumber
\end{eqnarray}
with $ D  =  \{z' = (\bvec{x}', y') \in \mathcal{Z} \ | \ \gamma - \delta \leq
||\bvec{T}\bvec{x}' - \bvec{T}\bvec{x}|| \leq \gamma + \delta, ||\bvec{x} - \bvec{x}'|| \leq \gamma\}$.

\noindent It characterizes the largest loss difference for examples in $\mathcal{Z}$ that
may arise due to the contraction and expansion properties of $\delta$-isometry mapping. Note that
$\PE(\delta)$ is a monotonously increasing function of $\delta$
--- a larger $\delta$ means more feasible
examples in $\mathcal{X}$ and possibly larger distance contraction/expansion, leading to
a possibly larger value of $\PE(\delta)$.
\end{definition}

\begin{theorem}
  \label{thm:ge_contraction}
  Given a CRL problem, the algorithm to learn is a nonlinear neural network of $L$ layers, denoted as $\bvec{T}$.
  Suppose the following assumptions hold: 1) $||\bvec{x}|| \leq b \
\forall \ \bvec{x} \in \mathcal{X}$, i.e., instances are norm bounded; 2) the loss
function $\mathcal{L}$ is bounded, a.k.a. $\forall z \in \mathcal{Z}, \mathcal{L}(f(\bvec{Tx}), y) \leq M$, and
the Lipschitz constant of $\mathcal{L} \circ f$ w.r.t $\bvec{Tx}$ is bounded by $A$; 3)
$\mathcal{X}$ is a regular $k$-dimensional manifold with a covering number
$(\frac{C_{\mathcal{X}}}{\gamma/2})^{k}$; 4) within each covering ball $B$ of
$\mathcal{X}$ that contains $\bvec{x} \in S_m^{(x)}$, $\bvec{x} - \bvec{x}' \in
\mathcal{X} - \mathcal{N}(\bvec{T}_{|B}) \ \forall \ \bvec{x}, \bvec{x}' \in B$.
Then, for any $\nu > 0$, with probability at least $1 - \nu$ we have
\begin{align}
  \GE(f_{S_m}) & \leq \max\{A(\gamma + \delta), \PE(\delta)\} \label{eq:bound_1}\\
         & + M\sqrt{\frac{\log(2)2^{k+1}C_{\mathcal{X}}^{k}}{\gamma^{k}m} + \frac{2 \log(1/\nu)}{m}} \label{eq:bound_2},
\end{align}
where $\delta = 2b\max(|\prod_{i=1}^{L}\sigma^{i}_{\max} - 1|, |\prod_{i=1}^{L}\sigma^{i}_{\min} - 1|)$, and $\gamma$ is the same as that of
\cref{thm:ge_dnn}.
\end{theorem}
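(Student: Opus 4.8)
The plan is to establish a $(K,\epsilon(\cdot))$-robustness statement with the reduced cell count $K = 2^{k}C_{\mathcal{X}}^{k}/\gamma^{k}$ afforded by the new covering scheme, and then feed it into \cref{thm:1}. First I would use assumption~3 to cover the instance manifold $\mathcal{X}$ by $(C_{\mathcal{X}}/(\gamma/2))^{k} = 2^{k}C_{\mathcal{X}}^{k}/\gamma^{k}$ balls of radius $\gamma/2$, with $\gamma>0$ the diameter guaranteed by \cref{lm:3} and \cref{lm:4} supplying local $\delta$-isometry in each ball. The crucial deviation from \cref{thm:ge_dnn} is to treat each ball directly as one partition cell of $\mathcal{Z}$ rather than splitting it into $|\mathcal{Y}|$ label-specific cells; this is exactly the scheme of \cref{FigCoveringSchemeDemo}, and it removes the factor $|\mathcal{Y}|$ from $K$.

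The heart of the argument is to produce the robustness parameter $\epsilon(S_m)=\max\{A(\gamma+\delta),\PE(\delta)\}$ by bounding $|\mathcal{L}(f(\bvec{Tx}_i),y_i)-\mathcal{L}(f(\bvec{Tx}),y)|$ for any training point $s_i=(\bvec{x}_i,y_i)$ and test point $z=(\bvec{x},y)$ lying in a common ball $B$, so that $\|\bvec{x}_i-\bvec{x}\|\le\gamma$. I would split on the label. When $y_i=y$ I would invoke the $A$-Lipschitzness of $\mathcal{L}\circ f$ w.r.t. $\bvec{Tx}$ together with the \emph{expansion} half of the local $\delta$-isometry of \cref{lm:4} to get $\|\bvec{Tx}_i-\bvec{Tx}\|\le\|\bvec{x}_i-\bvec{x}\|+\delta\le\gamma+\delta$, hence a loss gap of at most $A(\gamma+\delta)$. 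When $y_i\neq y$ the Lipschitz route fails because the discrete label induces a jump discontinuity in $\mathcal{L}$ (the ``infinite Lipschitz constant'' example preceding the definition of $\PE$); instead I would observe that, by both the expansion and contraction halves of \cref{lm:4}, the pair satisfies the membership constraints of the feasible set $D$ in the definition of $\PE$, namely $\|\bvec{x}_i-\bvec{x}\|\le\gamma$ together with $\|\bvec{Tx}_i-\bvec{Tx}\|$ lying in the band $[\gamma-\delta,\gamma+\delta]$, so the gap is at most $\PE(\delta)$ by construction. Taking the worse of the two cases yields $\epsilon(S_m)=\max\{A(\gamma+\delta),\PE(\delta)\}$.

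Finally I would substitute $K=2^{k}C_{\mathcal{X}}^{k}/\gamma^{k}$ and this $\epsilon(S_m)$ into \cref{thm:1}: the quantity $2K\log 2$ under the root becomes $\log(2)\,2^{k+1}C_{\mathcal{X}}^{k}/\gamma^{k}$, reproducing \cref{eq:bound_1,eq:bound_2} verbatim, with $\delta = 2b\max(|\prod_{i=1}^{L}\sigma^{i}_{\max}-1|,|\prod_{i=1}^{L}\sigma^{i}_{\min}-1|)$ as specified in \cref{lm:1} and $\gamma$ as in \cref{thm:ge_dnn}.

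I expect the main obstacle to be the subtlety already flagged in the proof sketch of \cref{thm:ge_dnn}: the local $\delta$-isometry of \cref{lm:4} holds only for variations lying in $\mathcal{X}-\mathcal{N}(\bvec{T}_{|B})$, so before applying either the expansion or the contraction inequality I must invoke assumption~4 to guarantee $\bvec{x}_i-\bvec{x}\in\mathcal{X}-\mathcal{N}(\bvec{T}_{|B})$ for every same-ball pair. Keeping the loss difference confined to this variation subspace while still covering all pairs inside $B$ is the delicate bookkeeping, and it is precisely the reason the contraction property (unused in \cref{thm:ge_dnn}) must now enter through the band constraint defining $D$, so that different-label pairs are accounted for by $\PE(\delta)$ rather than by a vacuous Lipschitz estimate.
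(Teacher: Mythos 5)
Your proposal is correct and follows the paper's own proof in all essentials: the same label-agnostic covering of size $2^{k}C_{\mathcal{X}}^{k}/\gamma^{k}$, the same split into same-label pairs (Lipschitz constant $A$ plus the expansion half of the isometry, giving $A(\gamma+\delta)$) and different-label pairs (handled by $\PE(\delta)$ via the set $D$), and the same reliance on assumption~4 to keep all same-ball variations inside $\mathcal{X}-\mathcal{N}(\bvec{T}_{|B})$ so that \cref{lm:4} applies. The only structural difference is that you invoke \cref{thm:1} as a black box after establishing $(2^{k}C_{\mathcal{X}}^{k}/\gamma^{k},\max\{A(\gamma+\delta),\PE(\delta)\})$-robustness, whereas the paper inlines that theorem's proof (partitioning $\mathcal{Z}$ over the cover and applying the Bretagnolle--Huber--Carol inequality to the multinomial cell counts), which produces the identical constant $2K\log 2=\log(2)2^{k+1}C_{\mathcal{X}}^{k}/\gamma^{k}$.
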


\begin{proof}
  Similar to the proof of \cref{thm:1}, we partition the space $\mathcal{Z}$
via the assumed $\gamma$-cover. Since $\mathcal{X}$ is a $k$-dimensional manifold,
its covering number is upper bounded by $C_{\mathcal{X}}^{k} / (\gamma/2)^{k}$. Let $K$ be
the overall number of covering set, which is upper bounded by $C_{\mathcal{X}}^{k} / (\gamma/2)^{k}$. Denote $C_{i}$ the
$i^{th}$ covering ball
and let $N_{i}$ be the set of indices of training examples that fall into
$C_{i}$. Note that $(|N_{i}|)_{i=1,\ldots, K}$ is an IDD multimonial random variable with
parameters $m$ and $(|\mu(C_{i})|)_{i=1,\ldots, K}$. Then
  \begin{align}
    &|R(f\bvec{T}) - R_m(f\bvec{T})| \nonumber \\
    = & | \sum\limits_{i=1}^{K}\mathbb{E}_{z \sim \mu}[\mathcal{L}(f(\bvec{Tx}), y)|z\in C_i]\mu(C_i) -
        \frac{1}{m}\sum\limits_{i=1}^{m}\mathcal{L}(f(\bvec{Tx}_i), y_i) | \nonumber \\
    \leq &  | \sum\limits_{i=1}^{K}\mathbb{E}_{z \sim \mu}[\mathcal{L}(f(\bvec{Tx}), y)|z\in C_i]\frac{|N_i|}{m} -
        \frac{1}{m}\sum\limits_{i=1}^{m}\mathcal{L}(f(\bvec{Tx}_i), y_i) | \nonumber \\
      & +  | \sum\limits_{i=1}^{K}\mathbb{E}_{z \sim \mu}[\mathcal{L}(f(\bvec{Tx}), y)|z\in C_i]\mu(C_i) \nonumber \\
      & \ \ \ \ \ \ \ \ \ \ \ \ \ \ \ \ - \sum\limits_{i=1}^{K}\mathbb{E}_{z \sim \mu}[\mathcal{L}(f(\bvec{Tx}), y)|z\in C_i]\frac{|N_i|}{m} | \nonumber \\
    \leq & | \frac{1}{m}\sum\limits_{i=1}^{K}\sum\limits_{j\in N_i}\max\limits_{z' \in
        C_i, \bvec{x}' - \bvec{x}_j \in \mathcal{P}_{\bvec{x}_j}}|\mathcal{L}(f(\bvec{Tx}'), y') - \mathcal{L}(f(\bvec{Tx}_j), y_j)| | \label{t:5} \\
    & \ \ \ \ \ \ \ \ \ + | \max\limits_{z \in \mathcal{Z}}|\mathcal{L}(f(\bvec{Tx}),
      y)|\sum\limits_{i=1}^{K}|\frac{|N_i|}{m} - \mu(C_i)| | \label{t:6} .
  \end{align}
  Remember that $z = (\bvec{x}, y)$. We consider the two cases of $y'=y_{j}$ and $y'\not=y_{j}$ to bound \cref{t:5}.

  When $y'=y_{j}$, by the assumption that $\bvec{T}$ is of $\gamma$-cover
  $\delta$-isometry w.r.t. $\mathcal{P}_{\bvec{x}}$ of $\bvec{x} \in S_m^{(x)}$ and
  the Lipschitz constant of $\mathcal{L} \circ f$ w.r.t. $\bvec{Tx}$ is $A$, suppose the maximum is achieved at $\bvec{x}_k$ and
  $\bvec{x}_k \in C_p$, we have
  \begin{align}
    & \max\limits_{z' \in C_p, \bvec{x}' - \bvec{x}_k \in \mathcal{P}_{\bvec{x}_k}}|\mathcal{L}(f(\bvec{Tx}'), y') - \mathcal{L}(f(\bvec{Tx}_k),
      y_k)|\nonumber \\
    \leq & A\max\limits_{z' \in C_p, \bvec{x}' - \bvec{x}_k \in \mathcal{P}_{\bvec{x}_k}}||\bvec{T}_{|\bvec{x}_k}(\bvec{x}' - \bvec{x}_{k})||\label{t:7}\\
    \leq & A\max\limits_{z' \in C_p, \bvec{x}' - \bvec{x}_k \in \mathcal{P}_{\bvec{x}_k}}(||\bvec{x}' - \bvec{x}_{k}|| + 2b|\prod_{i=1}^{L}\sigma^{i}_{\max} - 1|)\label{t:8}\\
    \leq & A(\gamma + 2b|\prod_{i=1}^{L}\sigma^{i}_{\max} - 1|) \nonumber \\
    \leq & A(\gamma + 2b\max(|\prod_{i=1}^{L}\sigma^{i}_{\max} - 1|, |\prod_{i=1}^{L}\sigma^{i}_{\min} - 1|)) , \nonumber
  \end{align}
  where the second inequality holds since the $\gamma$-cover $\delta$-isometry of $\bvec{T}$ also gives $\rho_{Q}(\bvec{T}\bvec{x}, \bvec{T}\bvec{x}') \leq \rho_{P}(\bvec{x}, \bvec{x}') + 2b|\prod_{i=1}^{L}\sigma^{i}_{\max} - 1|$.

  When $y'\not=y_{j}$, given any training $\bvec{x}_k$, we have
  \begin{align}
    & \max\limits_{z' \in C_p, \bvec{x}' - \bvec{x}_k \in \mathcal{P}_{\bvec{x}_k}}|\mathcal{L}(f(\bvec{Tx}'), y') - \mathcal{L}(f(\bvec{Tx}_k),
      y_k)| \nonumber\\
    & =  \max\limits_{z' \in D_{z_k}}|\mathcal{L}(f(\bvec{Tx}'), y') - \mathcal{L}(f(\bvec{Tx}_k),
      y_k)| \label{t:9}\\
    & \leq  \PE(2b\max(|\prod_{i=1}^{L}\sigma^{i}_{\max} - 1|, |\prod_{i=1}^{L}\sigma^{i}_{\min} - 1|)) , \nonumber
  \end{align}
  where the inequality holds since by $\gamma$-cover $\delta$-isometry of $\bvec{T}$, we have $\rho_{Q}(\bvec{T}\bvec{x},
\bvec{T}\bvec{x}') \leq\rho_{P}(\bvec{x},
\bvec{x}') + \delta$ and
$\rho_{Q}(\bvec{T}\bvec{x}, \bvec{T}\bvec{x}') \geq\rho_{P}(\bvec{x},
\bvec{x}') - \delta$, with
\begin{equation}
\delta = 2b\max(|\prod_{i=1}^{L}\sigma^{i}_{\max} - 1|, |\prod_{i=1}^{L}\sigma^{i}_{\min} - 1|) \nonumber.
\end{equation}

  Thus \cref{t:5} is less than or equal to $\max\{A(\gamma + \delta), \PE(\delta)\}$.
  By Breteganolle-Huber-Carol inequality, \cref{t:6} is less than or equal to
  $M\sqrt{\frac{\log(2)2^{k+1}C_{\mathcal{X}}^{k}}{\gamma^{k}m} + \frac{2
      \log(1/\nu)}{m}}$.

  The proof is finished.
\end{proof}

We now specify a case where the obtained bound in \cref{thm:ge_contraction} is tighter than that in \cref{thm:ge_dnn}; for example, in the ball that covers $(\bvec{x}_k,y_k)$, few examples with $y\neq y_k$
are misclassified. In this case, $\PE(2b\max(|\prod_{i=1}^{L}\sigma^{i}_{\max} - 1|,
|\prod_{i=1}^{L}\sigma^{i}_{\min} - 1|)\leq A(\gamma +
2b|\prod_{i=1}^{L}\sigma^{i}_{\max} - 1|)$, and the covering number is shrunken
by a factor of $\sqrt{|\mathcal{Y}|}$. Our result only incrementally
improves the generalization bound. However, it clearly shows that the contraction property of isometric mapping plays an important role in bounding the generalization error.

\subsection{Suggestion of new algorithms}
\label{sec:DiscussMainResultAndSuggestion}

Many quantities exist in the GE bound established in \cref{thm:ge_contraction}. Except $\gamma$ and $\delta$, all others are independent of the neural network. Although both $\gamma$ and $\delta$ are controlled by singular values of weight matrices, we note that $\gamma$, which specifies the size of covering balls for a covering of $\mathcal{X}$, is more of a trade-off parameter that balances between the first and second term of the GE bound, than of a variable used to control GE, as long as its values satisfy the condition of $\gamma \leq o(S_m, \bvec{T})/\left(\prod\limits_{i=1}^{l(S_m, \bvec{T})}\sigma^{i}_{\max}\right)$ established in \cref{lm:3}.

To control the bound via $\delta$, we note that the minimum value of the bound w.r.t. $\delta$ is achieved when $\delta = 0$, which implies $\prod_{i=1}^{L}\sigma^i_{\max} = 1$ and $\prod_{i=1}^{L}\sigma^i_{\min} = 1$. In the following lemma, we show that the condition is achieved only when $\sigma^i_{\max} = \sigma^{i}_{\min}$, $\forall i = 1, \ldots, L$.
\begin{lemma}
  \label{lm:equalSinValue}
  In \cref{thm:ge_contraction}, $\delta = 0$ is achieved only when
  \begin{displaymath}
    \sigma^{i}_{\max} = \sigma^{i}_{\min}, \ \forall i = 1, \ldots, L ,
  \end{displaymath}
  \begin{displaymath}
  \prod_{i=1}^{L}\sigma^i_{\max} = 1, \prod_{i=1}^{L}\sigma^i_{\min} = 1 .
  \end{displaymath}
\end{lemma}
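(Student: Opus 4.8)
The plan is to reduce the statement to two elementary observations: that $\delta$ is a nonnegative multiple of a maximum of two nonnegative quantities, and that for every weight matrix the maximum singular value dominates the (nonzero) minimum one. First I would note that the assumption $\|\bvec{x}\| \leq b$ is only meaningful for $b > 0$ (otherwise all instances vanish), so the factor $2b$ in $\delta = 2b\max(|\prod_{i=1}^{L}\sigma^{i}_{\max} - 1|,\, |\prod_{i=1}^{L}\sigma^{i}_{\min} - 1|)$ is strictly positive. Hence $\delta = 0$ forces the maximum of the two absolute values to be $0$, and since a maximum of nonnegative terms vanishes only when each term vanishes, both arguments must be zero. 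This immediately yields the two product identities $\prod_{i=1}^{L}\sigma^{i}_{\max} = 1$ and $\prod_{i=1}^{L}\sigma^{i}_{\min} = 1$, which are the second and third displayed conditions of the lemma.

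The remaining task is to upgrade these global product identities to the per-layer equalities $\sigma^{i}_{\max} = \sigma^{i}_{\min}$. Here I would invoke the definition of singular values, namely $\sigma^{i}_{\max} \geq \sigma^{i}_{\min} > 0$ for each $i = 1,\ldots,L$ (recall $\sigma_{\min}$ denotes the smallest \emph{nonzero} singular value, so every ratio below is well defined and positive). Dividing the two product identities gives $\prod_{i=1}^{L}\big(\sigma^{i}_{\max}/\sigma^{i}_{\min}\big) = 1$, where each factor $c_i := \sigma^{i}_{\max}/\sigma^{i}_{\min}$ satisfies $c_i \geq 1$. For any fixed $k$, writing $1 = c_k\prod_{j\neq k}c_j \geq c_k$ (since the remaining factors are each at least $1$) and combining with $c_k \geq 1$ yields $c_k = 1$; hence $\sigma^{k}_{\max} = \sigma^{k}_{\min}$ for every $k$, completing the proof.

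I expect no genuine obstacle here, as the result is a direct consequence of the closed form of $\delta$ together with the ordering of singular values. The only points requiring a little care are to record that $b>0$ so the scalar factor can be discarded, to use positivity of $\sigma^{i}_{\min}$ so the ratios are legitimate, and to argue the finite-product step correctly: a product of finitely many reals each at least $1$ equals $1$ only if every factor equals $1$. I would state this last step explicitly rather than appeal to an AM--GM or averaging argument, which would require extra hypotheses and is unnecessary given the simple domination $c_i \geq 1$.
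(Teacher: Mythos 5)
Your proof is correct and follows essentially the same route as the paper: both deduce the two product identities from $\delta=0$ and then force the per-layer equalities by observing that a finite product of layer-wise ratios, each on one side of $1$, can equal $1$ only if every factor is $1$ (the paper uses $\alpha_i = \sigma^i_{\min}/\sigma^i_{\max} \in (0,1]$, which is just the reciprocal of your $c_i \geq 1$). Your explicit remarks about $b>0$ and the positivity of $\sigma^i_{\min}$ are minor tightenings of details the paper leaves implicit.
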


\begin{proof}
  It is straightforward to see that $\delta = 0$ i.f.f. $\prod_{i=1}^{L}\sigma^i_{\max} = 1$
and $\prod_{i=1}^{L}\sigma^i_{\min} = 1$. In the following, we show the two conditions
hold only when $\sigma^i_{\max} = \sigma^{i}_{\min}$, $\forall i = 1, \ldots ,L$.

  For any $i \in \{1, \ldots, L\}$, we reparameterize $\sigma^{i}_{\min}$ as $\sigma^{i}_{\min} = \alpha_i\sigma^{i}_{\max}$. It is
  clear that $\alpha_i \in (0, 1]$.

  Since $\prod_{i=1}^{L}\sigma^i_{\min} = \prod_{i=1}^{L}\sigma^i_{\max} =1$, we have
  \begin{equation}
    1=\prod_{i=1}^{L}\sigma^i_{\min} =\prod_{i=1}^{L}\alpha_i\sigma^i_{\max}=\prod_{i=1}^{L}\alpha_i .
    \label{eq:s_q}
  \end{equation}
  Notice that $\alpha \in (0, 1]$, thus, we have $0 < \prod_{i=1}^{L}\alpha_i \leq 1$. To have
  \cref{eq:s_q}, we need $\alpha_i = 1, \forall i = 1, \ldots, L$, indicating $\sigma^{i}_{\min} = \sigma^{i}_{\max}, \forall i = 1, \ldots, L$.
\end{proof}

We show in \cref{lm:equalSinValue} that the optimal GE bound of \cref{thm:ge_contraction} w.r.t. $\delta$ is achieved only when all singular values of each of weight matrices of a DNN are equal. Among various solutions, the most straightforward one is that all singular values are equal to $1$; in other words, each weight matrix has orthonormal rows or columns. This inspires a new set of algorithms that we generally term as \emph{Orthogonal Deep Neural Networks (OrthDNNs)}.

\section{Algorithms of Orthogonal Deep Neural Networks}
\label{SecAlgms}

In this section, we first present the algorithm of \emph{strict OrthDNNs} by enforcing strict orthogonality of weight matrices during network training. It amounts to optimizing weight matrices on their respective Stiefel manifolds, which however, is computationally prohibitive for large-sized networks. To achieve efficient OrthDNNs, we propose a novel algorithm called Singular Value Bounding (SVB), which achieves \emph{approximate OrthDNNs} via a simple scheme of hard regularization. We discuss alternative schemes of soft regularization for approximate OrthDNNs, and compare with our proposed SVB. Batch Normalization \cite{BatchNorm} is commonly used to accelerate training of modern DNNs, yet it has a potential risk of ill-conditioned layer transform, causing its incompatibility with OrthDNNs. In fact, direct use of BN in OrthDNNs makes it ineffective to enforce strict orthogonality of weight matrices. We propose Degenerate Batch Normalization (DBN) to enable its use with strict OrthDNNs. We also propose Bounded Batch Normalization (BBN) to remove the potential risk of ill-conditioned layer transform. We finally explain how OrthDNNs are used for convolutional kernels.

Denote parameters of a DNN collectively as $\Theta = \{ \mathbf{W}_l, \mathbf{b}_l\}_{l=1}^L$, where $\{b_l\}_{l=1}^L$ are bias terms. We discuss algorithms of strict or approximate OrthDNNs in the following context. Given a training set $\{\mathbf{x}_i, y_i\}_{i=1}^m$, we write the training objective as ${\cal{L}}\left(\{\mathbf{x}_i, y_i\}_{i=1}^m; \Theta\right)$. Training is based on SGD (or its variants \cite{Momentum}), which updates $\Theta$ via a simple rule of $\Theta^{t+1} \leftarrow \Theta^t - \eta \frac{\partial{\cal{L}}}{\partial{\Theta^t}}$, where $\eta$ is the learning rate, and the gradient $\frac{\partial{\cal{L}}}{\partial{\Theta^t}}$ is usually computed from a mini-batch of training examples. Network training proceeds by sampling for each iteration $t$ a mini-batch from $\{\mathbf{x}_i, y_i\}_{i=1}^m$, until a specified number of iterations or the training loss plateaus.

\subsection{The case of strict orthogonality}
\label{SecStiefelOptim}

Enforcing orthogonality of weight matrices during network training amounts to solving the following constrained optimization problem
\begin{eqnarray}\label{EqnStiefelConstrained}
\min_{\Theta = \{ \mathbf{W}_l, \mathbf{b}_l\}_{l=1}^L} {\cal{L}}\left( \{\mathbf{x}_i, y_i\}_{i=1}^m; \Theta\right) \nonumber \\ \mathrm{s.t.} \ \mathbf{W}_l \in {\cal{O}} \ \forall \ l\in \{1, \dots, L\} ,
\end{eqnarray}
where $\cal{O}$ stands for the set of matrices whose row or column vectors are orthonormal. For $\mathbf{W}_l$ of any $l^{th}$ layer, problem (\ref{EqnStiefelConstrained}) in fact constrains its solution set as a Riemannian manifold called Stiefel manifold, which is defined as $ {\cal{M}}_l = \{ \mathbf{W}_l \in \mathbb{R}^{n_l\times n_{l-1}} | \mathbf{W}_l^{\top}\mathbf{W}_l = \mathbf{I} \}$ assuming $n_l \geq n_{l-1}$, and is an embedded submanifold of the space $\mathbb{R}^{n_l\times n_{l-1}}$, where $\mathbf{I}$ is an identity matrix. In literature, optimization of a differentiable cost function on such a matrix manifold and its convergence analysis have been intensively studied \cite{MatrixManifoldOptimBook,SGDRiemannianManifold}. For completeness, we briefly present the solving algorithm of (\ref{EqnStiefelConstrained}) as follows.

Denote $T_{\mathbf{W}_l}{\cal{M}}_l$ as the tangent space to ${\cal{M}}_l$ at the current $\mathbf{W}_l \in {\cal{M}}_l$. First-order methods such as SGD first find a tangent vector $\Omega_{\mathbf{W}_l} \in T_{\mathbf{W}_l}{\cal{M}}_l$ that describes the steepest descent direction for the cost, and update $\mathbf{W}_l$ as $\mathbf{W}_l - \eta \Omega_{\mathbf{W}_l}$ with the step size $\eta$ that satisfies conditions of convergence, and then perform a retraction ${\cal{R}}_{\mathbf{W}_l}(-\eta \Omega_{\mathbf{W}_l})$ that defines a mapping from the tangent space to the Stiefel manifold, which can be achieved by ${\cal{R}}_{\mathbf{W}_l}(-\eta \Omega_{\mathbf{W}_l}) = {\cal{Q}}(\mathbf{W}_l - \eta \Omega_{\mathbf{W}_l})$, where the operator ${\cal{Q}}$ denotes the Q factor of QR matrix decomposition. To obtain the tangent vector $\Omega_{\mathbf{W}_l}$, one may project the gradient $\frac{\partial{\cal{L}}}{\partial{\mathbf{W}_l}}$ in the embedding space $\mathbb{R}^{n_l\times n_{l-1}}$ (or its momentum version \cite{Momentum}) onto the tangent space $T_{\mathbf{W}_l}{\cal{M}}_l$ by ${\cal{P}}_{\mathbf{W}_l}\frac{\partial{\cal{L}}}{\partial{\mathbf{W}_l}}$, where ${\cal{P}}_{\mathbf{W}_l}$ defines a projection operator according to the local geometry of $\mathbf{W}_l \in {\cal{M}}_l$. Convergence analysis for such a scheme to obtain the tangent vector is presented in \cite{OzayOkataniCNNKernelSubManifold}. In Appendix \ref{ApendixSecStiefelOptimAlgm}, we present the algorithmic details for optimization of weight matrices on the Stiefel manifolds.


\subsection{Achieving near orthogonality via Singular Value Bounding}
\label{SecSVBAlgm}

Constraining solutions of weight matrices of a DNN on their Stiefel manifolds is an interesting direction of research. It also supports analysis of theoretical properties as in \cref{sec:weight-spectr-gener} and the related works \cite{ExactSolution,OzayOkataniCNNKernelSubManifold}.  However, it arguably has the following shortcomings concerned with computation, empirical performance, and also compatibility with existing deep learning methods, which motivate us to address these shortcomings by developing new algorithms of approximate OrthDNNs.

\begin{itemize}
\item Strict constraining of weight matrices on the Stiefel manifolds requires expensive computations --- in particular, the operations of projecting the Euclidean gradient onto the tangent space and retraction onto the Stiefel manifold (Steps 2 and 4 in Appendix \ref{ApendixSecStiefelOptimAlgm}) dominate the costs in each iteration. If we allow the solutions slightly away from the manifolds, the expensive projection and retraction operations are not necessary to be performed in each iteration. Instead, similar pulling-back operations can be performed less frequently, e.g., in every a certain number of iterations, and consequently such a burden of pulling back is amortized.

\item Theorem \ref{thm:ge_contraction} gives a bound $GE(f_{S_m})$ of the expected error $R(f_{S_m})$ w.r.t the training error $R_m(f_{S_m})$. To achieve good performance on practical problems, both $R_m(f_{S_m})$ and $GE(f_{S_m})$ should be small. However, optimization of DNNs is characterized by proliferation of local optima/critical points \cite{TrustRegion4SaddlePoint,DeepLearningNoPoorLocalMinima}. When we are motivated to optimize weight matrices on their Stiefel manifolds, obtaining $\mathbf{W}_l \in {\cal{M}}_l$, $l \in \{1, \dots, L\}$, with $\Omega_{\mathbf{W}_l} = 0$, it is very likely that for a $\mathbf{W}_l$, there exists a better local optimum in the embedding Euclidean space that is slightly away from the manifold (e.g., $\Omega_{\mathbf{W}_l} = 0$ while $\frac{\partial{\cal{L}}}{\partial{\mathbf{W}_l}} \neq 0$, or the Euclidean gradient is in the complement null space of the current tangent space), and has a smaller $R_m(f_{S_m})$. If we allow the optimization to step away from, but still pivot around, the manifold, better solutions could be obtained by escaping from local optima on the manifold.

\item Successful training of modern DNNs depends heavily on BN \cite{BatchNorm}, a technique that can greatly improve training convergence and empirical results. However, as analyzed shortly in \cref{SecBNCompabibility}, BN would change the spectrum of singular values of each layer transform (i.e., the combined linear transform of each layer achieved by weight mapping and BN, as specified in (\ref{EqnBNLinearTransformEquivalent})). Consequently, the efforts spending on enforcing strict orthogonality of weight matrices become ineffectual. Algorithms of approximate OrthDNNs seem more compatible with BN transform.
\end{itemize}


To develop an algorithm of approximate OrthDNNs, we propose a simple yet effective network training method called Singular Value Bounding (SVB). SVB is a sort of projected SGD method and can be summarized as follows: SVB simply bounds, after every $T_{svb}$ iterations of SGD training, all the singular values of each $\mathbf{W}_l$, for $l = 1, \dots, L$, in a narrow band $[1/(1+\epsilon), (1+\epsilon)]$ around the value of $1$, where $\epsilon \geq 0$ is a specified small constant. Algorithm \ref{AlgmSVB} presents the details.


\begin{algorithm}[t]
{\footnotesize
\SetKwInOut{Input}{input}
\SetKwInOut{Output}{output}
\Input{A network of $L$ layers with trainable parameters $\Theta = \{ \mathbf{W}_l, \mathbf{b}_l \}_{l=1}^L$, training loss $\cal{L}$, learning rate $\eta$, the maximal number $T$ of training iterations, a specified number $T_{svb}$ of iteration steps, a small constant $\epsilon$}

Initialize $\Theta$ such that $\mathbf{W}_l^{\top}\mathbf{W}_l = \mathbf{I}$ or $\mathbf{W}_{l}\mathbf{W}_l^{\top} = \mathbf{I}$ for $l = 1, \dots, L$

\For{$t = 0, \dots, T-1$}{
Update $\Theta^{t+1} \leftarrow \Theta^t - \eta \frac{\partial{\cal{L}}}{\partial{\Theta^t}}$ using SGD based methods

\While{training proceeds for every $T_{svb}$ iterations}{

\For{$l=1, \dots, L$}{
Perform $[\mathbf{U}_l, \mathbf{\Sigma}_l, \mathbf{V}_l] = \mathrm{svd}(\mathbf{W}_l)$

Let $\{ \sigma_i^l \}_{i=1}^{n_l}$ be the diagonal entries of $\mathbf{\Sigma}_l$

\For{$i = 1, \dots, n_l$}{
$\sigma_i^l = 1+\epsilon \ \ \text{if} \ \ \sigma_i^l > 1 + \epsilon$

$\sigma_i^l = 1/(1+\epsilon) \ \ \text{if} \ \ \sigma_i^l < 1/(1+\epsilon)$
}
Update $\mathbf{W}_l \leftarrow \mathbf{U}_l \mathbf{\Sigma}_l \mathbf{V}_l^{\top}$ with the bounded diagonal entries $\{ \sigma_i^l \}_{i=1}^{n_l}$ of $\mathbf{\Sigma}_l$
}
}
}
\Output{Trained network with parameters $\Theta^T$ for inference}
\caption{Singular Value Bounding} \label{AlgmSVB}
}
\end{algorithm}

After each bounding step, optimization of SVB in fact proceeds in the embedding Euclidean space, to search for potentially better solutions, before next bounding step that pulls the solutions back onto ($\epsilon = 0$) or near ($\epsilon > 0$) the Stiefel manifolds. With annealed learning rate schedules, we observe empirical convergence of SVB. Compared with manifold optimization in \cref{SecStiefelOptim}, SVB is more efficient since the dominating computation of SVD is invoked only every a certain number of iterations. Experiments of image classification in \cref{SecExps} show that SVB sometimes outperforms the algorithm of strict OrthDNNs in \cref{SecStiefelOptim}, both of which outperform the commonly used SGD based methods, and in many cases with a large margin.

\subsection{Alternative algorithms for approximate OrthDNNs}

To achieve approximate OrthDNNs, one may alternatively penalize the main objective ${\cal{L}}\left( \{\mathbf{x}_i, y_i\}_{i=1}^m; \Theta\right)$ with an augmented term that encourages orthonormality of columns or rows of weight matrices, resulting in the following unconstrained optimization problem
\begin{eqnarray}\label{EqnSoftRegu}
\min_{\Theta = \{ \mathbf{W}_l, \mathbf{b}_l\}_{l=1}^L} {\cal{L}}\left( \{\mathbf{x}_i, y_i\}_{i=1}^m; \Theta\right) + \lambda \sum_{l=1}^L \| \mathbf{W}_l^{\top}\mathbf{W}_l - \mathbf{I}  \|_F^2 ,
\end{eqnarray}
where $\| \cdot \|_F$ denotes Frobenius norm, $\lambda$ is the penalty parameter,  and we have assumed $n_l \geq n_{l-1}$ for a certain layer $l$. By using increasingly larger values of $\lambda$, the problem (\ref{EqnSoftRegu}) approaches to achieve strict OrthDNNs. One can use SGD based methods to solve (\ref{EqnSoftRegu}), where the additional computation cost incurred by the regularizer is marginal. Soft regularization of the type (\ref{EqnSoftRegu}) is used in the related works \cite{ParsevalNet,BeyondGoodInit}.

To relax the requirement of $n_l \geq n_{l-1}$ assumed in (\ref{EqnSoftRegu}), an algorithm termed Spectral Restricted Isometry Property (SRIP) regularization, which leverages the matrix RIP condition \cite{CandesRIP}, is proposed in \cite{SRIP}, whose objective is written as
\begin{eqnarray}\label{EqnSRIP}
\min_{\Theta = \{ \mathbf{W}_l, \mathbf{b}_l\}_{l=1}^L} {\cal{L}}\left( \{\mathbf{x}_i, y_i\}_{i=1}^m; \Theta\right) + \kappa \sum_{l=1}^L \sigma_{\max}(\mathbf{W}_l^{\top}\mathbf{W}_l - \mathbf{I}) ,
\end{eqnarray}
where $\kappa$ is a penalty parameter, and $\sigma_{\max}(\cdot)$ denotes the spectral norm of a matrix. Although computation of (\ref{EqnSRIP}) involves expensive eigen-decomposition, it can be efficiently approximated via power iteration method. One may refer to \cite{SRIP} for the solving equation. In this work, we compare the alternative (\ref{EqnSoftRegu}) and (\ref{EqnSRIP}) with our proposed SVB.



\subsection{Compatibility with Batch Normalization}
\label{SecBNCompabibility}

We start this section by showing that the original design of Batch Normalization \cite{BatchNorm} is incompatible with our proposed OrthDNNs. Technically, for a network layer that computes, before the nonlinear activation, $\mathbf{h} = \mathbf{W}\mathbf{x} \in \mathbb{R}^n$, BN inserts a normalization denoted as $\textrm{BN}(\mathbf{h}) = \textrm{BN}(\mathbf{W}\mathbf{x})$, where we have ignored the bias term for simplicity. BN in fact applies the following linear transformation to $\mathbf{h}$
\begin{eqnarray}\label{EqnBNLinearTransform}
\textrm{BN}(\mathbf{h}) = \bvec{\Upsilon}\bvec{\Phi} (\mathbf{h} - \mathbf{\mu}) + \mathbf{\beta} ,
\end{eqnarray}
where each entry of $\mathbf{\mu} \in \mathbb{R}^{n}$ is the output mean at each of the $n$ neurons of the layer, the diagonal matrix $\bvec{\Phi} \in \mathbb{R}^{n\times n}$ contains entries $\{1/\phi_i\}_{i=1}^n$ that is the inverse of the neuron-wise output standard deviation $\phi_i$ (obtained by adding a small constant to the variance for numerical stability), $\bvec{\Upsilon} \in \mathbb{R}^{n\times n}$ is a diagonal matrix containing trainable scalar parameters $\{ \upsilon_i \}_{i=1}^n$, and $\mathbf{\beta} \in \mathbb{R}^n$ is a trainable bias term. Note that during training, $\mu$ and $\phi$ for each neuron are computed using mini-batch examples, and during inference they are fixed representing the statistics of all the training population, which are usually obtained by running average. Thus the computation (\ref{EqnBNLinearTransform}) for each example is deterministic after network training.

Inserting $\mathbf{h} = \mathbf{W}\mathbf{x}$ into (\ref{EqnBNLinearTransform}) we get
\begin{eqnarray}\label{EqnBNLinearTransformEquivalent}
\textrm{BN}(\mathbf{x}) = \widetilde{\mathbf{W}}\mathbf{x} + \tilde{\mathbf{b}} \ \ \textrm{s.t.} \ \ \widetilde{\mathbf{W}} = \bvec{\Upsilon}\bvec{\Phi}\mathbf{W} \ \ \tilde{\mathbf{b}} = \mathbf{\beta} - \bvec{\Upsilon}\bvec{\Phi}\mathbf{\mu} ,
\end{eqnarray}
which is simply a standard layer with change of variables. The following lemma suggests that BN is incompatible with OrthDNNs: even though $\mathbf{W}$ is enforced to have orthonormal rows or columns, BN would change the conditioning of layer transform, i.e., spectrum of singular values of $\widetilde{\mathbf{W}}$, by learning $\bvec{\Upsilon}$ and $\bvec{\Phi}$ whose product does not necessarily contain diagonal entries $\{ \upsilon_i/\phi_i \}_{i=1}^n$ of equal value.

\begin{lemma}
  \label{lm:BBN}
  For a matrix $\mathbf{W} \in \mathbb{R}^{M\times N}$ with singular values of all $1$, and a diagonal matrix $\mathbf{G} \in \mathbb{R}^{M\times M}$ with nonzero entries $\{ g_i \}_{i=1}^M$, let $g_{\max} = \max(|g_1|, \dots, |g_M|)$ and $g_{\min} = \min(|g_1|, \dots, |g_M|)$, the singular values of $\widetilde{\mathbf{W}} = \mathbf{G}\mathbf{W}$ is bounded in $[g_{min}, g_{\max}]$. When $\mathbf{W}$ is fat, i.e., $M \le N$, and $\textrm{rank}(\mathbf{W}) = M$, singular values of $\widetilde{\mathbf{W}}$ are exactly $\{ |g_i| \}_{i=1}^M$.
\end{lemma}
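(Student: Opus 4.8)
The plan is to prove the two claims separately, in both cases reducing everything to the action of the diagonal matrix $\mathbf{G}$ on unit vectors. Throughout I will use that $\mathbf{G}$ is diagonal and invertible (all $g_i \neq 0$), so that for \emph{any} unit vector $\mathbf{u} \in \mathbb{R}^M$ one has $g_{\min} \le \|\mathbf{G}\mathbf{u}\| \le g_{\max}$; this is immediate from $\|\mathbf{G}\mathbf{u}\|^2 = \sum_{i=1}^M g_i^2 u_i^2$ together with $\sum_i u_i^2 = 1$. I will also record the SVD $\mathbf{W} = \mathbf{U}\mathbf{\Sigma}\mathbf{V}^\top$; because all singular values of $\mathbf{W}$ equal $1$, the map $\mathbf{W}$ restricts to a genuine \emph{isometry} on its row space, i.e. $\|\mathbf{W}\mathbf{v}\| = \|\mathbf{v}\|$ for every $\mathbf{v} \in \mathrm{range}(\mathbf{W}^\top)$ (expand $\mathbf{v}$ in the right singular vectors, all of which have singular value $1$).

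For the general bound (first claim) I would argue through a right singular vector. Let $\sigma$ be any nonzero singular value of $\widetilde{\mathbf{W}} = \mathbf{G}\mathbf{W}$ with unit right singular vector $\mathbf{v}$, so $\sigma = \|\widetilde{\mathbf{W}}\mathbf{v}\|$. The crucial observation is that $\mathbf{v} \in \mathrm{range}(\widetilde{\mathbf{W}}^\top)$, and since $\widetilde{\mathbf{W}}^\top = \mathbf{W}^\top\mathbf{G}$ with $\mathbf{G}$ invertible, this space equals $\mathrm{range}(\mathbf{W}^\top)$, the row space of $\mathbf{W}$. Hence by the isometry property $\mathbf{u} := \mathbf{W}\mathbf{v}$ is again a \emph{unit} vector, and so $\sigma = \|\mathbf{G}\mathbf{W}\mathbf{v}\| = \|\mathbf{G}\mathbf{u}\| \in [g_{\min}, g_{\max}]$ by the diagonal-matrix estimate above. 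Since $\mathbf{G}$ is invertible and $\mathbf{W}$ has rank $\min(M,N)$, the product $\widetilde{\mathbf{W}}$ also has rank $\min(M,N)$, so \emph{all} of its singular values are nonzero and therefore lie in $[g_{\min}, g_{\max}]$.

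For the exact values in the fat case (second claim) I would instead compute the Gram matrix on the output side. When $M \le N$ and $\mathrm{rank}(\mathbf{W}) = M$ with all singular values $1$, the rows of $\mathbf{W}$ are orthonormal, i.e. $\mathbf{W}\mathbf{W}^\top = \mathbf{I}_M$. Therefore
\[
  \widetilde{\mathbf{W}}\widetilde{\mathbf{W}}^\top = \mathbf{G}\mathbf{W}\mathbf{W}^\top\mathbf{G}^\top = \mathbf{G}\mathbf{I}_M\mathbf{G} = \mathbf{G}^2 ,
\]
which is diagonal with entries $\{g_i^2\}_{i=1}^M$. As $\widetilde{\mathbf{W}}$ is $M \times N$ with $M \le N$, its $M$ singular values are exactly the square roots of the eigenvalues of $\widetilde{\mathbf{W}}\widetilde{\mathbf{W}}^\top$, namely $\{|g_i|\}_{i=1}^M$, as claimed.

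I expect the only genuine subtlety to be the lower bound $\sigma \ge g_{\min}$ in the first claim. The naive estimate $\|\mathbf{G}\mathbf{W}\mathbf{v}\| \ge g_{\min}\|\mathbf{W}\mathbf{v}\|$ is useless when $\mathbf{W}$ is fat, since then $\|\mathbf{W}\mathbf{v}\|$ can be strictly smaller than $\|\mathbf{v}\| = 1$ for a generic $\mathbf{v}$. The fix is exactly the step above: restricting to a right singular vector of $\widetilde{\mathbf{W}}$ confines $\mathbf{v}$ to the row space of $\mathbf{W}$, where $\mathbf{W}$ acts as an exact isometry and hence $\|\mathbf{W}\mathbf{v}\| = 1$. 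Everything else is routine bookkeeping about which singular values are nonzero.
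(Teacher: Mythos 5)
Your proof is correct, but it takes a genuinely different route from the paper's on both claims. For the bound $[g_{\min}, g_{\max}]$, the paper argues through the variational characterization of \emph{extreme} singular values: it bounds $\tilde{\sigma}_1$ at a maximizer of the Rayleigh quotient using $\|\mathbf{G}\mathbf{W}\mathbf{x}\| \le \|\mathbf{G}\|_2\|\mathbf{W}\mathbf{x}\|$, then writes $\mathbf{W} = \mathbf{G}^{-1}\widetilde{\mathbf{W}}$ and applies the same submultiplicativity to get $\tilde{\sigma}_P \ge |g_{\min}|$. You instead take an arbitrary right singular vector $\mathbf{v}$ of $\widetilde{\mathbf{W}}$ with nonzero singular value, note that $\mathbf{v} \in \mathrm{range}(\widetilde{\mathbf{W}}^\top) = \mathrm{range}(\mathbf{W}^\top)$ because $\mathbf{G}$ is invertible, use that $\mathbf{W}$ is an isometry on its row space, and bound $\|\mathbf{G}\mathbf{u}\|$ coordinatewise for the unit vector $\mathbf{u} = \mathbf{W}\mathbf{v}$. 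Your route buys something concrete: the identities $\min_{\mathbf{x}\ne 0}\|\mathbf{W}\mathbf{x}\|/\|\mathbf{x}\| = \sigma_P$ and $\min_{\mathbf{x}\ne 0}\|\widetilde{\mathbf{W}}\mathbf{x}\|/\|\mathbf{x}\| = \tilde{\sigma}_P$ that the paper relies on hold only when $M \ge N$ (for a fat matrix both minima are $0$ over the nontrivial null space), so the paper's lower-bound argument is rigorous only in the tall case, the fat case being rescued by its separate exact computation; your singular-vector argument, designed precisely to dodge the null space, is uniformly valid --- the issue you flag as ``the only genuine subtlety'' is exactly the one the paper glosses over. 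For the exact values in the fat case, the paper constructs an explicit SVD $\widetilde{\mathbf{W}} = \mathbf{I}\left[\mathbf{G}, \mathbf{0}\right]\mathbf{V}^\top$ with $\mathbf{V} = [\mathbf{W}^\top, \mathbf{W}^{\perp\top}]$, which requires bookkeeping of signs and ordering of the $g_i$; your Gram-matrix computation $\widetilde{\mathbf{W}}\widetilde{\mathbf{W}}^\top = \mathbf{G}^2$ reaches the same conclusion with less fuss. Both proofs are complete, but yours is the more robust of the two.
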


See the proof in Appendix \ref{ApendixSecBBNProof}.

To make BN compatible with \emph{strict OrthDNNs}, we propose \emph{Degenerate Batch Normalization (DBN)} that learns layer-wise $\bar{\upsilon}$ and $\bar{\phi}$ instead of neuron-wise $\{ \upsilon_i \}_{i=1}^n$ and $\{ \phi_i \}_{i=1}^n$, so that the learned $\bvec{\Upsilon}$ and $\bvec{\Phi}$ respectively contain diagonal entries of equal value. Such a $\bvec{\Upsilon}$ is learned simply by using a single trainable parameter $\bar{\upsilon}$ shared by all $n$ neurons of the layer. To learn such a $\bvec{\Phi}$, DBN still computes neuron-wise $\{ \phi_i \}_{i=1}^n$ in each iteration of training, but uses running average to learn $\bar{\phi} = \frac{1}{n}\sum_{i=1}^n \phi_i$ that would be shared by the $n$ neurons. The proposed DBN enjoys the benefit of neuron-wise normalization in BN, and when training converges, it learns a layer transform $\widetilde{\mathbf{W}} = \bvec{\Upsilon}\bvec{\Phi}\mathbf{W} = \bar{\upsilon}/\bar{\phi} \mathbf{W}$ whose conditioning is the same as that of $\mathbf{W}$, thus achieving compatibility with strict OrthDNNs.

Section \ref{SecSVBAlgm} discusses the potential advantages of approximate OrthDNNs over the strict ones. To make BN compatible with \emph{approximate OrthDNNs}, especially with our proposed SVB method, we propose \emph{Bounded Batch Normalization (BBN)} that controls the variations among $\{ \upsilon_i/\phi_i \}_{i=1}^n$, so that the conditioning of layer transform is not severely affected by $\bvec{\Upsilon}\bvec{\Phi}$. More specifically, BBN computes $\alpha = \frac{1}{n}\sum_{i=1}^n \upsilon_i/\phi_i$ in each iteration of training, and bounds each of $\{ \frac{1}{\alpha} \upsilon_i/\phi_i \}_{i=1}^n$ in a narrow band $[1/(1+\tilde{\epsilon}), (1+\tilde{\epsilon})]$ around the value of $1$, where $\tilde{\epsilon} \geq 0$ is a scalar parameter. Algorithm \ref{AlgmBBN} presents details of the proposed BBN.

\begin{algorithm}[t]
{\footnotesize
\SetKwInOut{Input}{input}
\SetKwInOut{Output}{output}
\Input{A network with $L$ BN layers, trainable parameters $\{ \bvec{\Upsilon}^t_l \}_{l=1}^L$, $\{ \mathbf{\beta}^t_l \}_{l=1}^L$, and statistics $\{ \mathbf{\mu}^t_l \}_{l=1}^L$, $\{ \bvec{\Phi}^t_l \}_{l=1}^L$ of BN layers at iteration $t$, a small constant $\tilde{\epsilon}$ }

Update to get $\{ \bvec{\Upsilon}^{t+1}_l \}_{l=1}^L$ from $\{ \bvec{\Upsilon}^t_l \}_{l=1}^L$ (and $\{ \mathbf{\beta}^{t+1}_l \}_{l=1}^L$ from $\{ \mathbf{\beta}^t_l \}_{l=1}^L$), using SGD based methods

Update to get $\{ \bvec{\Phi}^{t+1}_l \}_{l=1}^L$ from $\{ \bvec{\Phi}^t_l \}_{l=1}^L$ (and $\{ \mathbf{\mu}^{t+1}_l \}_{l=1}^L$ from $\{ \mathbf{\mu}^t_l \}_{l=1}^L$), using running average over statistics of mini-batch examples

\For{$l = 1, \dots, L$}{
Let $\{ \upsilon_i \}_{i=1}^{n_l}$ and $\{1/\phi_i\}_{i=1}^{n_l}$ be respectively the diagonal entries of $\bvec{\Upsilon}^{t+1}_l$ and $\bvec{\Phi}^{t+1}_l$

Let $\alpha = \frac{1}{n_l}\sum_{i=1}^{n_l} \upsilon_i/\phi_i$

\For{$i = 1, \dots, n_l$}{

$\upsilon_i = \alpha\phi_i(1+\tilde{\epsilon}) \ \ \text{if} \ \ \frac{1}{\alpha}\upsilon_i/\phi_i > 1 + \tilde{\epsilon}$

$\upsilon_i = \alpha\phi_i/(1+\tilde{\epsilon)} \ \ \text{if} \ \ \frac{1}{\alpha}\upsilon_i/\phi_i < 1/(1+\tilde{\epsilon)}$
}
}
\Output{Updated BN parameters and statistics at iteration $t+1$}
\caption{Bounded Batch Normalization} \label{AlgmBBN}
}
\end{algorithm}

The introduction of DBN makes it possible to empirically compare strict and approximate OrthDNNs in the context of modern architectures, which is presented in \cref{SecExps}. Experiments in \cref{SecExps} also show that performance is improved when using BBN instead of BN, confirming the benefit by resolving BN's compatibility with OrthDNNs.

\subsection{Orthogonal Convolutional Neural Networks}
\label{SecOrthCNNs}

In previous sections, we present theories and algorithms of OrthDNNs by writing their layer-wise weights in matrix forms. When applying DNNs to image data, one is actually using networks with convolutional layers. For an $l^{th}$ convolutional layer with weight tensor of the size $n_l\times n_{l-1}\times n_h\times n_w$, where $n_h$ and $n_w$ denote the height and width of the convolutional kernel, we choose to convert the tensor as a matrix of the size $n_l\times n_{l-1} n_h n_w$ based on the following rational. Natural images are usually modeled by first learning filters from (densely overlapped) local patches, and then applying the thus learned filters to images to aggregate the corresponding local statistics. The convolutional layer in fact linearly transforms the $n_{l-1}$ input feature maps in the same way, by applying each of $n_l$ filters of the size $n_{l-1}\times n_h\times n_w$ to $n_{l-1}$ patches of the size $n_h\times n_w$ in a sliding window fashion, resulting in local responses that are arrayed in the form of $n_l$ feature maps, which have the same size as that of input feature maps when padding the boundaries. In other words, the convolutional layer applies linear transformation, using $n_l$ filters, to $n_{l-1} n_h n_w$-dimensional instances that are collected from local patches of input feature maps. Correspondingly, we choose to convert the weight tensor containing the $n_l$ filters of dimension $n_{l-1} n_h n_w$ to its matrix form, and apply to it specific algorithms of OrthDNNs.

However, we note that our way of forming the weight matrices does not exactly specify linear transformations of convolutional layers. For the $l^{th}$ layer, its exact weight matrix is in fact a function of the kernel tensor and contains doubly block circulant submatrices \cite{sedghi2018singular}. Our preliminary experiments show that OrthDNNs based on such forms are effective to regularize network training as well. We would conduct further investigations with additional experiments in future research.

\section{Experiments}
\label{SecExps}

We present in this section extensive experiments of image classification to verify the efficacy of OrthDNNs. We are particularly interested in how algorithms of strict or approximate OrthDNNs provide regularization to various architectures of modern DNNs, such as ConvNets \cite{LeCun98, VGGNet}, ResNets \cite{ResNet, PreActResNet}, DenseNet \cite{DenseNet}, and ResNeXt \cite{ResNeXt}. We use the benchmark datasets of CIFAR10, CIFAR100 \cite{Cifar}, and ImageNet \cite{ILSVRC15} for these experiments. We compare empirical performance and efficiency among different algorithms of strict and approximate OrthDNNs. For some of these comparisons, we also investigate behaviours of OrthDNNs under regimes of both small and large sizes of training samples, and robustness of OrthDNNs against corruptions that are commonly encountered in natural images, in order to better understand the empirical strength of OrthDNNs. For network training, we use SGD with momentum and initialize networks using orthogonal weight matrices, where the momentum is set as $0.9$ with a weight decay of $0.0001$. When our proposed SVB is turned on, we apply it to weight matrices of all layers after every epoch of training.

\subsection{Comparative studies on algorithms of strict and approximate OrthDNNs}
\label{SecExpsStrictStudies}

\begin{figure*}[t]
  \centering
  \includegraphics[scale=0.4]{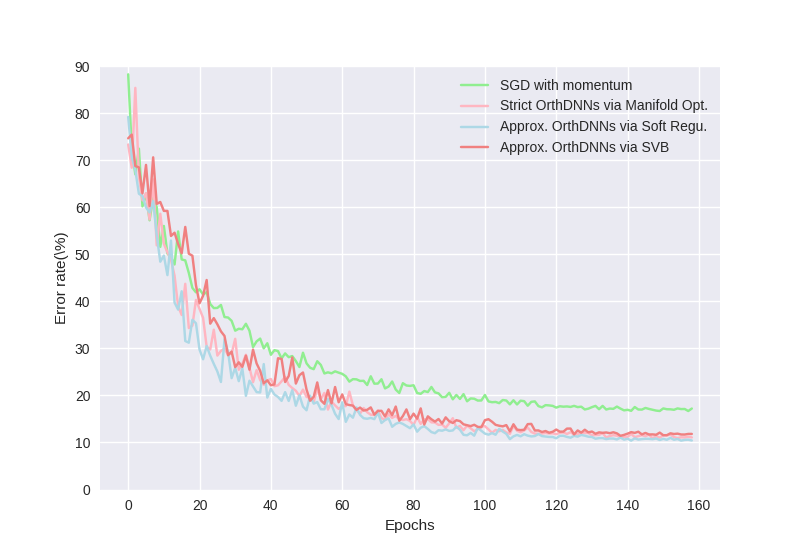}
  \includegraphics[scale=0.4]{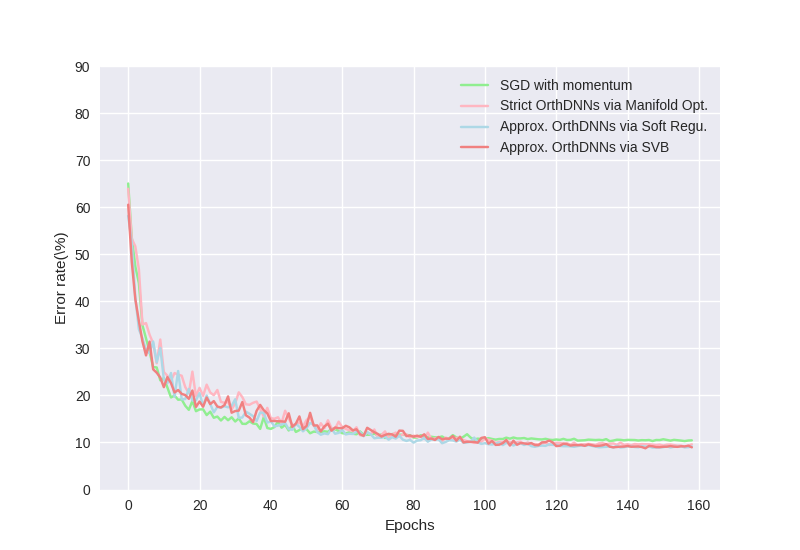}  \\
  \caption{ {\small Validation curves of strict and approximate OrthDNNs on the CIFAR10 dataset \cite{Cifar} using architectures of a ConvNet (left) and a ResNet (right) respectively of $20$ weight layers. }}
  \label{FigExpControlStudies}
\end{figure*}

\begin{table}[t]
\caption{{\small Comparison of strict and approximate OrthDNNs on the CIFAR10 dataset \cite{Cifar}, using ConvNet and ResNet architectures respectively of $20$ weight layers (referring to the main text for their specifics). Degenerate batch normalization is used due to its compatibility with strict OrthDNNs.  } }
\label{TableExpControlStudies}
\begin{center}
\begin{tabular}{c|c|c|c}
\hline
  \thead{\scriptsize Network} &
  \thead{\scriptsize Training method} &
  \thead{\scriptsize Error rate ($\%$)} &
  \makecell{\scriptsize Averaged time \\  per iter. (sec.)}\\
\hline
\multirow{3}{*}{\makecell{\scriptsize \\ \\ ConvNet}}	&
{\scriptsize SGD with momentum}   &  $16.68$	& $0.0702$\\
\cline{2-4}
~&\makecell{\scriptsize Strict OrthDNNs \\via Manifold Opt.} & $10.85$	& $0.2034$\\
\cline{2-4}
~&\makecell{\scriptsize Approx. OrthDNNs \\via Soft Regu.}& $10.39$	& $0.0930$\\
\cline{2-4}
~&\makecell{\scriptsize Approx. OrthDNNs \\via SRIP}& $10.67$	& $0.0760$ \\
\cline{2-4}
~&\makecell{\scriptsize Approx. OrthDNNs \\via SVB}& $11.41$	& $0.0718$\\
\hline

\multirow{3}{*}{\makecell{\scriptsize \\ \\ ResNet \\   }}	&
{\scriptsize SGD with momentum}   &  $10.27$	& $0.0754$\\
\cline{2-4}
~&\makecell{\scriptsize Strict OrthDNNs \\via Manifold Opt.}& $9.03$	& $0.2042$\\
\cline{2-4}
~&\makecell{\scriptsize Approx. OrthDNNs \\via Soft Regu.}& $8.91$	& $0.0844$\\
\cline{2-4}
~&\makecell{\scriptsize Approx. OrthDNNs \\via SRIP}& $8.72$	& $0.0832$\\
\cline{2-4}
~&\makecell{\scriptsize Approx. OrthDNNs \\via SVB}& $8.76$	& $0.0768$\\
\hline
\end{tabular}
\end{center}
\end{table}

In this section, we use architectures of ConvNet and ResNet on CIFAR10 to study the behaviours of strict and approximate OrthDNNs.  The CIFAR10 dataset consists of $60,000$ $32\times 32$ color images of $10$ object categories ($50,000$ training and $10,000$ testing ones). We use raw images without pre-processing. Data augmentation follows the standard manner in \cite{DeeplySupervisedNet}: during training, we zero-pad $4$ pixels along each image side, and sample a $32\times 32$ region crop from the padded image or its horizontal flip; during testing, we use the original non-padded image. Our ConvNet architectures follow \cite{VGGNet,ResNet}. Each network starts with a conv layer of $16$ $3\times 3$ filters, and then sequentially stacks three types of $2X$ conv layers of $3\times 3$ filters, each of which has the feature map sizes of $32$, $16$, and $8$, and filter numbers of $16$, $32$, and $64$, respectively; spatial sub-sampling of feature maps is achieved by conv layers of stride $2$; the network ends with a global average pooling and a fully-connected layer. The ResNet construction is based on the ConvNets presented above, where we use an ``identity shortcut'' to connect every two conv layers of $3\times 3$ filters and use a ``projection shortcut'' when sub-sampling of feature maps is needed; we adopt the pre-activation version \cite{PreActResNet}. Thus, for both types of networks, we have $6X+2$ weight layers in total. We set $X = 3$ for experiments in this section, giving networks of $20$ weight layers.

The DBN proposed in \cref{SecBNCompabibility} is designed to be compatible with strict OrthDNNs. We use DBN to enable training and comparison of strict and approximate OrthDNNs on the networks constructed above. We implement strict OrthDNNs as the algorithm presented in \cref{SecStiefelOptim}. We use our proposed SVB and those in \cite{ParsevalNet,BeyondGoodInit,SRIP} (i.e., the problems (\ref{EqnSoftRegu}) and (\ref{EqnSRIP})) for approximate OrthDNNs. The learning rates start at $0.1$ and end at $0.001$, and decay every two epochs until the end of $160$ epochs of training, where we set the mini-batch size as $128$. We fix the parameter $\epsilon$ of SVB as $0.05$, while both $\lambda$ of soft regularization in (\ref{EqnSoftRegu}) and $\kappa$ of SRIP in (\ref{EqnSRIP}) are optimally tuned as $0.1$.


Table \ref{TableExpControlStudies} gives the results with the curves of training convergence plotted in \cref{FigExpControlStudies}. Table \ref{TableExpControlStudies} shows that on both of the two networks, algorithms of strict and approximate OrthDNNs outperform standard SGD based method, confirming the improved generalization by their regularization of network training. Moreover, approximate OrthDNNs via either SVB, soft regularization, or SRIP perform as well as strict ones, but at a much lower computational cost \footnote{In \cref{TableExpControlStudies}, the respective dominating computations of QR decomposition for manifold optimization and singular value decomposition for SVB are based on CUDA implementation.}, suggesting their advantage in practical use. Due to the prohibitive computation of strict OrthDNNs on modern architectures of larger sizes, we choose to use approximate OrthDNNs in subsequent experiments, and correspondingly use BN or our proposed BBN to replace DBN.

\subsection{Comparison of hard and soft regularization for approximate OrthDNNs}
\label{SecExpsApproStudies}

\begin{table}[t]
\caption{{\small Comparison of approximate OrthDNNs via hard and soft regularization on the CIFAR10 dataset \cite{Cifar}, using a ResNet of $68$ weight layers. Each setting is run for five times, and results are reported in the format of best (mean $\pm$ standard deviation).}}
\label{TableResNetAblation}
\begin{center}
\begin{tabular}{c|c}
\hline
  \thead{\scriptsize Training method} &
  \thead{\scriptsize Error rate ($\%$)} \\
\hline
{\scriptsize SGD with momentum + BN}   &  $6.25$ ($6.43 \pm 0.15$)  \\
\hline
{\scriptsize Soft Regularization + BN} & $6.12$ ($6.28 \pm 0.12$)\\
\hline
{\scriptsize SRIP + BN} & $5.86$ ($5.95 \pm 0.08$) \\
\hline
{\scriptsize SVB + BN}     &  $5.84$ ($5.96 \pm 0.17$)  \\
\hline
{\scriptsize Soft Regularization + BBN} & $6.22$ ($6.30 \pm 0.07$) \\
\hline
{\scriptsize SRIP + BBN} & $5.99$ ($6.10 \pm 0.11$)  \\
\hline
{\scriptsize SVB + BBN}    &  $5.79$ ($5.88 \pm 0.07$) \\
\hline
\end{tabular}
\end{center}
\end{table}

In this section, we study algorithms of approximate OrthDNNs by comparing our proposed SVB with soft regularization \cite{ParsevalNet,BeyondGoodInit} and SRIP \cite{SRIP}. The experiments are conducted on CIFAR10 using a pre-activation version of ResNet constructed in the same way as in \cref{SecExpsStrictStudies}. Setting $X=11$ gives a total of $68$ weight layers. To train the network, we use learning rates that start at $0.5$ and end at $0.001$, and decay every two epochs until the end of $160$ epochs of training, where we set the mini-batch size as $128$. Comparison is made with the baseline of standard SGD with momentum. We also switch BBN on or off to verify its effectiveness. We fix $\epsilon$ of SVB as $0.5$, while the penalty $\lambda$ of soft regularization and $\kappa$ of SRIP are optimally tuned as $0.005$ and $0.01$ respectively. We fix $\tilde{\epsilon}$ of BBN as $0.2$. We run each setting of experiments for five times, and report results in the format of best (mean $\pm$ standard deviation).

Table \ref{TableResNetAblation} shows that approximate OrthDNNs via SVB, soft regularization, and SRIP provide effective regularization to network training, and SVB and SRIP outperform soft regularization with a noticeable margin. Compared with BN, our proposed BBN can better regularize training and give slightly improved performance. Note that algorithmic design of BBN may not be compatible with soft regularization and RRIP, which explains the degraded performance when using them together.

\subsection{Experiments with Modern Architectures}
\label{SecExpModernArch}

In this section, we investigate how our proposed SVB and BBN methods provide regularization to modern architectures of ResNet\cite{PreActResNet}, Wide ResNet\cite{WideResNet}, DenseNet\cite{DenseNet}, and ResNeXt\cite{ResNeXt}. We use CIFAR10, CIFAR100 \cite{Cifar}, and ImageNet \cite{ILSVRC15} for these experiments. The CIFAR100 dataset has the same number of $32\times 32$ color images as CIFAR10 does, but it has $100$ object categories where each category contains one-tenth images of those of CIFAR10. We use data augmentation in the same way as for CIFAR10. The ImageNet dataset contains $1.28$ million images of $1,000$ categories for training, and $50,000$ images for validation. We use data augmentation as in \cite{ResNeXt}.

For experiments on CIFAR10 and CIFAR100, we use the following specific architectures. ResNet is constructed in the same way as in \cref{SecExpsStrictStudies}; we set $X=9$ here giving a total of $56$ weight layers. Wide ResNet is the same as ``WRN-28-10'' in \cite{WideResNet}. ResNeXt is the same as ``ResNeXt-$29$ ($16\times 64$d)'' in \cite{ResNeXt}, i.e., the depth $L=29$, cardinality $C=16$, and the feature width in each cardinal branch $d=64$. We use consistent hyper-parameters to train these architectures. The learning rates start at $0.5$ and end at $0.001$, and decay every two epochs until the end of $300$ epochs of training, where we set the mini-batch size as $64$ --- note that this schedule with more training epochs and smaller mini-batch size usually gives better empirical performance than the training schedule used in \cref{SecExpsApproStudies} does. We fix $\epsilon$ and $\tilde{\epsilon}$ of SVB and BBN as $0.5$ and $0.2$ respectively. Table \ref{TableCIFAR10Comp} confirms that SVB and BBN improve generalization of various architectures. We also observe that improvements on CIFAR100 are generally greater than those on CIFAR10, which may be due to the problem nature of smaller sample size for CIFAR100. We will investigate how our methods perform with varying sample sizes more thoroughly in the subsequent section.


For experiments on ImageNet, we use top-performing models of the following architectures: ``ResNet-152'' of \cite{PreActResNet}, ``DenseNet-264'' of \cite{DenseNet}, and ``ResNeXt-$101$ (64$\times 4$d)'' of \cite{ResNeXt}. To train these models, we use the same hyper-parameters as respectively reported in these methods. The parameters $\epsilon$ and $\tilde{\epsilon}$ of SVB and BBN are fixed as $0.5$ and $0.5$ respectively. Results in \cref{TableImageNetComp} confirm that approximate OrthDNNs via our proposed methods improve generalization by providing effective regularization to large-scale learning.



\begin{table}[t]
  \caption{{\small Error rates ($\%$) on the CIFAR10 and CIFAR100 \cite{Cifar} datasets when applying our proposed SVB and BBN to various modern architectures (referring to the main text for their specifics).  }}
  \label{TableCIFAR10Comp}
  \begin{center}
    \begin{tabular}{c|c|c}
    \hline
     \thead{\scriptsize Method} &
     \thead{\scriptsize CIFAR$10$}  &
     \thead{\scriptsize CIFAR$100$}  \\
    \hline
    {\scriptsize ResNet W/O SVB+BBN}          &  $5.68$ & $27.71$ \\
    {\scriptsize ResNet WITH SVB+BBN}         &  $5.28$ & $26.47$ \\
    \hline
    {\scriptsize Wide ResNet W/O SVB+BBN}     &  $3.78$ & $20.02$  \\
    {\scriptsize Wide ResNet WITH SVB+BBN}    &  $3.24$ & $18.75$  \\
    \hline
    {\scriptsize ResNeXt W/O SVB+BBN}         &  $4.12$ & $20.65$ \\
    {\scriptsize ResNeXt WITH SVB+BBN}        &  $3.33$ & $16.94$ \\
   	\hline
    \end{tabular}
    \end{center}
\end{table}

\begin{table}[t]
  \caption{{\small Error rates ($\%$) on the validation set of ImageNet \cite{ILSVRC15} when applying our proposed SVB and BBN to various modern architectures (referring to the main text for their specifics). Results are based on single-crop testing of the size $320\times 320$. }}
  \label{TableImageNetComp}
  \begin{center}
    \begin{tabular}{c|c|c}
    \hline
     \thead{\scriptsize Method} &
     \thead{\scriptsize Top-1 error}  &
     \thead{\scriptsize Top-5 error}  \\
    \hline
    {\scriptsize ResNet W/O SVB+BBN}          &  $21.00$ & $5.72$ \\
    {\scriptsize ResNet WITH SVB+BBN}         &  $20.74$ & $5.35$  \\
    \hline
    {\scriptsize DenseNet W/O SVB+BBN}        &  $22.32$ & $6.33$  \\
    {\scriptsize DenseNet WITH SVB+BBN}       &  $21.80$ & $5.83$  \\
    \hline
    {\scriptsize ResNeXt W/O SVB+BBN}         &  $19.39$ & $4.43$  \\
    {\scriptsize ResNeXt WITH SVB+BBN}        &  $18.89$ & $4.27$  \\
   	\hline
    \end{tabular}
    \end{center}
\end{table}

\subsection{Effects of Varying Sample Sizes}
\label{SecExpVaryingSampleSize}

We are also interested in the efficacy of SVB and BBN for problems with varying sizes of training samples. To this end, we respectively sample $1/10$, $1/5$, $1/2$, or all of training images per category from ImageNet \cite{ILSVRC15}, which constitute our ImageNet training subsets of varying sizes. We train the ``ResNeXt-101 (64$\times$4d)'' model of \cite{ResNeXt} in the same way as in \cref{SecExpModernArch} for this investigation. Fig. \ref{FigSmallImageNetComp} shows that SVB and BBN consistently improve classification across the regimes from small to large sizes of training samples, and the improvements are more obvious for the smaller ones. 

\begin{figure}[t]
  \centering
  \includegraphics[scale=0.17]{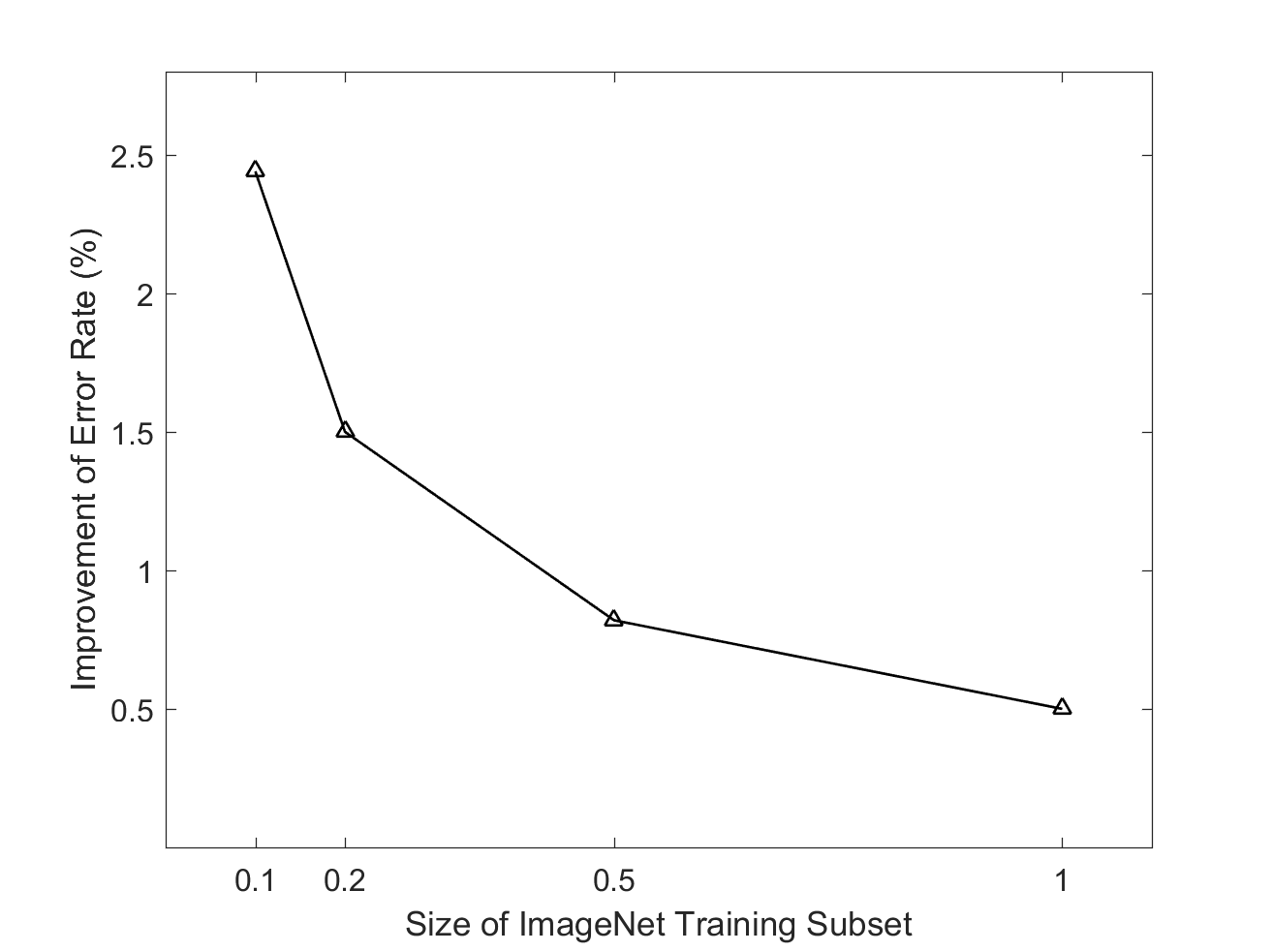}  \\
  \caption{ {\small Regularization effects of our proposed SVB and BBN for varying sizes of training samples. Results in terms of top-1 error rate improvement ($\%$) are obtained by training ResNeXt-101 \cite{ResNeXt} on ImageNet subsets that are constructed by respectively sampling $1/10$, $1/5$, $1/2$, or all of training images per category from ImageNet. Our methods regularize network training and achieve improved results over the respective baselines of $44.10\%$, $34.21\%$, $25.32\%$, and $19.39\%$. Results are based on single-crop testing of the size $320\times 320$. }}
  \label{FigSmallImageNetComp}
\end{figure}

\subsection{Robustness against Common Corruptions}
\label{SecExpCommonCorruptionRubustness}

We have shown in previous experiments that OrthDNNs, particularly our proposed SVB and BBN, have better generalization to testing samples that are drawn from the same distributions of training ones. In this section, we investigate the robustness of OrthDNNs when testing samples are corrupted such that they are getting away from the distributions of training ones. We focus on corruptions that are frequently encountered in natural images, e.g., the ``common'' corruptions of noise, blur, weather, or digitization \cite{ImageNetC}. Existing research suggests fragility of deep learning models to corruptions of such kinds \cite{DodgeAndKaram}, and that fine-tuning on specific corruption types would help, but cannot well generalize to other types of corruptions \cite{Geirhos2017a,Vasiljevic17}. To this end, we use the ImageNet-C dataset \cite{ImageNetC} that is produced by applying 15 corruption types of 5 severity levels to validation images of ImageNet \cite{ILSVRC15} \footnote{The 15 types of corruptions include Gaussian Noise, Shot Noise, Impulse Noise, Defocus Blur, Frosted Glass Blur, Motion Blur, Zoom Blur, Snow, Frost, Fog, Brightness, Contrast, Elastic, Pixelate, and JPEG. Refer to \cite{ImageNetC} for examples of corrupted ImageNet validation images.}. As indicated in \cite{ImageNetC}, networks should not be trained or fine-tuned on this dataset for testing of their robustness. We again use the trained models of ResNeXt-101 as described in \cref{SecExpModernArch}, with or without the regularization of SVB and BBN. Performance improvements of top-1 error rates are plotted in \cref{FigExpCommonCorruption}, where result for each severity level is an average over the 15 types of corruptions. Compared with the result on clean images of ImageNet validation set, \cref{FigExpCommonCorruption} demonstrates better robustness of our proposed methods against common corruptions, and the robustness stands gracefully with the increase of severity levels.

\begin{figure}[t]
  \centering
  \includegraphics[scale=0.17]{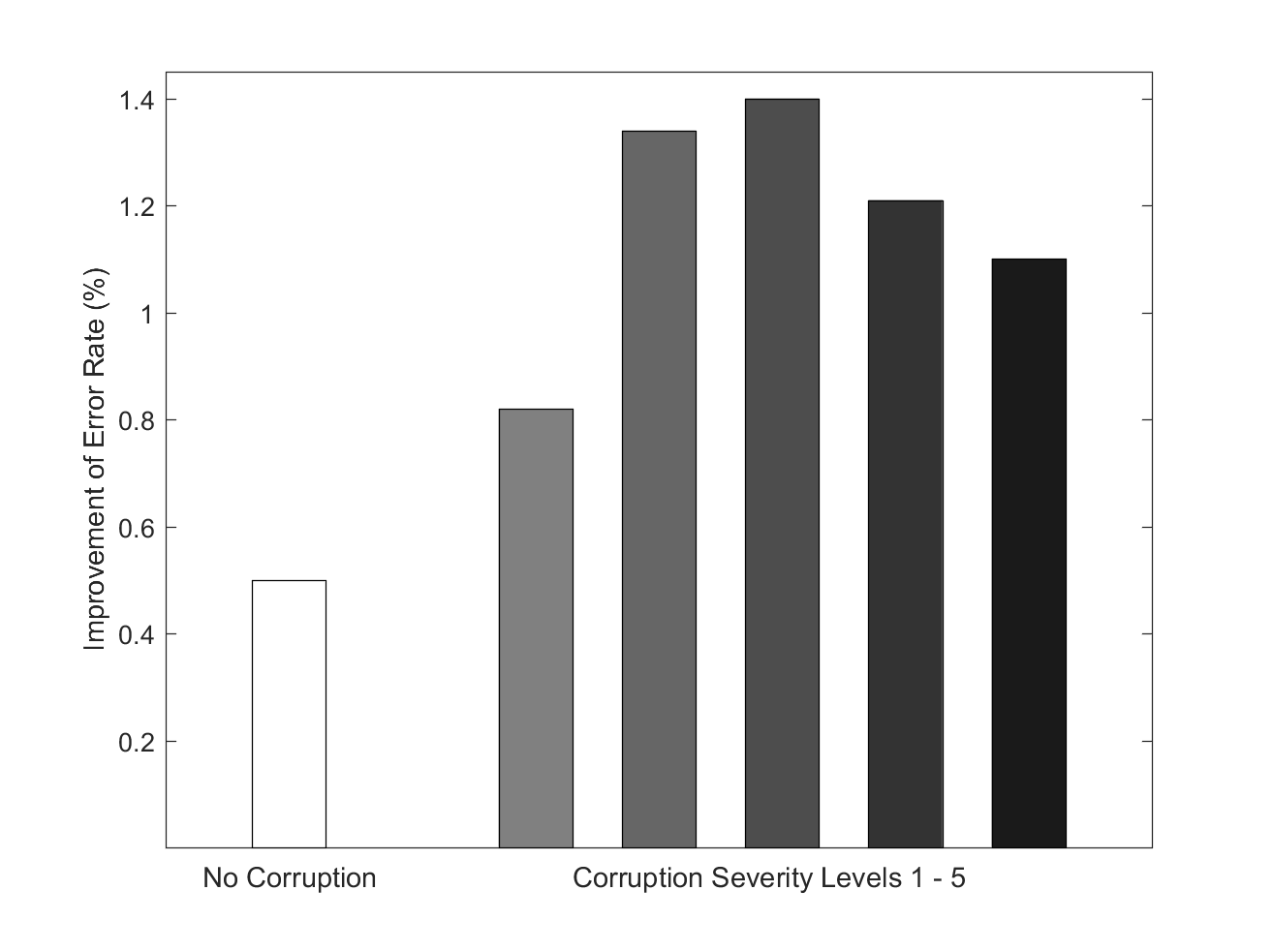} \\
  \caption{ {\small Robustness test of 5 severity levels on the ImageNet-C dataset \cite{ImageNetC}. Results in terms of top-1 error rate improvement ($\%$) are obtained by applying ResNeXt-101 \cite{ResNeXt} models, which are trained without or with regularization of our proposed SVB and BBN, to either clean or corrupted validation images of ImageNet. Our methods give improved results over the respective baselines of $19.39\%$, $30.56\%$, $39.26\%$, $47.09\%$, $58.50\%$, and $70.65\%$.  }  }
  \label{FigExpCommonCorruption}
\end{figure}

%
%

\section{Conclusion}

In this paper, we present theoretical analysis to connect with the recent interest of spectrally regularized deep learning methods. Technically, we prove a new generalization error bound for DNNs, which is both scale- and range-sensitive to singular value spectrum of each of networks' weight matrices. The bound is established by first proving that DNNs are of local isometry on data distributions of practical interest, and then introducing the local isometry property of DNNs into a PAC based generalization analysis. We further prove that the optimal bound w.r.t. the degree of isometry is attained when each weight matrix has a spectrum of equal singular values --- OrthDNNs with weight matrices of orthonormal rows or columns are thus the most straightforward choice. Based on such analysis, we present algorithms of strict and approximate OrthDNNs, and propose a simple yet effective algorithm called Singular Value Bounding. We also propose Bounded Batch Normalization to make compatible use of batch normalization with OrthDNNs. Experiments on benchmark image classification show the efficacy and robustness of OrthDNNs and our proposed SVB and BNN methods. 


\ifCLASSOPTIONcaptionsoff
  \newpage
\fi

\bibliographystyle{plain}
\bibliography{DeepLearning}

\appendices

\section{Proof of Lemma 3.1.}
\label{ProofLinearNNDeltaIsometry}

To prove \cref{lm:1}, we begin with the following lemma regarding matrix pseudo-inverse.

\begin{lemma}
  \label{lm:aux}
  Given a matrix $\bvec{W} \in \mathbb{R}^{M \times N}$ and $\bvec{x} \in \mathcal{X} -
  \mathcal{N}(\bvec{W})$, where $\mathcal{X}$ is $\mathbb{R}^{N}$, we have
  \begin{displaymath}
    \bvec{W}^{\dag}\bvec{W}\bvec{x} = \bvec{x}
  \end{displaymath}
  where $\bvec{W}^{\dag}$ is the pseudo-inverse of $\bvec{W}$, given as $\bvec{W}^{\dag} = \bvec{V}\bvec{\Sigma}^{\dag}\bvec{U}^{T}$
  when $\bvec{W}$ has the singular value decomposition $\bvec{W} = \bvec{U}\bvec{\Sigma} \bvec{V}^{T}$, and $\bvec{\Sigma}^{\dag}$ is
  the matrix obtained by first taking the transpose of $\bvec{\Sigma}$, and then the inverse of its non-zero elements.
\end{lemma}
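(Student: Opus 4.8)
The plan is to recognize that $\bvec{W}^{\dag}\bvec{W}$ is precisely the orthogonal projector onto the row space of $\bvec{W}$, i.e.\ onto $\mathcal{N}(\bvec{W})^{\perp}$, which is the subspace denoted $\mathcal{X} - \mathcal{N}(\bvec{W})$ in the paper; the identity then follows because a projection fixes exactly those vectors lying in its range. First I would write the full SVD $\bvec{W} = \bvec{U}\bvec{\Sigma}\bvec{V}^{T}$ with $\bvec{U} \in \mathbb{R}^{M\times M}$ and $\bvec{V} \in \mathbb{R}^{N\times N}$ orthogonal and $\bvec{\Sigma} \in \mathbb{R}^{M\times N}$ carrying the singular values $\sigma_1 \geq \dots \geq \sigma_r > 0$ on its diagonal (where $r = r(\bvec{W})$) and zeros elsewhere. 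Denoting the columns of $\bvec{V}$ by $\{\bvec{v}_i\}_{i=1}^{N}$, the standard SVD facts give $\mathcal{N}(\bvec{W}) = \text{span}\{\bvec{v}_{r+1}, \dots, \bvec{v}_N\}$, and hence $\mathcal{X} - \mathcal{N}(\bvec{W}) = \text{span}\{\bvec{v}_1, \dots, \bvec{v}_r\}$.

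Next I would carry out the algebra $\bvec{W}^{\dag}\bvec{W} = \bvec{V}\bvec{\Sigma}^{\dag}\bvec{U}^{T}\bvec{U}\bvec{\Sigma}\bvec{V}^{T} = \bvec{V}(\bvec{\Sigma}^{\dag}\bvec{\Sigma})\bvec{V}^{T}$, using $\bvec{U}^{T}\bvec{U} = \bvec{I}$. By the definition of $\bvec{\Sigma}^{\dag}$ (the transpose of $\bvec{\Sigma}$ with its nonzero entries inverted), the product $\bvec{\Sigma}^{\dag}\bvec{\Sigma} \in \mathbb{R}^{N\times N}$ is the diagonal matrix whose first $r$ diagonal entries equal $1$ and whose remaining entries vanish. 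Consequently $\bvec{W}^{\dag}\bvec{W} = \sum_{i=1}^{r}\bvec{v}_i\bvec{v}_i^{T}$, which is exactly the orthogonal projector onto $\text{span}\{\bvec{v}_1,\dots,\bvec{v}_r\} = \mathcal{X} - \mathcal{N}(\bvec{W})$.

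Finally, since $\bvec{x} \in \mathcal{X} - \mathcal{N}(\bvec{W})$, I would expand $\bvec{x} = \sum_{i=1}^{r} c_i \bvec{v}_i$ in the orthonormal basis; equivalently, $\bvec{v}_j^{T}\bvec{x} = 0$ for every $j > r$. Applying the projector then yields $\bvec{W}^{\dag}\bvec{W}\bvec{x} = \sum_{i=1}^{r}\bvec{v}_i(\bvec{v}_i^{T}\bvec{x}) = \sum_{i=1}^{r} c_i \bvec{v}_i = \bvec{x}$, which is the claimed identity.

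I expect no serious obstacle, as this is the standard fact that $\bvec{W}^{\dag}\bvec{W}$ is the row-space projector of the Moore--Penrose pseudo-inverse. The only point requiring care is the reading of the symbol $\mathcal{X} - \mathcal{N}(\bvec{W})$ as the orthogonal complement $\mathcal{N}(\bvec{W})^{\perp}$ (the row space) rather than a naive set difference: the identity fails for any $\bvec{x}$ carrying a nonzero null-space component, so restricting $\bvec{x}$ to the row space is essential. I would therefore state this interpretation explicitly at the outset so that the computation of $\bvec{v}_j^{T}\bvec{x} = 0$ for $j > r$ is justified.
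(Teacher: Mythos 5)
Your proposal is correct and follows essentially the same route as the paper's proof: both expand $\bvec{x}$ in the right singular vector basis (the paper writes $\bvec{x} = \bvec{V}\bvec{\alpha}$ with $\alpha_r = 0$ outside the nonzero-singular-value indices, which is exactly your $\bvec{x} = \sum_{i=1}^{r} c_i \bvec{v}_i$) and then reduce the claim to $\bvec{\Sigma}^{\dag}\bvec{\Sigma}\bvec{\alpha} = \bvec{\alpha}$ via $\bvec{U}^{T}\bvec{U} = \bvec{I}$. Your explicit remark that $\mathcal{X} - \mathcal{N}(\bvec{W})$ must be read as the orthogonal complement $\mathcal{N}(\bvec{W})^{\perp}$ (not a set difference) is a point the paper's proof uses implicitly, and making it explicit is a small improvement in clarity rather than a different argument.
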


\begin{proof}
  Let $\mathcal{R}$ be the index set such that $\bvec{\Sigma}_{rr} \not= 0$, $\forall r \in \mathcal{R}$.
  Given any $\bvec{x} \in \mathcal{X} - \mathcal{N}(\bvec{W})$, $\bvec{x}$ can be represented as
  \begin{displaymath}
    \bvec{x} = \bvec{V} \bvec{\alpha} ,
  \end{displaymath}
  where entries of $\bvec{\alpha}$ have $\alpha_r = 0$ when $r \not\in \mathcal{R}$. Then
  \begin{align*}
    \bvec{W}^{\dag}\bvec{ï¼·}\bvec{W}\bvec{x} &= \bvec{V}\bvec{\Sigma}^{\dag}\bvec{U}^{T}\bvec{U}\bvec{\Sigma} \bvec{V}^{T}\bvec{V}\bvec{\alpha}\\
               &= \bvec{V}\bvec{\Sigma}^{\dag}\bvec{\Sigma} \bvec{\alpha} .
  \end{align*}

  Since $\forall \alpha_r \not= 0$, $\bvec{\Sigma}_{rr} \not= 0, \bvec{\Sigma}^{\dag}_{rr} \not= 0$, we have $\bvec{\Sigma}^{\dag}\bvec{\Sigma}
  \bvec{\alpha} = \bvec{\alpha}$, which is to say $\bvec{W}^{\dag}\bvec{W}\bvec{x} = \bvec{V}\bvec{\alpha} = \bvec{x}$.
\end{proof}

Now, we are going to prove \cref{lm:1}.
\begin{proof}
  For any $\bvec{W}_i$ with $i \in \{1,\ldots,L\}$, performing singular value decomposition (SVD) upon it, we have
  \begin{displaymath}
    \bvec{W}_i = \bvec{U}_i\bvec{\Sigma}_i \bvec{V}^{T}_i ,
  \end{displaymath}
  where $\bvec{U}_i$ and $\bvec{V}_i$ are both orthogonal matrices.

  Given any $\bvec{\Delta} = \bvec{x} - \bvec{x}' \in \mathcal{X} - \mathcal{N}(T)$, let $\bvec{\Delta}_{i-1} = \prod_{j=1}^{i-1}\bvec{W}_{j}\bvec{\Delta}$, we have
  \begin{displaymath}
    \bvec{W}_{i}\bvec{\Delta}_{i-1} = \bvec{U}_i\bvec{\Sigma}_{i}\bvec{V}_{i}^T\bvec{\Delta}_{i-1} .
  \end{displaymath}
  Let $\bvec{\Delta}'_{i-1} = \bvec{V}_{i}^{T}\bvec{\Delta}_{i-1}$. We show that if $\bvec{\Delta}'_{i-1,k} \not= 0$, then $\bvec{\Sigma}_{i,kk} \not= 0$, which
  implies that $\bvec{\Delta}_{i-1}$ lies in the subspace spanned by right singular
  vectors of $\bvec{W}_i$ that have nonzero singular values, where $\bvec{\Delta}'_{i-1,k}$ and $\bvec{\Sigma}_{i,kk}$ are respectively the $k^{th}$ element of $\bvec{\Delta}'_{i-1}$ and $k^{th}$ diagonal element of $\bvec{\Sigma}_i$.

  Suppose otherwise, for a set $\mathcal{K}$, $\bvec{\Delta}'_{i-1,k} \not= 0$ and $\bvec{\Sigma}_{i,kk} =
  0$, $\forall k \in \mathcal{K}$. Let $\bvec{v}_k$ denote the $k^{th}$ column of $\bvec{V}_{i}$, we reparameterize $\bvec{\Delta}_{i-1}$ as
  \begin{displaymath}
    \bvec{\Delta}_{i-1} = \sum\limits_{k\not\in \mathcal{K}} \bvec{\Delta}'_{i-1,k} \bvec{v}_{k} + \sum\limits_{k \in \mathcal{K}}\bvec{\Delta}'_{i-1,k} \bvec{v}_{k} ,
  \end{displaymath}
  where the terms having $\bvec{\Delta}'_{i-1,k} = 0$ are omitted.

  Denote $\bvec{W} = \prod_{j=1}^{i-1}\bvec{W}_j$. Since $\bvec{\Delta} \in \mathcal{X} -
  \mathcal{N}(\bvec{T})$, we have $\bvec{\Delta} \in \mathcal{X} - \mathcal{N}(\bvec{W})$. By \cref{lm:aux}, we
  have $\bvec{W}^{\dag}\bvec{\Delta_{i-1}} = \bvec{W}^{\dag}\bvec{W}\bvec{\Delta} = \bvec{\Delta} \in \mathcal{X} -
  \mathcal{N}(\bvec{T})$. Since
  \begin{align*}
    \bvec{W}^{\dag}\bvec{\Delta_{i-1}} &= \bvec{W}^{\dag}(\sum\limits_{k\not\in \mathcal{K}} \bvec{\Delta}'_{i-1,k}
                                 \bvec{v}_{k} + \sum\limits_{k\in \mathcal{K}}\bvec{\Delta}'_{i-1,k} \bvec{v}_{k})\\
                               &= \sum\limits_{k\not\in \mathcal{K}} \bvec{\Delta}'_{i-1,k}
                                 \bvec{W}^{\dag}\bvec{v}_{k} + \sum\limits_{k\in \mathcal{K}}\bvec{\Delta}'_{i-1,k} \bvec{W}^{\dag}\bvec{v}_{k} ,
  \end{align*}
  which is to say
  \begin{displaymath}
    \bvec{\Delta} = \sum\limits_{k\not \in \mathcal{K}} \bvec{\Delta}'_{i-1,k} \bvec{W}^{\dag}\bvec{v}_{k} + \sum\limits_{k\in \mathcal{K}}\bvec{\Delta}'_{i-1,k} \bvec{W}^{\dag}\bvec{v}_{k} .
  \end{displaymath}
  By assumption we have $\bvec{\Delta}'_{i-1,k} \not=0$; we also have $\bvec{W}^{\dag}\bvec{v}_k \not= \bvec{0}$ --- otherwise $\bvec{\Delta}'_{i-1,k}$ would be zero, and $\bvec{W}^{\dag}\bvec{v}_k \perp \bvec{W}^{\dag}\bvec{v}_{k'}$, for $k \not= k'$; Considering that $\bvec{\Delta} \in \mathcal{X} - \mathcal{N}(\bvec{T})$, we have $\bvec{W}^{\dag}\bvec{v}_k \in
\mathcal{X} - \mathcal{N}(\bvec{T})$. However, we assume $\bvec{\Sigma}_{i,kk} = 0$ for $k \in \mathcal{K}$, and have
  \begin{displaymath}
    \bvec{T}(\sum\limits_{k\in \mathcal{K}}\bvec{\Delta}'_{i-1,k} \bvec{W}^{\dag}\bvec{v}_{k}) = \bvec{0} ,
  \end{displaymath}
  which implies $\bvec{W}^{\dag}\bvec{v}_k \in \mathcal{N}(\bvec{T})$ and leads to a contradiction. We thus prove that if $\bvec{\Delta}'_{i-1,k} \not= 0$, then $\bvec{\Sigma}_{i,kk} \not= 0$.

  With the above result, we can constrain the sample variation more precisely through singular values of weight matrices. To be specific, for any pair of
  $\bvec{x}_{i-1} \in \mathcal{X}_{i-1}$ and $\bvec{x}'_{i-1} \in \mathcal{X}_{i-1}$, we have
  \begin{align*}
    ||\bvec{W}_i\bvec{x}_{i-1} - \bvec{W}_i\bvec{x}_{i-1}'|| &= ||\bvec{W}_i(\bvec{x}_{i-1} - \bvec{x}'_{i-1})|| \\
                                 &= ||\bvec{U}_i\bvec{\Sigma}_i \bvec{V}^{T}_i(\bvec{x}_{i-1} - \bvec{x}'_{i-1})||\\
                                 &= ||\bvec{\Sigma}_i \bvec{V}^{T}_i(\bvec{x}_{i-1} - \bvec{x}'_{i-1})||\\
                                 &\geq ||\sigma_{\min}^{i} \bvec{I} \bvec{V}^{T}_i(\bvec{x}_{i-1} - \bvec{x}'_{i-1})||\\
                                 &\geq \sigma_{\min}^{i} || \bvec{V}^{T}_i(\bvec{x}_{i-1} - \bvec{x}'_{i-1})||\\
                                 &= \sigma_{\min}^{i} || (\bvec{x}_{i-1} - \bvec{x}'_{i-1})|| ,
  \end{align*}
  where the second and last equalities use the fact that an orthogonal matrix does not change the norm of operated vectors, and the two inequalities are derived based on our result that for $\bvec{x}_{i-1} -
  \bvec{x'}_{i-1} \in \bvec{\Delta}_{i-1}$, it lies in the subspace spanned by the right singular
  vectors of $\bvec{W}_i$ whose corresponding singular values are great than or equal to the nonzero $\sigma^{i}_{\min}$.

  Similarly, we have
  \begin{displaymath}
    ||\bvec{W}_i\bvec{x}_{i-1} - \bvec{W}_i\bvec{x}_{i-1}'|| \leq \sigma_{\max}^{i} || (\bvec{x}_{i-1} - \bvec{x}'_{i-1})|| .
  \end{displaymath}

  Denote $||\bvec{x}_{i-1} - \bvec{x}'_{i-1}||$ as $d_i$ with $d = ||\bvec{x} - \bvec{x}'||$, we have
  \begin{displaymath}
    \sigma_{\min}^{i}d_i \leq ||\bvec{W}_i\bvec{x}_{i-1} - \bvec{W}_i\bvec{x}_{i-1}'|| \leq \sigma_{\max}^{i}d_i .
  \end{displaymath}

  Cascading on all layers, we have
  \begin{align*}
    & \prod\limits_{i=1}^{L}\sigma_{\min}^{i}d \leq ||\prod\limits_{i=1}^{L}\bvec{W}_i\bvec{x} - \prod\limits_{i=1}^{L}\bvec{W}_i\bvec{x}'|| \leq \prod\limits_{i=1}^{L}\sigma_{\max}^{i}d\\
    \iff & \prod\limits_{i=1}^{L}\sigma_{\min}^{i}d \leq ||\bvec{T}\bvec{x} - \bvec{T}\bvec{x}'|| \leq \prod\limits_{i=1}^{L}\sigma_{\max}^{i}d .
  \end{align*}

  Thus
  \begin{align*}
    & \left| \|\bvec{T}\bvec{x} - \bvec{T}\bvec{x}'|| - ||\bvec{x} - \bvec{x}'\| \right| \\
    \leq & \max(|\prod\limits_{i=1}^{L}\sigma_{\max}^{i}d - d|, |\prod\limits_{i=1}^{L}\sigma_{\min}^{i}d - d|) \\
    \leq & \max(|\prod\limits_{i=1}^{L}\sigma_{\max}^{i} - 1|2b, |\prod\limits_{i=1}^{L}\sigma_{\min}^{i} - 1|2b) .
  \end{align*}

  We conclude the proof by showing $\bvec{T}$ is of $2b\max(|\prod\limits_{i=1}^{L}\sigma_{\max}^{i} - 1|, |\prod\limits_{i=1}^{L}\sigma_{\min}^{i} - 1|)$-isometry.
\end{proof}

\section{Proof of Lemma 3.2.}
\label{ProofNonlinearNNLocalLinearity}

\begin{proof}
We proceed by induction on layer $l$.

For $l = 1$, each row in $\bvec{W}_l$ corresponds to a hyperplane in $\mathcal{X}$.
Thus, $\bvec{W}_l$ imposes a hyperplane arrangement $\mathcal{A} = \{\bvec{W}_{l,i}\}_{i=1,\ldots, n_l}$ on
$\mathcal{X}$, and is associated with an index set $\mathcal{T}(\mathcal{A}, \tau)$ of the region set
$\mathcal{R}(\mathcal{A})$, where $\bvec{W}_{l,i}$ denotes the $i^{th}$
row of $\bvec{W}_l$. Denote $a_{li}$ the neuron corresponding to a hyperplane $\bvec{W}_{l,i} \in \mathcal{A}$, we have
\begin{displaymath}
  a_{li}(\bvec{x}) =
  \begin{cases}
  \bvec{W}_{l,i}\bvec{x} & \text{if} \ \bvec{x} \in r \ \forall \ r \in \{q \in \mathcal{R} | \pi_i\tau(q) = 1 \} \\ 
    0 & \text{otherwise} ,
  \end{cases}
\end{displaymath}
i.e., $a_{li}$ is linear over regions of $\mathcal{R}$ that are active on the $i^{th}$ neuron of the layer. We then have $\mathcal{Q}_{li} = \{q \in \mathcal{R} | \pi_i\tau(q) = 1 \}$ as the support of $a_{li}$.

To present the effect in the form of $\bvec{W}_l$, we have
\begin{displaymath}
  \bvec{W}^{q}_{l}= \diag(\tau(q))\bvec{W}_l ,
\end{displaymath}
which is a linear map over each $q \in \mathcal{Q}_l = \bigcup_{i=1}^{n_l}\mathcal{Q}_{li}$.
The case $l=1$ is proved, with $\tau_1 = \tau$.

Assume now for all the neurons $a_{lj}$, $j = 1, \ldots, n_l$, of layer $l$, $a_{lj}$ is a linear functional over its support
$Q_{lj}$, and is $0$-valued otherwise. We proceed by building a new set of regions
$\mathcal{Q}_{(l+1)i}, i = 1 ,\ldots, n_{l+1}$, for layer $l+1$, whose neurons are linear functionals over regions of $\mathcal{Q}_{(l+1)i}$.

We separately discuss the cases of $g$ with or without max pooling.

First, when $g$ does not include max pooling, for a neuron $a_{(l+1)i}$ of layer $l+1$, it is a functional
of the form
\begin{align*}
  a_{(l+1)i} & = g \sum\limits_{j=1}^{n_l} \bvec{W}_{l+1,ij} a_{lj}\\
             & = g \pre (a_{(l+1)i}) ,
\end{align*}
where $\bvec{W}_{l+1,ij}$ is the $(i,j)$-entry of $\bvec{W}_{l+1}$. Since $\forall q \in \bigcup_{j=1}^{n_l}\mathcal{Q}_{lj}$, $a_{lj}, j=1,\ldots, n_l$, is a linear
functional over $q$, so is the linear combination $\pre (a_{(l+1)i})$ of
them.

When $\pre (a_{(l+1)i})(\bvec{x}) > 0 \ \forall \bvec{x} \in q$, $g \pre (a_{(l+1)i}) = \pre
(a_{(l+1)i})$, and $q$ is not further divided by neuron $i$. When $\pre (a_{(l+1)i})(\bvec{x})
> 0$ for some of $ \bvec{x} \in q$, by the fact that $\pre (a_{(l+1)i})$ is a monotonous function, it
splits $q$ into two regions $q_{+}$ and $q_{-}$, where $\forall \bvec{x} \in q_{+}, \pre
(a_{(l+1)i})(\bvec{x}) > 0$ and $\forall \bvec{x} \in q_{-}, \pre (a_{(l+1)i})(\bvec{x}) \leq 0$. Since $g$ sets $a_{(l+1)i} = 0$ $\forall
\bvec{x} \in q_{-}$, $a_{(l+1)i}$ is a linear functional over $q_{+}$. When $\pre (a_{(l+1)i})(\bvec{x}) \leq 0 \ \forall \bvec{x} \in q$, $q$ does not provide support for neuron $i$, but it may support other neurons.

Consequently, neurons $a_{l+1}$ further divide the region $q$ into
sub-regions, where for each region, $a_{l+1}$ is a linear map. We say the
boundaries of the new set of regions as the hyperplane arrangement induced by
neurons, with a alight abuse of terminology. For the
newly created set of regions $\mathcal{Q}_{l+1}$,
we define the labeling function $\tau_{l+1}$ for layer $l+1$ such that for $\bvec{x} \in q' \in \mathcal{Q}_{l+1}$, we have
\begin{displaymath}
  \pi_i\tau^{\relu}_{l+1}(\bvec{x}) =
  \begin{cases}
    1 & \text{if $a_{(l+1)i}(\bvec{x}) > 0$}\\
    0 & \text{if $a_{(l+1)i}(\bvec{x}) \leq 0$} .
  \end{cases}
\end{displaymath}
Since for each $q' \in \mathcal{Q}_{l+1}$, it is a sub-region of $q \in
\mathcal{Q}_{l}$, $q' \in \mathcal{Q}_{l}$ holds true as well.
In this case, the new set of regions are built.

For the case that $g$ includes max pooling, the neuron is of the form
\begin{align*}
  a_{(l+1)i} & = \max\limits_{k\in K}(\ReLU \sum\limits_{j=1}^{n_l} \bvec{W}_{l+1,(si + k)j} a_{lj})\\
             & =  \max\limits_{k\in K}(\pre (a_{(l+1)i})_k) ,
\end{align*}
where $K$ is index set of neurons being pooled with $|K| = s$, and $\pre (a_{(l+1)i})_k$ denotes $\ReLU \sum\limits_{j=1}^{n_l} \bvec{W}_{l+1,(si + k)j} a_{lj}$.

Similarly, for each $k \in K$, $\pre (a_{(l+1)i})_k$ may split $q$ into two
sub-regions. Denote the boundary as $H_k$ if it indeed splits. $\{H_k\}_{k\in
K}$, together with the boundary of $q$, forms a hyperplane arrangement
$\mathcal{A}^{*}_{(l+1)i}$ induced by neurons within $q$. Denote the set of regions in this new
arrangement as $\mathcal{Q}_{(l+1)i}^{*}$ , for each $q^{*} \in
\mathcal{Q}_{(l+1)i}^{*}$ , consider the set of hyperplanes $\mathcal{A}'_{(l+1)i} = \{\pre (a_{(l+1)i})_k
- \pre (a_{(l+1)i})_{k'}\}_{k < k', k,k' \in K}$. With a similar argument, they will create another
hyperplane arrangement $\mathcal{A}'_{(l+1)i}$ within $q^*$. For $q' \in
\mathcal{Q}'_{(l+1)i}$ , $H \in \mathcal{A}'_{(l+1)i}$ does not have discontinuity
in derivative --- does not suddenly switch from constant function $0$ to non-zero
linear function. Now within each $q'$, we impose an order on
$\mathcal{A}^{*}_{(l+1)i}$, if $\pre (a_{(l+1)i})_k - \pre (a_{(l+1)i})_{k'} \geq
0$, we say $H_k \geq H_k'$. Given that $K$ is a finite totally ordered set, the maximum element w.r.t. the defined order
exists, and we denote its index as $k_{\max}$. Thus, for each $q'$, $a_{(l+1)i} =
\pre (a_{(l+1)i})_{k_{\max}}$, which we have proved to be a linear function
over $q'$ in the $\ReLU$ case, so is $a_{li}$. Thus similar to the ReLU only
case, Max Pooling with ReLU divides $q$ into a new set of regions as well.

For the newly created set of regions $\mathcal{Q}_{l+1}$, we have a composed
labeling function
\begin{displaymath}
  \tau_{l+1}(q) = \tau^{\max}_{l+1}(q)\tau^{\relu}_{l+1}(q) ,
\end{displaymath}
where $\tau^{\relu}$ is defined as before, and
\begin{displaymath}
  \pi_{k}\tau^{\max}_{l+1}(q) =
  \begin{cases}
    1 & \text{if $k=\argmax\limits_{k \in K} a_{(l+1)k}(\bvec{x}), \forall \bvec{x} \in q$}\\
    0 & \text{otherwise} .
  \end{cases}
\end{displaymath}

Since for each $q' \in \mathcal{Q}_{l+1}$, it is a sub-region of $q \in
\mathcal{Q}_{l}$, $q' \in \mathcal{Q}_{l}$ holds true as well.
In this case, the new set of regions are built. The max pooling case is proved.

The same with the case $l=1$, to present the effect in the form of
$\bvec{W}_l$, denoting $\tau_{l+1}(\bvec{x}) = \tau^{\relu}_{l+1}$ for
ReLU only case, and $\tau_{l+1}(\bvec{x}) = \tau^{\max}_{l+1}\tau^{\relu}_{l+1}$ for
ReLU with Max
Pooling case, we have in the ReLU only case
\begin{displaymath}
  \bvec{W}^{q}_{l+1}= \diag(\tau_{l+1}(q))\bvec{W}_{l+1} ,
\end{displaymath}
and in the ReLU and Max Pooling case,
\begin{displaymath}
  \bvec{W}^{q}_{l+1}= \bvec{P}_{l+1}\diag(\tau_{l+1}(q))\bvec{W}_{l+1} ,
\end{displaymath}
which both are linear maps over $q \in \mathcal{Q}_{l+1} = \bigcup_{i=1}^{n_{l+1}}\mathcal{Q}_{(l+1)i}$(
note that $\bvec{P}_{l+1}$ is a also a linear mapping/matrix).

By induction, $\forall i \leq l+1$, $\bvec{W}^{q}_{i}$ is a linear map. Cascading the
result, we have the neural network $\bvec{T}$ as a linear map over $q \in \mathcal{Q}_{(l+1)}$
\begin{displaymath}
  \bvec{T}^{l+1}_{q} = \prod\limits_{i=1}^{l+1}\bvec{W}^{q}_{i} .
\end{displaymath}

We now finish the induction and prove that for $ 0 < l \leq L$, there exists a set $\mathcal{Q}_{l}$ such that $\forall q \in \mathcal{Q}_l$, $\bvec{T}^{l}_q$ is
linear over $q$. Given that we are interested in $\bvec{T}^{L}_{q}$ in this paper, we drop the upper index, and denote it as
\begin{displaymath}
  \bvec{T}_{q} = \prod\limits_{i=1}^{L}\bvec{W}^{q}_{i} ,
\end{displaymath}
and the corresponding region set $Q_{L}$ is the set of regions over which $\bvec{T}$ is
linear. We also drop the index, and denote it as $Q$.
\end{proof}

\section{Proof of Lemma 3.3.}
\label{ProofGammaCover4DNN}

\begin{proof}
  By \cref{lm:2}, a set of regions $\mathcal{Q}$ exists such that for $q \in
  \mathcal{Q}$, $\bvec{T}_q$ is a linear mapping induced by $\bvec{T}$ over $q$.

  For any given $\bvec{x} \in \mathcal{X}$, denote the region it belongs to as $q_{\bvec{x}}$.
  Let $d_{\min} = \min_{\bvec{x} \in S_m^{(x)}} \min_{\bvec{x}' \in \partial q_{\bvec{x}}} \rho(\bvec{x}, \bvec{x}')$, the shortest
distance from $\bvec{x}$ to the boundary of $q_{\bvec{x}}$, denoted as $\partial q_{\bvec{x}}$, among all training samples. Denote by $a_{lk}$
the neuron that defines the hyperplane corresponding to the boundary that produces the shortest distance $d_{\min}$. Note that $a_{lk}$ may exist in the intermediate network layers, i.e., $1\leq l \leq L$, and the specific value of $l$ depends on the training set $S_m$ and the learned $\bvec{T}$. By Corollary 3.1.,
we have $\forall \{\bvec{x}' \in \mathcal{X} |\: \bvec{x} + (\bvec{x}' - \bvec{x}) \in q_{\bvec{x}}\}$, $a_{lk|\bvec{x}}(\bvec{x}) -
a_{lk|\bvec{x}}(\bvec{x}') = a_{lk|\bvec{x}}(\bvec{x} - \bvec{x}')$. Thus
\begin{align*}
  ||a_{lk|\bvec{x}}(\bvec{x}) - a_{lk|\bvec{x}}(\bvec{x}')|| &= ||a_{lk|\bvec{x}}(\bvec{x} - \bvec{x}')||\\
                             &\leq ||a_{lk|\bvec{x}}||||\bvec{x} - \bvec{x}'||\\
                             &= ||a_{lk|\bvec{x}}||d_{\min} .
\end{align*}
Given $\bvec{x}' \in \partial q_{\bvec{x}}$, it implies $a_{lk}(\bvec{x}') = 0$, and we have a lower bound on $d_{\min}$ as
\begin{align*}
  d_{\min} &\geq \frac{||a_{lk|\bvec{x}}(\bvec{x}) - a_{lk|\bvec{x}}(\bvec{x}')||}{||a_{lk|\bvec{x}}||}\\
           &= \frac{|a_{lk|\bvec{x}}(\bvec{x})|}{||a_{lk|\bvec{x}}||}\\
           &\geq \frac{|a_{lk|\bvec{x}}(\bvec{x})|}{\prod\limits_{i=1}^{l}\sigma^{i}_{\max | \bvec{x}}} ,
\end{align*}
where $\sigma^{i}_{\max|\bvec{x}}$ is the maximum singular value of $\bvec{W}^{q}_{i}$ for $i = 1, \ldots ,l$.

Since $\bvec{W}^{q}_{i}$ is a submatrix of $\bvec{W}_i$, by Cauchy interlacing law by
rows deletion \cite{Chafa2011SingularVO}, we have $\sigma^{i}_{\max | \bvec{x}} \leq \sigma^{i}_{\max}$, and
\begin{displaymath}
  d_{\min} \geq \frac{|a_{lk|\bvec{x}}(\bvec{x})|}{\prod\limits_{i=1}^{l}\sigma^{i}_{\max}} .
\end{displaymath}

Denote $o(S_m, \bvec{T})/2 = |a_{lk|\bvec{x}}(\bvec{x})|$ to stress the fact that it is a fixed value once $S_m$ and $\bvec{T}$ are given, we have a lower bound
\begin{displaymath}
  r = \frac{o(S_m, \bvec{T})/2}{\prod\limits_{i=1}^{l(S_m, \bvec{T})}\sigma^{i}_{\max}} ,
\end{displaymath}
where we have explicitly write $l(S_m, \bvec{T})$ to emphasize the dependence of $l$ on $S_m$ and $\bvec{T}$. In addition, $q_{\bvec{x}} \in \mathcal{Q}$, we have $\bvec{T}$ is linear over $q_{\bvec{x}}$.
Consequently, a covering set of $\mathcal{X}$ with radius $r$ is found, such
that within each covering ball, $\bvec{T}$ is linear. Then for any given $\bvec{x} \in
\mathcal{X}$ and $\{\bvec{x}' \in \mathcal{X} |\: ||\bvec{x} - \bvec{x}'|| \leq r\}$, $\bvec{x}' \in q_{\bvec{x}}$, thus
$\bvec{T}\bvec{x} - \bvec{T}\bvec{x}' = \bvec{T}_{|\bvec{x}}(\bvec{x} - \bvec{x}')$. The diameter $\gamma$ of the covering ball is $2r$.
\end{proof}

\section{Proof of Lemma 3.4.}
\label{ProofGammaCoverDeltaIsometry4DNN}

\begin{proof}
By \cref{lm:3}, there exists a covering of $\mathcal{X}$ such that $\bvec{T}$ is
linear over each covering ball $B$ containing $\bvec{x} \in S_m^{(x)}$, denoted as $\bvec{T}_{|B}$. By \cref{lm:1}, within such a
$B$, $\bvec{T}_{|B}$ is $\delta_{| B}$-isometry w.r.t. variation space $\mathcal{X} - \mathcal{N}(\bvec{T}_{|B})$. By
Cauchy interlacing law by rows deletion \cite{Chafa2011SingularVO}, we
have
\begin{displaymath}
  \sigma^{i}_{\max | B} \leq \sigma^{i}_{\max}, \ \sigma^{i}_{\min | B} \geq \sigma^{i}_{\min} ,
\end{displaymath}
where $\sigma^{i}_{\min}$ and $\sigma^{i}_{\max}$, $i = 1, \ldots, L$, are respectively the
minimum and maximum singular values of weight matrices of $\bvec{T}$, and
$\sigma^{i}_{\min | B}$ and $\sigma^{i}_{\max | B}$ are the corresponding ones of
$\bvec{T}_{|B}$.

Some extra attentions need to be taken to deal with the $\bvec{P}_{l}$ matrix
introduced by max pooling. Note that in \cref{lm:2},
$||\bvec{P}_l\diag(\tau_l(q))\bvec{W}_l\bvec{x}||$ is equivalent to
$||\diag(\tau_l(q))\bvec{W}_l\bvec{x}||$ since in computing the norm, a summation
is computed anyway. Thus, the Cauchy interlacing law by row deletion applies to
$\bvec{W}_{l}^{q}$ with $\bvec{P}_l$ as well.

Denote
\begin{align*}
  \delta^{1}_{|B} &= \prod\limits_{i=1}^{L}\sigma^{i}_{\max | B} , \ \delta^{2}_{|B} = \prod\limits_{i=1}^{L}\sigma^{i}_{\min | B} , \\
  \delta^{1} &= \prod\limits_{i=1}^{L}\sigma^{i}_{\max}, \ \delta^{2} = \prod\limits_{i=1}^{L}\sigma^{i}_{\min} .
\end{align*}
Anchoring the four points $\delta^{1}_{B}, \delta^{2}_{|B}, \delta^{1},
\delta^{2}$ on the graph of  $f(x) = |x - 1|$, we observe that $[\delta^{2}_{|B}, \delta^{1}_{|B}]$ lies between the interval $[\delta^{2}, \delta^{1}]$. Thus we have
\begin{displaymath}
  \max(|\delta^{1}_{|B} - 1|, |\delta^{2}_{|B} - 1|) \leq \max(|\delta^{1} - 1|, |\delta^{2} - 1|) .
\end{displaymath}
Since $\bvec{T}_{|B}$ means $\bvec{T}$ over $B$ is $\max(|\delta^{1}_{|B} - 1|, |\delta^{2}_{|B} - 1|)$-isometry, which implies that it is also
$\max(|\delta^{1} - 1|, |\delta^{2} - 1|)$-isometry.
\end{proof}

\section{Proof of \cref{thm:ge_dnn}}
\label{proof-thm-ge_dnn}

\begin{proof}
  Similar to the proof of \cref{thm:1}, we partition the space $\mathcal{Z}$
via the assumed $\gamma$-cover. Since $\mathcal{X}$ is a $k$-dimensional manifold,
its covering number is upper bounded by $C_{\mathcal{X}}^{k}/\gamma^{k}$. Let $K$ be
the overall number of covering set, which is upper bounded by
$|\mathcal{Y}|C_{\mathcal{X}}^{k}/\gamma^{k}$. Denote $C_{i}$ the $i$th covering ball,
and let $N_i$ be the set of index of training samples that fall into
$C_i$. Note that $(|N_i|)_{i=1\ldots K}$ is an IDD multimonial random variable with
parameters $m$ and $(|\mu(C_i)|)_{i=1\ldots K}$. Then
  \begin{align}
    &|R(f \circ \bvec{T}) - R_m(f \circ \bvec{T})| \nonumber \\
    = &  |\sum\limits_{i=1}^{K}\mathbb{E}_{z \sim \mu}[\mathcal{L}(f(\bvec{T}\bvec{x}), y)]\mu(C_i) -
        \frac{1}{m}\sum\limits_{i=1}^{m}\mathcal{L}(f(\bvec{T}\bvec{x}_i), y_i)| \nonumber \\
    \leq &  |\sum\limits_{i=1}^{K}\mathbb{E}_{z \sim \mu}[\mathcal{L}(f(\bvec{T}\bvec{x}), y)]\frac{|N_i|}{m} -
        \frac{1}{m}\sum\limits_{i=1}^{m}\mathcal{L}(f(\bvec{T}\bvec{x}_i), y_i)| \nonumber \\
      & +  |\sum\limits_{i=1}^{K}\mathbb{E}_{z \sim \mu}[\mathcal{L}(f(\bvec{T}\bvec{x}), y)]\mu(C_i) -
        \sum\limits_{i=1}^{K}\mathbb{E}_{z \sim \mu}[\mathcal{L}(f(\bvec{T}\bvec{x}), y)]\frac{|N_i|}{m}|
        \nonumber \\
    \leq & |\frac{1}{m}\sum\limits_{i=1}^{K}\sum\limits_{j\in N_i}\max\limits_{z' \in
        C_i, \bvec{x}' - \bvec{x}_j \in \mathcal{P}_{\bvec{x}_j}}|\mathcal{L}(f(\bvec{T}\bvec{x}'), y') - \mathcal{L}(f(\bvec{T}\bvec{x}_j), y_j)| \label{t:1}\\
    & + |\max\limits_{z \in \mathcal{Z}}|\mathcal{L}(f(\bvec{T}\bvec{x}),
      y)|\sum\limits_{i=1}^{K}|\frac{|N_i|}{m} - \mu(C_i)|| \label{t:2} .
  \end{align}
  Remember that $z = (\bvec{x}, y)$.

  By the assumption that $\bvec{T}$ is $\gamma$-cover $\delta$-isometry w.r.t. $\mathcal{P}_{\bvec{x}}$ of $\bvec{x} \in S_m^{(x)}$ and the
  Lipschitz constant of $\mathcal{L} \circ f$ is $A$, suppose the maximum is achieved at $\bvec{x}_k$ and
  $\bvec{x}_k \in C_p$, we have
  \begin{align}
    & \max\limits_{z' \in C_p, \bvec{x}' - \bvec{x}_k \in \mathcal{P}_{\bvec{x}_k}}|\mathcal{L}(f(\bvec{T}\bvec{x}'), y') - \mathcal{L}(f(\bvec{T}\bvec{x}_k),
      y_k)|\nonumber \\
    \leq & A\max\limits_{z' \in C_p, \bvec{x}' - \bvec{x}_k \in \mathcal{P}_{\bvec{x}_k}}||\bvec{T}_{|\bvec{x}_k}(\bvec{x}' - \bvec{x}_{k})||\label{t:3}\\
    \leq & A\max\limits_{z' \in C_p, \bvec{x}' - \bvec{x}_k \in \mathcal{P}_{\bvec{x}_k}}(||\bvec{x}' - \bvec{x}_{k}|| + \delta)\label{t:4}\\
    \leq & A(\gamma + \delta) \nonumber,
  \end{align}
  where $\delta = 2b|\prod_{i=1}^{L}\sigma^{i}_{\max} - 1|$ in (\ref{t:4}) since we have $||\bvec{T}_{|\bvec{x}_k}(\bvec{x}' - \bvec{x}_{k})|| \leq ||\bvec{x}' - \bvec{x}_{k}|| + 2b|\prod_{i=1}^{L}\sigma^{i}_{\max} - 1|$ by Lemma \ref{lm:1}, and $\gamma = o(S_m, \bvec{T})/\left(\prod\limits_{i=1}^{l(S_m, \bvec{T})}\sigma^{i}_{\max}\right) > 0$ by Lemma \ref{lm:3}, with values of $o(S_m, \bvec{T})$ and $1 \leq l(S_m, \bvec{T}) \leq L$ depending on the training set $S_m$ and learned network $\bvec{T}$. Thus \cref{t:1} is less than or equal to $A(\gamma + \delta)$ with the specified $\gamma$ and $\delta$.
  By Breteganolle-Huber-Carol inequality, \cref{t:2} is less than or equal to
  $M\sqrt{\frac{\log(2)|\mathcal{Y}|2^{k+1}C_{\mathcal{X}}^{k}}{\gamma^{k}n} + \frac{2
      \log(1/\nu)}{m}}$.

  The proof is finished. Note that given the covering number of $\mathcal{X}$ as $\mathcal{N} = (\frac{C_{\mathcal{X}}}{\gamma/2})^{k}$, we have also proved that the algorithm is $(|\mathcal{Y}| \mathcal{N}, A(\gamma + \delta))$-robust.
\end{proof}

\section{}
\label{ApendixSecStiefelOptimAlgm}

In this section, we present a SGD based algorithm for the constrained optimization problem (i.e. problem (\ref{EqnStiefelConstrained})) of training a DNN of $L$ layers with parameters $\Theta = \{ \mathbf{W}_l, \mathbf{b}_l\}_{l=1}^L$ and objective function ${\cal{L}}$. The constraints enforce the weight matrix (kernel) $\mathbf{W}_l \in \mathbb{R}^{n_l\times n_{l-1}}$ of any $l^{th}$ layer of the network staying on the Stiefel manifold defined as $ {\cal{M}}_l = \{ \mathbf{W}_l \in \mathbb{R}^{n_l\times n_{l-1}} | \mathbf{W}_l^{\top}\mathbf{W}_l = \mathbf{I} \}$, assuming $n_l \geq n_{l-1}$. The presented algorithm applies directly to fully-connected network layers. For convolutional layers used in CNNs, one may refer to \cref{SecOrthCNNs} for how to convert their layer kernels as matrices.

For the $t^{th}$ iteration of SGD, the algorithm performs the following sequential steps to update $\mathbf{W}^t_l \in {\cal{M}}_l$ for the $l^{th}$ network layer with $l \in \{1, \dots, L\}$. Updating of other network parameters such as bias vectors $\{\mathbf{b}_l\}_{l=1}^L$ is the same as standard SGD based methods. The algorithm is similar to those of optimization on matrix manifolds in \cite{MatrixManifoldOptimBook,SGDRiemannianManifold,OzayOkataniCNNKernelSubManifold}, where properties of convergence are also analyzed.
\begin{enumerate}
\item Compute the gradient $\frac{\partial{\cal{L}}}{\partial{\mathbf{W}^t_l}}$ in the embedding Euclidean space via back-propagation. One may alternatively use the momentum \cite{Momentum} to replace the gradient term in the following steps.

\item Project $\frac{\partial{\cal{L}}}{\partial{\mathbf{W}^t_l}}$ (or its momentum version) onto the tangent space $T_{\mathbf{W}^t_l}{\cal{M}}_l$ by ${\cal{P}}_{\mathbf{W}^t_l}\frac{\partial{\cal{L}}}{\partial{\mathbf{W}^t_l}}$, to obtain the manifold gradient $\Omega_{\mathbf{W}^t_l}$. For the considered Stiefel manifold, the tangent space at $\mathbf{W}^t_l$ is defined as $T_{\mathbf{W}^t_l}{\cal{M}}_l = \{ \mathbf{Z} \in \mathbb{R}^{n_l\times n_{l-1}} | \mathbf{W}_l^{t \top}\mathbf{Z} + \mathbf{Z}^{\top}\mathbf{W}^t_l = 0 \}$, and the projection operator is defined as ${\cal{P}}_{\mathbf{W}^t_l}\frac{\partial{\cal{L}}}{\partial{\mathbf{W}^t_l}} = \left(\mathbf{I} - \mathbf{W}^t_l\mathbf{W}_l^{t\top}\right)\frac{\partial{\cal{L}}}{\partial{\mathbf{W}^t_l}} + \frac{1}{2}\mathbf{W}^t_l\left(\mathbf{W}_l^{t\top}\frac{\partial{\cal{L}}}{\partial{\mathbf{W}^t_l}} - \frac{\partial{\cal{L}}}{\partial{\mathbf{W}^t_l}}^{\top}\mathbf{W}^t_l\right)$, where $\mathbf{I}$ is the identity matrix of compatible size.

\item Update $\mathbf{W}^t_l$ as $\mathbf{W}^t_l - \eta^t \Omega_{\mathbf{W}^t_l}$ with the step size $\eta^t$ that satisfies conditions of convergence \cite{MatrixManifoldOptimBook,SGDRiemannianManifold}.

\item Perform the retraction ${\cal{R}}_{\mathbf{W}^t_l}(-\eta^t \Omega_{\mathbf{W}^t_l})$ that defines a mapping from the tangent space to the Stiefel manifold, and update $\mathbf{W}^t_l$ as $\mathbf{W}^{t+1}_l = {\cal{R}}_{\mathbf{W}^t_l}(-\eta^t \Omega_{\mathbf{W}^t_l})$. The retraction is achieved by ${\cal{R}}_{\mathbf{W}^t_l}(-\eta^t \Omega_{\mathbf{W}^t_l}) = {\cal{Q}}(\mathbf{W}^t_l - \eta^t \Omega_{\mathbf{W}^t_l})$, where the operator ${\cal{Q}}$ denotes the Q factor of the QR matrix decomposition. QR decomposition can be computed using Gram-Schmidt orthonormalization.
\end{enumerate}

%

\section{Proof of Lemma 4.1}
\label{ApendixSecBBNProof}


\begin{proof} We first consider the general case, and let $P = \min(M, N)$. Denote singular values of $\mathbf{W}$ as $\sigma_1 = \cdots = \sigma_P = 1$, and singular values of $\widetilde{\mathbf{W}}$ as $\tilde{\sigma}_1 \ge \cdots \ge \tilde{\sigma}_P$. Based on the properties of matrix extreme singular values, we have
\begin{eqnarray}
\sigma_1 = \|\mathbf{W}\|_2 = \max_{\mathbf{x}\ne 0}\frac{\|\mathbf{W}\mathbf{x}\|_2}{\|\mathbf{x}\|_2} = \min_{\mathbf{x}\ne 0}\frac{\|\mathbf{W}\mathbf{x}\|_2}{\|\mathbf{x}\|_2} = \sigma_P = 1 . \nonumber
\end{eqnarray}
Let $\mathbf{x}^{*} = \arg\max_{\mathbf{x}\ne 0} \frac{\|\widetilde{\mathbf{W}}\mathbf{x}\|_2}{\|\mathbf{x}\|_2}$, we have
\begin{eqnarray}
\tilde{\sigma}_1 = \frac{\|\widetilde{\mathbf{W}}\mathbf{x}^{*}\|_2}{\|\mathbf{x}^{*}\|_2} = \frac{\|\mathbf{G}\mathbf{W}\mathbf{x}^{*}\|_2}{\|\mathbf{x}^{*}\|_2} \le \frac{\|\mathbf{G}\|_2 \|\mathbf{W}\mathbf{x}^{*}\|_2}{\|\mathbf{x}^{*}\|_2} , \nonumber
\end{eqnarray}
where we have used the fact that $\|\mathbf{A}\mathbf{b}\|_2 \le \|\mathbf{A}\|_2\|\mathbf{b}\|_2$ for any $\mathbf{A} \in \mathbb{R}^{m\times n}$ and $\mathbf{b} \in \mathbb{R}^n$. We thus have
\begin{eqnarray}
\tilde{\sigma}_1 \le \|\mathbf{G}\|_2 \frac{\|\mathbf{W}\mathbf{x}^{*}\|_2}{\|\mathbf{x}^{*}\|_2} \le \|\mathbf{G}\|_2 \max_{\mathbf{x}\ne 0} \frac{\|\mathbf{W}\mathbf{x}\|_2}{\|\mathbf{x}\|_2} = |g_{\max}| . \nonumber
\end{eqnarray}
Since $\mathbf{G}$ has nonzero entries, we have $\mathbf{W} = \mathbf{G}^{-1}\widetilde{\mathbf{G}}$. Let $\mathbf{x}^{*} = \arg\min_{\mathbf{x}\ne 0}\frac{\|\widetilde{\mathbf{W}}\mathbf{x}\|_2}{\|\mathbf{x}\|_2}$, the properties of matrix extreme singular values give $\tilde{\sigma}_P = \frac{\|\widetilde{\mathbf{G}}\mathbf{x}^{*}\|_2}{\|\mathbf{x}^{*}\|_2}$, and $\sigma_P = \min_{\mathbf{x}\ne 0}\frac{\|\mathbf{W}\mathbf{x}\|_2}{\|\mathbf{x}\|_2} = 1$. We thus have
\begin{eqnarray}
1 = \min_{\mathbf{x}\ne 0}\frac{\|\mathbf{G}^{-1}\widetilde{\mathbf{G}}\mathbf{x}\|_2}{\|\mathbf{x}\|_2} \le \frac{\|\mathbf{G}^{-1}\widetilde{\mathbf{G}}\mathbf{x}^{*}\|_2}{\|\mathbf{x}^{*}\|_2} \le \|\mathbf{G}^{-1}\|_2 \frac{\|\widetilde{\mathbf{G}}\mathbf{x}^{*}\|_2}{\|\mathbf{x}^{*}\|_2} , \nonumber
\end{eqnarray}
which gives $\tilde{\sigma}_P \ge |g_{\min}|$. Overall, we have
\begin{eqnarray}
|g_{\max}| \ge \tilde{\sigma}_1 \ge \cdots \ge \tilde{\sigma}_P \ge |g_{\min}| . \nonumber
\end{eqnarray}

We next consider the special case of $M \le N$ and $\textrm{rank}(\mathbf{W}) = M$. Without loss of generality, we assume diagonal entries $\{ g_i \}_{i=1}^M$ of $\mathbf{G}$ are all positive and ordered. By definition we have $\widetilde{\mathbf{W}} = \mathbf{I}\mathbf{G}\mathbf{W}$, where $\mathbf{I}$ is an identity matrix of size $M\times M$. Let $\mathbf{V} = \left[ \mathbf{W}^{\top}, \mathbf{W}^{\bot\top} \right]$, where $\mathbf{W}^{\bot}$ denotes the orthogonal complement of $\mathbf{W}$, we thus have the SVD of $\widetilde{\mathbf{W}}$ by construction as $\widetilde{\mathbf{W}} = \mathbf{I} \left[ \mathbf{G}, \mathbf{0} \right] \mathbf{V}^{\top}$. When some values of $\{ g_i \}_{i=1}^M$ are not positive, the SVD can be constructed by changing the signs of the corresponding columns of either $\mathbf{I}$ or $\mathbf{V}$. Since matrix singular values are uniquely determined (while singular vectors are not), singular values of $\widetilde{\mathbf{W}}$ are thus exactly $\{ |g_i| \}_{i=1}^M$.
\end{proof}

%
%

\end{document}